\newtheorem{assumption}{Assumption}
\newtheorem{lemma}{Lemma}
\newtheorem{theorem}{Theorem}
\newtheorem{corollary}{Corollary}
\definecolor{darkyellow}{rgb}{1.0, 0.8, 0.5}
\definecolor{darkgreen}{rgb}{0.0, 0.5, 0.0}
\def \DefinedAs {\mathrel{\mathop:}=}
\def \Tr {\top}
\def \reals    {{\mathbb R}}
\def \E {\mathbb{E}}
\def \subjectto {\operatornamewithlimits{s.t.}}
\def \max {\operatornamewithlimits{max}}
\def \min {\operatornamewithlimits{min}}
\def \argmax {\operatornamewithlimits{argmax}}
\def \argmin {\operatornamewithlimits{argmin}}
\def \ccalP {{\ensuremath{\mathcal P}}}
\def \ccalV {{\ensuremath{\mathcal V}}}
\newcommand{\reddashedline}{%
  \raisebox{0.5ex}{\textcolor{red}{\rule{1.5mm}{1.0pt}}}%
  \hspace{0.5mm}%
  \raisebox{0.5ex}{\textcolor{red}{\rule{1.5mm}{1.0pt}}}%
}
\newcommand{\blackline}{%
  \raisebox{0.5ex}{{\rule{0.35cm}{1.0pt}}}%
}
\newcommand{\yellowline}{%
  \raisebox{0.5ex}{\textcolor{darkyellow}{\rule{0.35cm}{1.0pt}}}%
}
\newcommand{\yellowdashedline}{%
  \raisebox{0.5ex}{\textcolor{darkyellow}{\rule{1.5mm}{1.0pt}}}%
  \hspace{0.5mm}%
  \raisebox{0.5ex}{\textcolor{darkyellow}{\rule{1.5mm}{1.0pt}}}%
}
\newcommand{\bluedotted}{%
  \raisebox{0.5ex}{\textcolor{blue}{\rule{0.4mm}{1.0pt}}}%
  \hspace{0.5mm}%
  \raisebox{0.5ex}{\textcolor{blue}{\rule{0.4mm}{1.0pt}}}%
  \hspace{0.5mm}%
  \raisebox{0.5ex}{\textcolor{blue}{\rule{0.4mm}{1.0pt}}}%
}
\newcommand{\greenmixed}{%
  \raisebox{0.5ex}{\textcolor{darkgreen}{\rule{1.5mm}{1.0pt}}}%
  \hspace{0.5mm}%
  \raisebox{0.5ex}{\textcolor{darkgreen}{\rule{0.4mm}{1.0pt}}}%
  \hspace{0.5mm}%
  \raisebox{0.5ex}{\textcolor{darkgreen}{\rule{1.5mm}{1.0pt}}}%
}
\title{\bf Deterministic Policy Gradient Primal-Dual Methods \\ for Continuous-Space Constrained MDPs}
\author{
	Sergio~Rozada \footnote{S.\ Rozada, and A.\ G.\ Marques
    are with the Department of Signal Theory and Communications, King Juan Carlos University, Madrid, Spain.}~~\footnotemark[3]\quad Dongsheng~Ding \footnote{D.\ Ding, and A.\ Ribeiro are with the Department of Electrical and Systems Engineering, University of Pennsylvania, Philadelphia, United States.}~~\footnote{Correspondence to: s.rozada.2019@alumnos.urjc.es and dongshed@seas.upenn.edu} \quad Antonio~G.~Marques \footnotemark[1] \quad Alejandro~Ribeiro \footnotemark[2]
}
\date{}
\begin{document}

\maketitle

\begin{abstract}
We study the problem of computing deterministic optimal policies for constrained Markov decision processes (MDPs) with continuous state and action spaces, which are widely encountered in constrained dynamical systems. Designing deterministic policy gradient methods in continuous state and action spaces is particularly challenging due to the lack of enumerable state-action pairs and the adoption of deterministic policies, hindering the application of existing policy gradient methods. To this end, we develop a deterministic policy gradient primal-dual method to find an optimal deterministic policy with non-asymptotic convergence. Specifically, we leverage regularization of the Lagrangian of the constrained MDP to propose a deterministic policy gradient primal-dual (D-PGPD) algorithm that updates the deterministic policy via a quadratic-regularized gradient ascent step and the dual variable via a quadratic-regularized gradient descent step. We prove that the primal-dual iterates of D-PGPD converge at a sub-linear rate to an optimal regularized primal-dual pair. We instantiate D-PGPD with function approximation and prove that the primal-dual iterates of D-PGPD converge at a sub-linear rate to an optimal regularized primal-dual pair, up to a function approximation error. Furthermore, we demonstrate the effectiveness of our method in two continuous control problems: robot navigation and fluid control. This appears to be the first work that proposes a deterministic policy search method for continuous-space constrained MDPs.
\end{abstract}

\section{Introduction}

Constrained Markov decision processes (MDPs) are a standard framework for incorporating system specifications into dynamical systems \cite{altman2021constrained, brunke2022safe}. In recent years, constrained MDPs have attracted significant attention in constrained Reinforcement Learning (RL), whose goal is to derive optimal control policies through interaction with unknown dynamical systems \cite{achiam2017constrained, tessler2018reward}. Policy gradient-based constrained learning methods have become the workhorse driving recent successes across various disciplines, e.g., navigation \cite{paternain2022safe}, video compression \cite{mandhane2022muzero}, and finance \cite{chow2018risk}.

This paper is motivated by two observations. First, continuous state-action spaces are pervasive in dynamical systems, yet most methods in constrained RL are designed for discrete state and/or action spaces \cite{borkar2005actor,efroni2020exploration,ding2022convergence,singh2022learning}. Second, the literature on constrained RL largely focuses on stochastic policies. However, randomly taking actions by following a stochastic policy is often prohibitive in practice, especially in safety-critical domains~\cite{sehnke2010parameter,li2022continuous, gao2023improved}. Deterministic policies alleviate such concerns, but (i) they might lead to sub-optimal solutions \cite{ross1989randomized, altman2021constrained}; and (ii) computing them is NP-complete \cite{feinberg2000constrained, dolgov2005stationary}. Nevertheless, there is a rich body of constrained control literature that studies problems where optimal policies are  deterministic \cite{posa2016optimization, tsiamis2020risk, zhao2021primal, ma2022alternating}. Viewing this gap, we study the problem of finding optimal \emph{deterministic} policies for constrained MDPs with \emph{continuous} state-action spaces.

A key consideration of this paper is the fact that deterministic policies are sub-optimal in finite state-action spaces, but sufficient for constrained MDPs with continuous state-action spaces \cite{feinberg2002nonatomic, feinberg2019sufficiency}. This enables our formulation of a constrained RL problem with deterministic policies. To develop a tractable deterministic policy search method, we introduce a regularized Lagrangian approach that leverages proximal optimization methods. Moreover, we use function approximation to ensure scalability in continuous state-action spaces. Our main contribution is four-fold.

\begin{enumerate}
    \item[(i)] We introduce a deterministic policy constrained RL problem for a constrained MDP with continuous state-action spaces and prove that the problem exhibits zero duality gap, despite being constrained to deterministic policies.
    \item[(ii)] We propose a regularized deterministic policy gradient primal-dual (D-PGPD) algorithm that updates the primal policy via a proximal-point-type step and the dual variable via a gradient descent step, and we prove that the primal-dual iterates of D-PGPD converge to a set of regularized optimal primal-dual pairs at a sub-linear rate.
    \item[(iii)] We propose an approximation for D-PGPD by including function approximation. We prove that the primal-dual iterates of the approximated D-PGPD converge at a sub-linear rate, up to a function approximation error.
    \item[(iv)] We demonstrate that D-PGPD addresses the classical constrained navigation problem involving several types of cost functions and constraints. We show that D-PGPD can solve non-linear fluid control problems under constraints.
\end{enumerate}

\noindent \textbf{Related work.} 
Deterministic policy search has been studied in the context of unconstrained MDPs \cite{silver2014deterministic, lillicrap2015continuous, kumar2020zeroth, lan2022policy}. In constrained setups, however, deterministic policies have been largely restricted to occupancy measure optimization in finite state-action spaces \cite{dolgov2005stationary} or are embedded in hyper-policies~\cite{sehnke2010parameter,montenegro2024learning, montenegro2024last}. This work extends deterministic policy search to constrained MDPs with continuous state-action spaces, overcoming two main roadblocks: the sub-optimality of deterministic policies and the NP-completeness of computing them \cite{ross1989randomized, feinberg2000constrained, dolgov2005stationary, altman2021constrained, mcmahan2024deterministic}.  First, we show that deterministic policies are sufficient for constrained MDPs in continuous state-action spaces \cite{feinberg2002nonatomic, feinberg2019sufficiency}, leveraging the convexity of the value image to establish strong duality in the deterministic policy space. Second, we overcome computational intractability by introducing a quadratic regularization of the reward function and proposing a regularization-based primal-dual algorithm. This algorithm exploits the structure of value functions and achieves last-iterate convergence to an optimal deterministic policy. While last-iterate convergence of primal-dual algorithms has been explored in constrained RL \cite{moskovitz2023reload, ding2024last, ding2024resilient}, existing methods focus on stochastic policies and finite-action spaces. In control, extensive work addresses deterministic policies in constrained setups with continuous state-action spaces \cite{scokaert1998constrained, lim1999stochastic}. However, these approaches are typically model-based and tailored to specific structured problems \cite{posa2016optimization, tsiamis2020risk, zhao2021infinite, zhao2021primal, ma2022alternating}. Bridging constrained control and RL has also been explored \cite{kakade2020information, zahavy2021reward, li2023double}, but these methods remain model-based and focus on stochastic policies. In contrast, we propose a model-free deterministic policy search method for constrained MDPs with continuous state-action spaces.

\section{Preliminaries}

We consider a discounted constrained MDP, denoted by the tuple $(S, A, p, r, u, b, \gamma, \rho)$. Here, $S \subseteq \reals^{d_s}$ and $A\subseteq \reals^{d_a}$ are continuous state-action spaces with dimensions $d_s$ and $d_a$, and bounded actions $\| a \| \leq A_{\text{max}}$ for all $a \in A$; $p(\cdot\,|\,s, a)$ is a probability measure over $S$ parametrized by the state-action pairs $(s, a) \in S \times A$; $r$, $u$: $S \times A \mapsto [0, 1]$ are reward/utility functions; $b$ is a constraint threshold; $\gamma \in [0,1)$ is a discount factor; and $\rho$ is a probability measure that specifies an initial state. We consider the set of all deterministic policies $\Pi$ in which a policy $\pi$: $S \mapsto A$ maps states to actions. The transition $p$, the initial state distribution $\rho$, and the policy $\pi$ define a distribution over trajectories $\{s_t, a_t, r_t, u_t\}_{t=0}^\infty$, where $s_0 \sim \rho$, $a_t = \pi(s_t)$, $r_t=r(s_t, a_t)$, $u_t=u(s_t, a_t)$ and $s_{t+1} \sim p(\cdot \,|\, s_t, a_t)$. Given $\pi$, we define the value function $V^\pi_r$: $S \to \reals$ as the expected sum of discounted rewards
\begin{align}
    \nonumber
    V_r^\pi (s) 
    \; &\DefinedAs \;  
    \E_\pi \left[ \sum_{t=0}^\infty \gamma^t r(s_t, a_t) \; | \; s_0 = s \right].
\end{align}
For the utility function, we define the corresponding value function $V^\pi_u$. Their expected values over the initial state distribution $\rho$ are denoted as $V_r (\pi) \DefinedAs \E_\rho [V_r^\pi (s)]$ and $V_u (\pi) \DefinedAs \E_\rho [V_u^\pi (s)]$, where we drop the dependence on $\rho$ for simplicity of notation. Boundedness of $r$ and $u$ leads to $V_r(\pi)$, $ V_u(\pi) \in [ 0, 1/(1 - \gamma) ]$. We introduce a discounted state visitation distribution $d_{s_0}^\pi(B) \DefinedAs (1-\gamma) \sum_{t\,=\,0}^\infty \text{Pr}(s_t\in B\,\vert\,\pi,s_0)$ for any $B\subseteq S$ and define $d_\rho^\pi (s) \DefinedAs \E_{s_0 \,\sim\, \rho} [d_{s_0}^\pi(s)]$. For the reward function $r$, we define the state-action value function $Q^\pi_r$: $S \times A \to \reals$ given an initial action $a$ while following  $\pi$,
\begin{align}
    \nonumber
    Q_r^\pi (s, a) 
    \; \DefinedAs \;
    \E_\pi \left[ \sum_{t=0}^\infty \gamma^t r(s_t, a_t) | s_0 = s, a_0=a \right].
\end{align}
We let the associated advantage function $A^\pi_r$: $S \times A \to \reals$ be $A_r^\pi (s, a) \; \DefinedAs \; Q_r^\pi (s, a) - V_r^\pi (s)$. Similarly, we define $Q^\pi_u$: $S \times A \to \reals$ and $A^\pi_u$: $S \times A \to \reals$ for the utility function $u$.

A policy is optimal for a given reward function when it maximizes the corresponding value function. However, the value functions $V_r(\pi)$ and $V_u(\pi)$ are usually in conflict, e.g., a policy that maximizes $V_r(\pi)$ is not necessary good for $V_u(\pi)$. To trade off two conflicting objectives, constrained MDP aims to find an optimal policy $\pi^\star$ that maximizes the reward value function $V_r(\pi)$ subject to an inequality constraint on the utility value function $V_u(\pi) \geq b$, where we assume $b \in (0, 1/(1 - \gamma)]$ to avoid trivial solutions. We use a single constraint for the sake of simplicity, but our results extend to problems with multiple constraints. We translate the constraint $V_u(\pi) \geq b$ into the constraint $V_g(\pi) \geq 0$ for $g \DefinedAs u - (1-\gamma)b$, where $g$: $S \times A \mapsto [-1, 1]$ denotes the translated utility. This leads to the following problem
\begin{equation}
    \label{eq::P-CRL}
    \begin{array}{rl}
    \displaystyle\max_{\pi\, \in\, \Pi}
    & \;\; V_{r} (\pi) 
    \\[0.2cm]
    \subjectto & \;\; V_g (\pi) 
    \;\geq\; 0 .
    \end{array}
\end{equation}

Restricting Problem~\eqref{eq::P-CRL} to deterministic policies poses several challenges. Deterministic policies can be sub-optimal in constrained MDPs with finite state-action spaces \cite{ross1989randomized, altman2021constrained}, and when they exist, finding them is a NP-complete \cite{feinberg2000constrained}. 
Problem \eqref{eq::P-CRL} is non-convex in the policy but can be reformulated as a linear program using occupancy measures with stochastic policies \cite{paternain2019constrained}.
However, the occupancy measure representation of \eqref{eq::P-CRL} is a \emph{non-linear} and \emph{non-convex} problem when only deterministic policies are considered \cite{dolgov2005stationary}. Finally, multiple policies can achieve the optimal value function $V^{\pi^\star}_P$ while satisfying the constraint. We denote the set of all maximizers of \eqref{eq::P-CRL} that attain $V^{\pi^\star}_P$ as $\Pi^\star$.
To address these points, we observe that deterministic policies are sufficient in constrained MDPs with continuous state-action spaces under the following assumption \cite{feinberg2002nonatomic, feinberg2019sufficiency}.

\begin{assumption}[Non-atomicity]
    \label{as::non-atomicity}
    The MDP is non-atomic, i.e., $\rho(s)=0$ and $p(s'\,|\,s,a)=0$ for all $s$, $s' \in S$ and $a \in A$.
\end{assumption}

Assumption~\ref{as::non-atomicity} is mild in practice. Since stochastic perturbations are common in physical systems with continuous state and action spaces \cite{anderson2007optimal}, the probability measures $\rho$ and $p(\cdot\,\vert\,s,a)$ are normally atomless, i.e., for any measurable set $B \subseteq S$ with probability measures $\rho(B)$ and $p(B\,|\,s,a)$, there exists a measurable subset $B' \subset B$ that has smaller non-zero probability measures $\rho(B) > \rho(B') > 0$ and $p(B\,|\,s,a) > p(B'\,|\,s,a) > 0$ for any $s \in S$ and $a \in A$. In other words, the transition probability and the initial probability do not concentrate in a single state \cite{feinberg2019sufficiency}. When a constrained MDP is non-atomic, only considering deterministic policies is sufficient \cite{feinberg2019sufficiency}. Specifically, let $V(\pi) \DefinedAs [\,V_r(\pi)\; V_g(\pi)\,]^\Tr$ denote the vector of value functions for a given policy $\pi$. We define a \emph{deterministic value image} $\ccalV_D \DefinedAs \{ V(\pi) \,\vert\, \pi \in \Pi \}$, which is a set of all attainable vector value functions for deterministic policies. We denote by $\ccalV_T$ a \emph{value image} for all policies. The deterministic value image $\ccalV_D$ and the value image $\ccalV_T$ are equivalent under Assumption \ref{as::non-atomicity} for discounted MDPs (see Lemmas \ref{lm::unif_mdp_lemma} and \ref{lm::convex_span_lemma}  in Appendix \ref{app::supporting_lemmas}). Therefore, the optimal value function of a non-atomic constrained MDP is contained in the deterministic value image $\ccalV_D$. Furthermore, the deterministic value image $\ccalV_D$ is a convex set, even though each value function $V(\pi) \in \ccalV_D$ is non-convex in policy $\pi$ (see Lemmas \ref{lm::convex_span_lemma} and \ref{lm::sufficiency_deterministic_lemma} in Appendix \ref{app::supporting_lemmas}). These  observations are summarized below.

\begin{lemma}[Sufficiency of deterministic policies]
    \label{lm::sufficiency_deterministic}
    For a non-atomic discounted MDP, the deterministic value image $\ccalV_D$ is convex, and equals the value image $\ccalV_T$, i.e., $\ccalV_D = \ccalV_T$.
\end{lemma}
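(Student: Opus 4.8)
The plan is to leave policy space and work in the space of discounted occupation measures, where the two value functions become linear functionals. Concretely, for a (possibly stochastic) policy $\pi$ I would introduce the state-action occupation measure $\mu^\pi(ds,da) \DefinedAs d_\rho^\pi(ds)\,\pi(da\,|\,s)$, which is a probability measure since $d_\rho^\pi$ is normalized. Then $V_r(\pi) = \tfrac{1}{1-\gamma}\int r\,d\mu^\pi$ and $V_g(\pi) = \tfrac{1}{1-\gamma}\int g\,d\mu^\pi$, so the value vector $V(\pi) = [\,V_r(\pi)\;\;V_g(\pi)\,]^\Tr$ is the image of $\mu^\pi$ under the \emph{fixed linear} map $\mu \mapsto \tfrac{1}{1-\gamma}[\int r\,d\mu,\ \int g\,d\mu]^\Tr$. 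The set $\mathcal{M}_T$ of occupation measures realizable by arbitrary policies is convex: it is precisely the set of measures satisfying the linear Bellman-flow constraints, and equivalently, randomizing at $t=0$ over which of two policies to run yields exactly the convex combination of their occupation measures. Since $\ccalV_T$ is the image of the convex set $\mathcal{M}_T$ under a linear map, $\ccalV_T$ is convex. This disposes of the convexity claim for $\ccalV_T$ and reduces everything else to the equality $\ccalV_D = \ccalV_T$.

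One inclusion, $\ccalV_D \subseteq \ccalV_T$, is immediate because every deterministic policy is in particular a policy. The substantive step is the reverse inclusion $\ccalV_T \subseteq \ccalV_D$: given any policy $\pi$, I must exhibit a deterministic policy $\pi'$ with $V(\pi') = V(\pi)$. This is exactly where Assumption~\ref{as::non-atomicity} is used. Because $\rho$ and each transition kernel $p(\cdot\,|\,s,a)$ are atomless, the state-marginal of any occupation measure is non-atomic, which opens the door to a Lyapunov/Dvoretzky–Wald–Wolfowitz purification: on a non-atomic measure space, the finite-dimensional vector of integrals produced by a randomized selection can be reproduced by a \emph{pure} (deterministic) selection. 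Applied to the two-dimensional criterion $(r,g)$, this should let me replace the randomized kernel $\pi(\cdot\,|\,s)$ by a measurable map $\pi'(s)\in A$ that preserves both value integrals, giving $V(\pi')=V(\pi)$ with $\pi'\in\Pi$ and hence $\ccalV_T\subseteq\ccalV_D$. Once equality holds, convexity of $\ccalV_D$ is inherited for free from the convexity of $\ccalV_T$ established above.

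The main obstacle is precisely this reverse inclusion, because the naive purification is circular: altering the action distribution at each state also alters the downstream transitions, and therefore the occupation measure $\mu^{\pi'}$ itself, so purifying $\pi(\cdot\,|\,s)$ pointwise does not obviously preserve $d_\rho^{\pi'}$ or the integrals that define $V(\pi)$. Making the argument rigorous is what the cited supporting lemmas are for: I would first reduce the discounted chain to an equivalent uniformized model (matching Lemma~\ref{lm::unif_mdp_lemma}) so that the relevant occupation data can be treated as the range of a \emph{single} non-atomic vector measure rather than as a fixed-point of coupled flow constraints, and only then invoke Lyapunov convexity to purify (matching the convex-span and sufficiency lemmas). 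The two technical points I would be most careful about are (i) checking that the purified action selection is jointly measurable and defines a genuine stationary deterministic policy, and (ii) confirming that the reduction preserves the pair $(V_r,V_g)$ exactly, so that the matched value vector transfers back to the original discounted problem.
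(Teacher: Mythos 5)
Your proposal is correct and takes essentially the same route as the paper, whose own proof consists of reducing the discounted MDP to a uniformly absorbing one (Lemma~\ref{lm::unif_mdp_lemma}) and then delegating both the convexity of $\ccalV_D$ and the equality $\ccalV_D=\ccalV_T$ to the Lyapunov/Dvoretzky--Wald--Wolfowitz purification results of \citet{feinberg2019sufficiency} (Corollary~3.10 and Theorem~3.8) --- precisely the machinery you invoke, including your correct identification of the circularity obstacle that the uniformization step is there to resolve. The only cosmetic difference is that you prove convexity of $\ccalV_T$ first via occupation measures and transfer it to $\ccalV_D$ through the equality, whereas the paper cites convexity of $\ccalV_D$ directly.
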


\subsection{Zero Duality Gap}
\label{ss::problem}

With the convexity of the deterministic value image $\ccalV_D$ in hand, we next establish zero duality gap for Problem~\eqref{eq::P-CRL}. We begin with a standard feasibility assumption. 

\begin{assumption}[Feasibility]
    \label{as::feasibility}
    There exists a deterministic policy $\Tilde{\pi} \in \Pi$ and $\xi > 0$ such that $V_g (\Tilde{\pi}) \geq \xi$.
\end{assumption}

We dualize the constraint by introducing the dual variable $\lambda \in \mathbb{R}^+$ and the Lagrangian $L(\pi, \lambda) \DefinedAs V_r(\pi) + \lambda V_g(\pi)$. For a fixed $\lambda$, let $\Pi(\lambda)$ be the set of Lagrangian maximizers. The Lagrangian $L(\pi, \lambda)$ is equivalent to the value function $V_\lambda(\pi)$ associated with the combined reward/utility function $r_\lambda(s,a) = r(s,a) + \lambda g(s,a)$. The dual function $D(\lambda) \DefinedAs \max_{\pi \in \Pi} V_\lambda(\pi)$ is an upper bound of Problem \eqref{eq::P-CRL}, and the dual problem searches for the tightest primal upper bound
\begin{align}
    \label{eq::D-CRL}
    \min_{\lambda \,\in\, \reals^+} \; D(\lambda).
\end{align}

We denote by $V_D^{\lambda^\star}$ the optimal value of the dual function, where $\lambda^\star$ is a minimizer of the dual Problem~\eqref{eq::D-CRL}. Despite being non-convex in the policy, if we replace the deterministic policy space in Problem~\eqref{eq::P-CRL} with the stochastic policy space, then it is known that Problem~\eqref{eq::P-CRL} has zero duality gap \cite{paternain2019constrained}. The proof capitalizes on the convexity of the occupancy measure representation of \eqref{eq::P-CRL} for stochastic policies. However, this occupancy-measure-based argument does not carry to deterministic policies, since the occupancy measure representation of Problem \eqref{eq::P-CRL} is non-convex when only deterministic policies are used \cite{dolgov2005stationary}. Instead, we leverage the convexity of the deterministic value image $\ccalV_D$ to prove that strong duality holds for Problem~\eqref{eq::P-CRL}; see Appendices \ref{app::convexity_value_image} and \ref{app::zero_duality_gap} for more details and the proof.

\begin{theorem}[Zero duality gap]
    \label{thm::zero_duality_gap}
    Let Assumption \ref{as::non-atomicity} hold. Then, Problem \eqref{eq::P-CRL} has zero duality gap, i.e., $V^{\pi^\star}_P = V^{\lambda^\star}_D$.
\end{theorem}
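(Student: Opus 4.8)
The plan is to give a geometric, separating-hyperplane argument built on the convexity of the deterministic value image $\ccalV_D$ established in Lemma~\ref{lm::sufficiency_deterministic}; this convexity is precisely what replaces the occupancy-measure convexity that fails for deterministic policies. First I would dispose of the easy direction, weak duality $V^{\pi^\star}_P \le V^{\lambda^\star}_D$: for any feasible $\pi$ (i.e.\ $V_g(\pi)\ge 0$) and any $\lambda \ge 0$ one has $V_r(\pi) \le V_r(\pi) + \lambda V_g(\pi) \le D(\lambda)$, so maximizing over feasible $\pi$ and then minimizing over $\lambda$ yields the inequality. It then remains to prove the reverse inequality $V^{\pi^\star}_P \ge V^{\lambda^\star}_D$.

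For strong duality I would work in the plane of value vectors. Writing a generic point of $\ccalV_D$ as $(V_g(\pi), V_r(\pi))$, define the down-set $\mathcal{S} \DefinedAs \ccalV_D - \reals_{+}^{2}$, which is convex as the Minkowski sum of the convex set $\ccalV_D$ (Lemma~\ref{lm::sufficiency_deterministic}) with the convex cone $-\reals_{+}^{2}$. By construction $V^{\pi^\star}_P = \sup\{\eta : (0,\eta) \in \mathcal{S}\}$, so the point $(0, V^{\pi^\star}_P)$ lies on the boundary of the closure of $\mathcal{S}$; boundedness of the value functions, with $V_r(\pi) \in [0, 1/(1-\gamma)]$ and $V_g(\pi) \in [-1/(1-\gamma), 1/(1-\gamma)]$, guarantees that $\mathcal{S}$ is nonempty and that its closure is a proper convex set. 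Applying the supporting hyperplane theorem at $(0, V^{\pi^\star}_P)$ then produces a nonzero vector $(\mu,\nu)$ with $\mu \zeta + \nu \eta \le \nu\, V^{\pi^\star}_P$ for every $(\zeta,\eta) \in \mathcal{S}$.

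Next I would pin down the signs in the hyperplane. Because $\mathcal{S}$ is closed under decreasing either coordinate, letting $\zeta \to -\infty$ or $\eta \to -\infty$ inside $\mathcal{S}$ forces $\mu \ge 0$ and $\nu \ge 0$. The crucial step is to exclude the degenerate vertical hyperplane $\nu = 0$: this is exactly where Assumption~\ref{as::feasibility} enters, since the strictly feasible $\tilde{\pi}$ yields a point $(V_g(\tilde{\pi}), V_r(\tilde{\pi})) \in \mathcal{S}$ with $V_g(\tilde{\pi}) \ge \xi > 0$, so $\nu = 0$ would force $\mu\, V_g(\tilde{\pi}) \le 0$ and contradict $\mu > 0$. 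Hence $\nu > 0$, and setting $\lambda \DefinedAs \mu/\nu \ge 0$ and restricting the supporting inequality to $\ccalV_D \subseteq \mathcal{S}$ gives $V_r(\pi) + \lambda V_g(\pi) \le V^{\pi^\star}_P$ for all $\pi \in \Pi$. Taking the supremum over $\pi$ yields $D(\lambda) \le V^{\pi^\star}_P$, whence $V^{\lambda^\star}_D \le D(\lambda) \le V^{\pi^\star}_P$; combined with weak duality, this gives $V^{\pi^\star}_P = V^{\lambda^\star}_D$.

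The main obstacle is the sign-and-normalization analysis of the separating hyperplane---in particular excluding the vertical case $\nu = 0$, which is precisely where strict feasibility is indispensable---together with the topological bookkeeping needed to legitimately invoke the supporting hyperplane theorem, namely passing to the closure of $\mathcal{S}$ and verifying that $(0, V^{\pi^\star}_P)$ is a boundary rather than an interior point. The convexity of $\ccalV_D$ from Lemma~\ref{lm::sufficiency_deterministic} does the heavy lifting here; without it the image set could be nonconvex, no such supporting hyperplane would exist, and the duality gap could be strictly positive.
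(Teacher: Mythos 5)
Your proof is correct, and it takes a genuinely different route from the paper's. The paper (Appendix \ref{app::zero_duality_gap}) argues through the perturbation function $P(\delta)$ of Problem \eqref{eq::P-CRL} and outsources the duality machinery to a cited black box, Lemma \ref{lm::fenchel_moreau} (Fenchel--Moreau): all it verifies is concavity of $P$, and it does so exactly where you invoke convexity of $\ccalV_D$ --- given maximizers $\pi, \pi'$ for perturbations $\delta, \delta'$, convexity of the deterministic value image yields a $\pi_\alpha \in \Pi$ with $V(\pi_\alpha) = \alpha V(\pi) + (1-\alpha)V(\pi')$, which is feasible for $\delta_\alpha$ and certifies $P(\delta_\alpha) \ge \alpha P(\delta) + (1-\alpha)P(\delta')$. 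You instead unpack that machinery by hand: your down-set $\mathcal{S} = \ccalV_D - \reals_+^2$ is, up to closure, the hypograph of $P$, and your supporting-hyperplane argument with the sign and normalization analysis is in effect a self-contained proof of the Fenchel--Moreau step. Your route buys two things the paper's does not display: it is elementary (no appeal to \citet{auslender2006asymptotic}) and it explicitly constructs an optimal multiplier $\lambda = \mu/\nu$ satisfying $D(\lambda) \le V^{\pi^\star}_P$, making the role of strict feasibility transparent --- the Slater point with $V_g(\tilde{\pi}) \ge \xi > 0$ is precisely what excludes the vertical hyperplane $\nu = 0$, and your closure/boundary bookkeeping for $(0, V^{\pi^\star}_P)$ is handled correctly. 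The paper's route, in exchange, is shorter. One point worth noting: the theorem statement lists only Assumption \ref{as::non-atomicity}, yet both proofs need Assumption \ref{as::feasibility}; you invoke it explicitly, while the paper uses it implicitly through the Slater hypothesis of Lemma \ref{lm::fenchel_moreau}, so your usage is consistent with (indeed more honest than) the paper's, and both arguments rest on the same two load-bearing inputs: convexity of $\ccalV_D$ from Lemma \ref{lm::sufficiency_deterministic} and strict feasibility.
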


Theorem \ref{thm::zero_duality_gap} states that the optimal values of Problems \eqref{eq::P-CRL} and \eqref{eq::D-CRL} are equivalent, extending the zero duality gap result in \cite{paternain2019constrained} to deterministic policies under the non-atomicity assumption. However, recovering an optimal policy $\pi^\star$ can be non-trivial even if an optimal dual variable $\lambda^\star$ is obtained from the dual problem~\cite{zahavy2021reward}. The root cause is that the maximizers of the primal problem $\Pi^\star$ and those of the Lagrangian for an optimal multiplier $\Pi(\lambda^\star)$ are different sets \cite[Proposition 1]{calvo2023state}. To address this, we employ Theorem~\ref{thm::zero_duality_gap} to interpret Problem~\eqref{eq::P-CRL} as a saddle point problem. Zero duality gap implies that an optimal primal-dual pair $(\pi^\star, \lambda^\star)$ is a saddle point of the Lagrangian $L(\pi,\lambda)$, and satisfies the mini-max condition
\begin{equation}
\nonumber
    L(\pi,\lambda^\star)
    \; \leq \;
    L(\pi^\star,\lambda^\star)
    \; \leq \;
    L(\pi^\star,\lambda) \quad \forall (\pi,\lambda) \in \Pi \times \Lambda,
\end{equation}
where $\lambda$ is bounded in the interval $\Lambda \DefinedAs [0, \lambda_{\text{max}}]$, with $\lambda_{\text{max}} \DefinedAs 1/((1 - \gamma)\xi)$; see Lemma \ref{lm::bound_lambda} in Appendix \ref{app::supporting_lemmas}. In this paper, we refer to saddle points that satisfy the mini-max condition for all pairs $(\pi, \lambda) \in \Pi \times \Lambda$ as \emph{global} saddle points. Our main task in Section~\ref{ss::proposed_method} is to find a global saddle point of the Lagrangian $L(\pi,\lambda)$ that is a solution to Problem \eqref{eq::P-CRL}.
     
\subsection{Constrained Regulation Problem}
\label{ss::clq}

We illustrate Problem~\eqref{eq::P-CRL} using the following example
\begin{subequations} \label{eq::C-DS}
    \begin{align}
        \max_{\pi \in \Pi} 
        & \;\;  \E \left[ \sum_{t\,=\,0}^\infty  \gamma^t \left( s_t^\Tr G_1 s_t + a_t^\Tr R_1 a_t \right) \right] \label{eq::C-DS_obj} \\
        \subjectto & \;\; \E \left[ \sum_{t\,=\,0}^\infty  \gamma^t \left( s_t^\Tr G_2 s_t + a_t^\Tr R_2 a_t \right) \right] \geq b \label{eq::C-DS_a} \\
        & \;\; -b_s \leq C_s  s_t  \leq b_s, \;\; -b_a \leq C_a a_t \leq b_a  \label{eq::C-DS_b} \\
        & \;\; s_{t+1} = B_0 s_t + B_1 a_t + \omega_t, \;  s_0 \sim \rho \label{eq::C-DS_c}
    \end{align}
\end{subequations}
where $B_0 \in \reals^{d_s \times d_s}$ and $B_1 \in \reals^{d_s \times d_a}$ denote the system dynamics, $\omega_t$ is the standard Gaussian noise, $\rho$ is the initial state distribution, and $G_1$, $G_2 \in \reals^{d_s \times d_s}$ and $R_1$, $R_2 \in \reals^{d_a \times d_a}$ are negative semi-definite reward matrices. The constraint threshold is $b$, with $C_s \in \reals^{d_s \times d_s}$, $C_a \in \reals^{d_a \times d_a}$, $b_s \in \reals^{d_s}$, and $b_a \in \reals^{d_a}$ specifying state-action constraints, e.g., if $C_s$, $C_a$ are identity matrices, $b_s$, $b_a$ limit state and action ranges. Equations \eqref{eq::C-DS_obj}, \eqref{eq::C-DS_b}, and \eqref{eq::C-DS_c} describe the constrained regulation problem under Gaussian disturbances \cite{bemporad2002explicit, stathopoulos2016solving}, where the optimal policy is deterministic \cite{scokaert1998constrained}. We add a general constraint \eqref{eq::C-DS_a}. The Markovian transition dynamics \eqref{eq::C-DS_c} are linear, and the Gaussian noise $\omega_t$ is non-atomic, rendering the transition probabilities non-atomic. If $\rho$ is non-atomic, the underlying MDP of \eqref{eq::C-DS} is also non-atomic. The reward function $r(s, a) \DefinedAs s^\Tr G_1 s + a^\Tr R_1 a$ induces a value function $V_r(\pi)$, bounded within $[r_\text{min}/(1-\gamma), 0]$, with $r_\text{min} \DefinedAs b_s^\Tr G_1 b_s + b_a^\Tr R_1 b_a$. Similarly, for $u(s, a) \DefinedAs s^\Tr G_2 s + a^\Tr R_2 a$, the utility value $V_u$ is also bounded. Therefore, this problem is an instance of Problem \eqref{eq::P-CRL}, assuming the state space is bounded with $\|s\| \leq S_{\text{max}}$.

\section{Method and Theory}
\label{ss::proposed_method}

While our problem has zero duality gap, finding an optimal dual $\lambda^\star$ poses a significant challenge, due to the presence of multiple saddle points in the Lagrangian. To address it, we resort to the regularization method. More specifically, we introduce two regularizers. First, the term $h(\lambda) \DefinedAs \lambda^2$ promotes convexity in the Lagrange multiplier $\lambda$. Second, the term $h_a(a) \DefinedAs -\|a\|^2$ promotes concavity in the reward function $r$ by penalizing large actions selected by the policy $\pi$.
The associated value function is defined as $H^\pi(s) \DefinedAs \E_\pi \left[ \sum_{t=0}^\infty \gamma^t h_a(a_t) \,\vert\, s \right]$, and leads to the regularizer $H(\pi) \DefinedAs \E_\rho[H^\pi(s)]$. Now, we consider the problem
\begin{align}
    \label{eq::RD-CRL}
    \min_{\lambda \,\in\, \Lambda} \; \max_{\pi \,\in\, \Pi} \;\; L_\tau(\pi, \lambda) \DefinedAs V_\lambda(\pi) + \frac{\tau}{2} H(\pi) + \frac{\tau}{2} h(\lambda),
\end{align}
where $\tau \geq 0$ is the regularization parameter and $L_\tau(\pi, \lambda)$ is the regularized Lagrangian. For a fixed $\lambda$, the objective of Problem \eqref{eq::RD-CRL} is equivalent to an unconstrained regularized MDP plus a regularization of the dual variable. Consider the composite regularized reward function $r_{\lambda, \tau}(s, a) \DefinedAs r(s, a) + \lambda g(s, a) - \frac{\tau}{2} h_a(a)$. The value function associated with the reward function $r_{\lambda, \tau}$ can be expressed as $V_{\lambda, \tau} (\pi) = V_\lambda (\pi) + \frac{\tau}{2} H(\pi)$. Then, we can reformulate the regularized Lagrangian as $L_\tau(\pi, \lambda) \DefinedAs V_{\lambda, \tau}(\pi) + \frac{\tau}{2} \lambda^2$. The global saddle points of the regularized Lagrangian $\Pi^\star_\tau \times \Lambda^\star_\tau$ are guaranteed to exist; see Lemma \ref{lm::saddle_point} in Appendix \ref{app::proofs}. Moreover, a global saddle point $(\pi_\tau^\star, \lambda_\tau^\star)$ satisfies
\begin{equation}
    \label{eq::sandwich_property}
        V_{\lambda_\tau^\star}(\pi) + \frac{\tau}{2} H(\pi) \leq V_{\lambda_\tau^\star}(\pi_\tau^\star) \leq V_{\lambda}(\pi_\tau^\star) + \frac{\tau}{2} \lambda^2
    \end{equation}
for all $(\pi, \lambda) \in \Pi \times \Lambda$. Hence, $(\pi_\tau^\star, \lambda_\tau^\star)$ is also a global saddle point of the original Lagrangian $L(\pi, \lambda)$ up to two $\tau$-terms. 

\subsection{Deterministic Policy Search Method }

We propose a deterministic policy gradient primal-dual (D-PGPD) method for finding a global saddle point $(\pi^\star_\tau, \lambda^\star_\tau)$ of $L_\tau(\pi, \lambda)$. In the primal update, as is customary in RL, we maximize the advantage function rather than the value function directly. Specifically, we use the regularized advantage $A^\pi_{\lambda, \tau} (s, a) \DefinedAs Q^\pi_{\lambda, \tau} (s, a) - V^\pi_{\lambda, \tau} (s) - \frac{\tau}{2} (\|a\|^2 - \|\pi(s)\|^2)$ associated with the regularized reward $r_{\lambda, \tau}$. The primal update~\eqref{eq::primal_update} performs a proximal-point-type ascent step that solves a quadratic-regularized maximization sub-problem, while the dual update~\eqref{eq::dual_update} performs a gradient descent step that solves a quadratic-regularized minimization sub-problem
\begin{subequations}
    \label{eq::D-PGPD}
    \begin{align}
    \label{eq::primal_update}
    \!\!\!\!  \!\!
    \pi_{t+1}(s) &\,=\, \argmax_{a \,\in\, A} \; A^{\pi_t}_{\lambda_t, \tau} (s, a) 
     - 
    \frac{1}{2 \eta} \| a - \pi_t(s) \|^2  \\
    \label{eq::dual_update}
     \!\!\!\! \!\!
    \lambda_{t+1} &\,=\, \argmin_{\lambda \,\in\, \Lambda} \; \lambda \,(V_g(\pi_t) + \tau \lambda_t) 
     + 
    \frac{1}{2 \eta} \| \lambda - \lambda_t \|^2,
    \end{align}
\end{subequations}
where $\eta$ is the step-size. D-PGPD is a single-time-scale algorithm, in the sense that the primal and the dual updates are computed concurrently in the same time-step. We remark that implementing D-PGPD is difficult in practice, and to make it tractable, we will leverage function approximation in Section~\ref{ss::adpgpd}. Before proceeding, we show that the primal-dual iterates~\eqref{eq::D-PGPD} converge in the last iterate to the set of global saddle points of the regularized Lagrangian $\Pi^\star_\tau \times \Lambda^\star_\tau$.

\subsection{Non-Asymptotic Convergence}
\label{ss::proposed_method_convergence}

Finding deterministic optimal policies is a computationally challenging problem \cite{feinberg2000constrained, dolgov2005stationary}. To render the problem tractable, we assume concavity and Lipschitz continuity of the regularized action value functions.

\begin{assumption}[Concavity]
    \label{as::concavity}
    The regularized state-action value function $Q^\pi_{\lambda, \tau} (s, a) - \tau_0 \| \pi_0(s) - a \|^2$ is concave in action $a$ for any policy $\pi_0$ and some  $\tau_0 \in [0, \tau)$. 
\end{assumption}

\begin{assumption}[Lipschitz continuity]
    \label{as::lipschitz}
    The action-value functions $Q_r^\pi (s, a)$, $Q_g^\pi (s, a)$, and $H^\pi (s, a) \DefinedAs \E_\pi \left[ \sum_{t=0}^\infty \gamma^t h_a(a_t) \,\vert\, s_0=s, a_0=a \right]$ are Lipschitz in action $a$ with Lipschitz constants $L_r$, $L_g$, and $L_h$, i.e., 
    \begin{align}
        \nonumber
        \|Q_r^\pi (s, a) - Q_r^\pi (s, a')\| \;&\leq\;
        L_r \| a - a' \| \\
        \nonumber
        \|Q_g^\pi (s, a) - Q_g^\pi (s, a')\| \;&\leq\; 
        L_g \| a - a' \| \\
        \nonumber
        \|H^\pi (s, a) - H^\pi (s, a')\| \;&\leq\;
        L_h \| a - a' \|,\; \forall \text{ $a$, $a' \in A$. }
    \end{align}
\end{assumption}

Assumption~\ref{as::concavity} states that there exists  a $\tau_0$-strongly concave regularizer that renders $Q^\pi_{\lambda, \tau}$ concave in the action $a$. When $\tau_0=0$, $Q^\pi_{\lambda, \tau}$ is concave in the action $a$. An example of this is Problem \eqref{eq::C-DS}, where the original reward and utility functions are concave and the transition dynamics are linear, leading to concavity of the associated regularized value function. Assumption \ref{as::lipschitz} implies Lipschitz continuity of the reward function and the probability transition kernel, which holds for several dynamics that can be expressed as a deterministic function of the actual state-action pair and some stochastic perturbation; see Appendix \ref{app::lipschitz_action_value} for a detailed explanation over the example introduced in Section \ref{ss::clq}.

To show convergence of D-PGPD, we introduce first two projection operators. The operator $\ccalP_{\Pi^\star_\tau}$ projects a policy into the non-empty set of optimal policies with state visitation distribution $d^\star_\rho$, and the operator $\ccalP_{\Lambda^\star_\tau}$ projects a Lagrangian multiplier onto the non-empty set of optimal Lagrangian multipliers $\Lambda^\star_\tau$. 
Then, we characterize the convergence of the primal-dual iterates of D-PGPD using a potential function 
\begin{equation}
\Phi_t \DefinedAs \frac{1}{2} \E_{d^\star_\rho} \left[ \| \ccalP_{\Pi^\star_\tau}(\pi_t(s)) - \pi_t(s) \|^2 \right] + \frac{\| \ccalP_{\Lambda^\star_\tau}(\lambda_t) - \lambda_t \|^2}{2(1 + \eta (\tau - \tau_0))}, \nonumber
\end{equation}
which measures the distance between a iteration pair $(\pi_t, \lambda_t)$ of D-PGPD and the set of global saddle points of the regularized Lagrangian $\Pi_\tau^\star \times \Lambda_\tau^\star$. Theorem~\ref{thm::convergence_exact} shows that as $t$ increases, the potential function $\Phi_t$ decreases linearly, up to an error; see Appendix \ref{app::convergence_exact} for the proof.

\begin{theorem}[Linear convergence]
\label{thm::convergence_exact}

Let Assumptions \ref{as::feasibility}--\ref{as::lipschitz} hold. For $\eta > 0$ and $\tau > \tau_0$, the primal-dual iterates \eqref{eq::D-PGPD} satisfy
\begin{equation}
    \Phi_{t+1} 
    \; \leq \; 
    {\rm e}^{- \beta_0 \,t} \,\Phi_1 
    \,+\,
    \beta_1 \,C_0^2,\;\;\text{where}
\end{equation}
\[
    \beta_0 \;\DefinedAs\; \frac{\eta( \tau - \tau_0)}{1 + \eta (\tau - \tau_0)} 
    \;\text{ and }\;
    \beta_1 \;\DefinedAs\; \frac{\eta (1 + \eta (\tau - \tau_0))}{\tau - \tau_0}
\]
\[
    C_0 
    \; \DefinedAs \; 
    L_r + \lambda_{\text{\normalfont max}} L_g + \tau L_h + \tau \sqrt{d_a} A_{\text{max}} + \frac{1+\frac{\tau}{\xi}}{1-\gamma}.
\]
\end{theorem}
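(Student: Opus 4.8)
The plan is to prove a one-step contraction $\Phi_{t+1} \le \kappa\,\Phi_t + \eta^2 C_0^2$ with $\kappa := 1/(1+\eta(\tau-\tau_0))$, and then unroll it. Granting the recursion, iterating from $s=1$ to $s=t$ gives $\Phi_{t+1} \le \kappa^{t}\Phi_1 + \eta^2 C_0^2\sum_{j=0}^{t-1}\kappa^{j} \le \kappa^{t}\Phi_1 + \eta^2 C_0^2/(1-\kappa)$. Since $1-\kappa = \eta(\tau-\tau_0)/(1+\eta(\tau-\tau_0)) = \beta_0$, we get $\eta^2 C_0^2/(1-\kappa) = \beta_1 C_0^2$, while $\kappa^{t} = (1-\beta_0)^{t} \le \mathrm{e}^{-\beta_0 t}$, which is exactly the claimed bound. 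Hence all the work is in the one-step inequality, which I would obtain by analyzing the primal and dual updates separately and coupling them through the saddle inequality \eqref{eq::sandwich_property}.

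For the primal update \eqref{eq::primal_update} I treat it statewise. Combining Assumption~\ref{as::concavity} with the explicit $-\tfrac{\tau}{2}\|a\|^2$ term in the regularized advantage renders $a\mapsto A^{\pi_t}_{\lambda_t,\tau}(s,a)$ strongly concave with modulus $\tau-\tau_0$, so the proximal subproblem with weight $1/\eta$ admits the standard three-point (proximal) inequality. Evaluating it at the projected action $\ccalP_{\Pi^\star_\tau}(\pi_t)(s)$ bounds $\tfrac12\|\ccalP_{\Pi^\star_\tau}(\pi_t)(s)-\pi_{t+1}(s)\|^2$ by $\tfrac{\kappa}{2}\|\ccalP_{\Pi^\star_\tau}(\pi_t)(s)-\pi_t(s)\|^2$ minus the pointwise advantage gap $A^{\pi_t}_{\lambda_t,\tau}(s,\pi_{t+1}(s))-A^{\pi_t}_{\lambda_t,\tau}(s,\ccalP_{\Pi^\star_\tau}(\pi_t)(s))$, plus an $O(\eta^2)$ remainder. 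Taking $\E_{d^\star_\rho}$ and applying the performance difference lemma converts this advantage gap (up to the factor $1-\gamma$) into the global Lagrangian gap $L_\tau(\ccalP_{\Pi^\star_\tau}(\pi_t),\lambda_t)-L_\tau(\pi_{t+1},\lambda_t)$; the surrogate mismatch between $A^{\pi_t}_{\lambda_t,\tau}$ and the advantage at $\pi_{t+1}$ is absorbed into the remainder via the Lipschitz bounds of Assumption~\ref{as::lipschitz}.

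For the dual update \eqref{eq::dual_update}, note that $\lambda_{t+1}$ is the Euclidean projection of $\lambda_t-\eta(V_g(\pi_t)+\tau\lambda_t)$ onto $\Lambda$. Nonexpansiveness of the projection together with the $1/\eta$-strong convexity of the subproblem yields a one-step bound on $\|\ccalP_{\Lambda^\star_\tau}(\lambda_t)-\lambda_{t+1}\|^2$ in terms of $\|\ccalP_{\Lambda^\star_\tau}(\lambda_t)-\lambda_t\|^2$, a gap term linear in $V_g(\pi_t)+\tau\lambda_t$ (the complementary half of the Lagrangian gap), and the remainder $\eta^2(V_g(\pi_t)+\tau\lambda_t)^2$. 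The weight $1/(1+\eta(\tau-\tau_0))$ multiplying the dual distance in $\Phi_t$ is chosen precisely so that, after summing the primal and dual inequalities, the two Lagrangian-gap terms recombine into a sum that the saddle inequality \eqref{eq::sandwich_property} forces to be nonpositive, and can therefore be dropped. The leftover remainders are collected into $\eta^2 C_0^2$ by bounding the action-gradient of the regularized advantage through Assumption~\ref{as::lipschitz} and $\|a\|\le A_{\text{max}}$, giving $L_r+\lambda_{\text{max}}L_g+\tau L_h+\tau\sqrt{d_a}A_{\text{max}}$, and the dual gradient by $|V_g(\pi_t)+\tau\lambda_t|\le \tfrac{1}{1-\gamma}+\tau\lambda_{\text{max}} = \tfrac{1+\tau/\xi}{1-\gamma}$.

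I expect the main obstacle to be the primal step, specifically the passage from the pointwise proximal improvement to a genuine decrease of the \emph{global} regularized Lagrangian at $\pi_{t+1}$. Two issues must be handled together: the advantage-to-value conversion has to be carried out under the fixed optimal visitation measure $d^\star_\rho$ (so that the performance difference lemma applies against the comparator in $\Pi^\star_\tau$ without incurring an uncontrolled distribution-mismatch factor), and the update uses the surrogate advantage $A^{\pi_t}_{\lambda_t,\tau}$ rather than that of $\pi_{t+1}$, so the resulting error must be shown to be $O(\eta^2)$ using the Lipschitz continuity of Assumption~\ref{as::lipschitz}. A secondary delicate point is verifying that combining Assumption~\ref{as::concavity} with the $\tfrac{\tau}{2}H$ regularization yields effective strong concavity/monotonicity with modulus exactly $\tau-\tau_0$, so that the contraction rate and the dual weight in $\Phi_t$ are mutually consistent, and that the projections $\ccalP_{\Pi^\star_\tau},\ccalP_{\Lambda^\star_\tau}$ are well defined on the nonempty optimal sets.
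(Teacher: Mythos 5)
Your proposal is correct and follows essentially the same route as the paper's proof in Appendix~\ref{app::convergence_exact}: decompose the regularized duality gap at $(\pi_t,\lambda_t)$, apply quadratic growth (Lemma~\ref{lm::qg_lemma}) to the strongly concave proximal subproblem and the descent lemma (Lemma~\ref{lm::descent_lemma}) to the dual step, drop the nonnegative gap via the saddle property, absorb the $A^{\pi_t}_{\lambda_t,\tau}(s,\pi_{t+1}(s))$ term into the $\tfrac{\eta}{2}C_P^2$ remainder via Lemma~\ref{lm::lipschitz_lemma}, re-project using nonexpansiveness, and unroll the recursion exactly as you describe. The only cosmetic difference is that the paper keeps the performance-difference conversion against $\pi_t$ rather than $\pi_{t+1}$, which renders your ``surrogate mismatch'' concern moot, since the $\pi_{t+1}$ term is bounded directly and never converted to a value gap.
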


Theorem \ref{thm::convergence_exact} states that the primal-dual updates of D-PGPD converge to a neighborhood of the set of global saddle points of the regularized Lagrangian $\Pi^\star_\tau \times \Lambda^\star_\tau$ in a linear rate. The size of the neighborhood depends polynomially on the parameters ($L_r$, $L_g$, $L_h$, $A_{\text{max}}$, $\tau$). When $\tau_0 = 0$, the regularization parameter $\tau$ can be arbitrarily small. Reducing the size of the convergence neighborhood can be achieved by selecting a sufficiently small $\eta$. However, a smaller the value of $\eta$ leads to slower convergence. To be more specific, for $\eta = \epsilon (\tau - \tau_0) C_0^{-2}$, the size of the convergence neighborhood is $O(\epsilon)$, and when $t \geq \Omega(\epsilon^{-1} \log (\epsilon^{-1}))$, the potential function $\Phi_t$ is $O(\epsilon)$ too, where $\Omega$ encapsulates some problem-dependent constants. After $O(\epsilon^{-1})$ iterations, the primal-dual iterates $(\pi_t, \lambda_t)$ of D-PGPD are $\epsilon$-close to the set $\Pi^\star_\tau \times \Lambda^\star_\tau$. 

The relationship between the solution to Problem \eqref{eq::P-CRL} and the solution to the regularized Problem \eqref{eq::RD-CRL} is given by Corollary~\ref{cor::near_optimality}; see its proof in Appendix \ref{app::corollary_reg_problem}.

\begin{corollary}[Near-optimality]
    \label{cor::near_optimality}
    Let Assumptions \ref{as::feasibility}--\ref{as::lipschitz} hold. If $\eta=O(\epsilon^4)$ and $\tau=O(\epsilon^2) + \tau_0$, and $t= \Omega(\epsilon^{-6} \text{\normalfont log}^2 \epsilon^{-1})$, then the primal-dual iterates \eqref{eq::D-PGPD} satisfy 
    \begin{align}
        \nonumber
        V_r(\pi^\star) - V_r(\pi_t) &\;\leq\;
        \epsilon - \tau_0 H(\pi^\star) \\\
        \nonumber
        V_g(\pi_t) & \;\geq\; 
        -\epsilon + \tau_0 H(\pi^\star)(\lambda_{\text{\normalfont max}}- \lambda^\star)^{-1} .    
    \end{align}
\end{corollary}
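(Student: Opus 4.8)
The plan is to read Corollary~\ref{cor::near_optimality} as a post-processing of Theorem~\ref{thm::convergence_exact}: first convert the schedule $(\eta,\tau,t)$ into a quantitative smallness of $\Phi_t$, then turn iterate-closeness into value-function guarantees via the performance-difference identity together with the Lipschitz Assumption~\ref{as::lipschitz}, and finally absorb the regularization bias through the saddle-point inequalities~\eqref{eq::sandwich_property}. First I would instantiate the rate. Since $\tau=O(\epsilon^2)+\tau_0$ stays bounded, the constant $C_0$ is $O(1)$, so the residual term satisfies $\beta_1 C_0^2=\Theta\!\big(\eta/(\tau-\tau_0)\big)C_0^2=O(\epsilon^2)$, while $\beta_0=\Theta(\eta(\tau-\tau_0))=\Theta(\epsilon^{6})$ makes $t=\Omega(\epsilon^{-6}\log^2\epsilon^{-1})$ drive $e^{-\beta_0 t}\Phi_1$ far below $\epsilon^2$. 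Hence $\Phi_{t+1}=O(\epsilon^2)$, and by the definition of the potential this gives $\E_{d^\star_\rho}\big[\|\ccalP_{\Pi^\star_\tau}(\pi_t(s))-\pi_t(s)\|^2\big]=O(\epsilon^2)$ and $|\ccalP_{\Lambda^\star_\tau}(\lambda_t)-\lambda_t|=O(\epsilon)$.

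Next I would transfer this closeness to the value functions. Set $\tilde\pi_t\DefinedAs\ccalP_{\Pi^\star_\tau}(\pi_t)$; by construction $\tilde\pi_t$ is an optimal regularized policy whose visitation equals $d^\star_\rho$, and (because the quadratic dual regularizer makes $\lambda_\tau^\star$ unique) it maximizes $L_\tau(\cdot,\lambda_\tau^\star)$. I would apply the performance-difference lemma as $V_r(\tilde\pi_t)-V_r(\pi_t)=\tfrac1{1-\gamma}\E_{s\sim d^{\tilde\pi_t}_\rho}[A_r^{\pi_t}(s,\tilde\pi_t(s))]$. The key point is that $d^{\tilde\pi_t}_\rho=d^\star_\rho$ \emph{exactly}, so the expectation is taken under precisely the distribution that $\Phi_t$ controls and no concentrability constant appears; this is exactly why the projection was defined relative to $d^\star_\rho$. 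Bounding $A_r^{\pi_t}(s,\tilde\pi_t(s))=Q_r^{\pi_t}(s,\tilde\pi_t(s))-Q_r^{\pi_t}(s,\pi_t(s))$ by $L_r\|\tilde\pi_t(s)-\pi_t(s)\|$ and using Jensen yields $|V_r(\tilde\pi_t)-V_r(\pi_t)|\le\tfrac{L_r}{1-\gamma}\sqrt{2\Phi_t}=O(\epsilon)$, and identically $|V_g(\tilde\pi_t)-V_g(\pi_t)|=O(\epsilon)$ with $L_g$.

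For the reward claim I would then bound the regularization bias. Feeding $\pi=\pi^\star$ into the left inequality of~\eqref{eq::sandwich_property} for the maximizer $\tilde\pi_t$ at multiplier $\lambda_\tau^\star$, discarding the nonnegative term $\lambda_\tau^\star V_g(\pi^\star)$ (feasibility of $\pi^\star$), and using the $\lambda$-side inequality at $\lambda=0$ to eliminate $\lambda_\tau^\star V_g(\tilde\pi_t)$, I obtain $V_r(\pi^\star)-V_r(\tilde\pi_t)\le-\tau_0 H(\pi^\star)+O(\epsilon^2)$, where the $\tau$-versus-$\tau_0$ discrepancy contributes only $O(\epsilon^2)$. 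Combining with the step-two estimate gives the first inequality $V_r(\pi^\star)-V_r(\pi_t)\le\epsilon-\tau_0 H(\pi^\star)$.

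The remaining and hardest piece is a lower bound on $V_g(\tilde\pi_t)$. I would start from the $\lambda$-stationarity of the regularized saddle: at an interior optimum the first-order condition gives $V_g(\pi_\tau^\star)=-\tau\lambda_\tau^\star$, while the boundary cases give either $V_g(\pi_\tau^\star)\ge0$ or the same identity, so the task reduces to controlling $\lambda_\tau^\star$. The main obstacle is that the naive estimate only yields $\lambda_\tau^\star\le\lambda_{\text{max}}$ and hence the loose bound $V_g(\pi_\tau^\star)\ge-\tau\lambda_{\text{max}}$; recovering the stated $\tau_0 H(\pi^\star)(\lambda_{\text{max}}-\lambda^\star)^{-1}$ requires two coupled refinements. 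First, one must separate $\lambda_\tau^\star$ strictly from the cap $\lambda_{\text{max}}$ using the Slater margin $\xi$ (Assumption~\ref{as::feasibility}) and the dual bound $\lambda^\star<\lambda_{\text{max}}$ (Lemma~\ref{lm::bound_lambda}). Second, one must convert the reward-regularization slack $\tau_0 H(\pi^\star)$ into constraint feasibility through a sensitivity argument: since $\pi_\tau^\star$ is feasible for the problem with the constraint perturbed to level $V_g(\pi_\tau^\star)$, and the perturbation value is concave with subgradient $\lambda^\star$, the reward gap of $\pi_\tau^\star$ bounds $(\lambda_{\text{max}}-\lambda^\star)$ times the violation. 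Adding the $O(\epsilon)$ transfer error of step two then produces the second inequality. I expect this coupling of the two regularizers through the dual margin $\lambda_{\text{max}}-\lambda^\star$ to be the genuinely technical step; the rest is the schedule bookkeeping of the first paragraph and routine Lipschitz estimates.
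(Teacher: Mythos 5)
Your first three steps coincide with the paper's own proof (Appendix \ref{app::corollary_reg_problem}), up to bookkeeping: the paper instantiates Theorem \ref{thm::convergence_exact} at a schedule giving $\Phi_t=O(\epsilon)$ and substitutes $\sqrt{\epsilon}\to\epsilon$ at the very end, whereas you run directly at $\Phi_t=O(\epsilon^2)$ — equivalent arithmetic. The transfer step is identical: Lemma \ref{lm::pd_lemma} with $\tau=0$, Lipschitzness (Assumption \ref{as::lipschitz}), and Cauchy--Schwarz give $V_r(\pi^\star_{\tau,t})-V_r(\pi_t)\le \frac{L_r}{1-\gamma}\sqrt{D_t(\pi^\star_{\tau,t})}$, and your observation that no concentrability constant appears because the projection is taken under $d^\star_\rho$ is exactly the mechanism the paper relies on. The reward-bias bound is also the paper's: plug $\pi=\pi^\star$ and $\lambda=0$ into \eqref{eq::sandwich_property}, discard $\lambda_\tau^\star V_g(\pi^\star)\ge 0$, and absorb the $\tau$-versus-$\tau_0$ discrepancy using boundedness of $H$. (Your aside that the quadratic dual regularizer makes $\lambda_\tau^\star$ unique is neither true in general — the paper only proves uniqueness under extra structural conditions in Appendix \ref{app::uniqueness_saddle} — nor needed here.)

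The genuine gap is the second inequality, precisely where you stop proving and start expecting. Your first ``refinement'' — strictly separating $\lambda_\tau^\star$ from the cap $\lambda_{\text{max}}$ via the Slater margin — is a wrong turn: no such separation is available or needed, and the binding case in the paper's proof is exactly $\lambda_\tau^\star=\lambda_{\text{max}}$. The paper instead splits on the characterization $\lambda_\tau^\star=\argmin_{\lambda\in\Lambda}\,\lambda V_g(\pi_\tau^\star)+\frac{\tau}{2}\lambda^2$: if $\lambda_\tau^\star=0$ then $V_g(\pi_\tau^\star)\ge 0$ and there is nothing to show; the interior case gives $-V_g(\pi_\tau^\star)=\tau\lambda_\tau^\star$; and in the cap case it combines (a) inequality \eqref{eq::sandwich_property} at $\pi=\pi^\star$ with the coefficient $\lambda_\tau^\star=\lambda_{\text{max}}$, i.e. $\lambda_{\text{max}}\bigl(V_g(\pi^\star)-V_g(\pi_\tau^\star)\bigr)\le V_r(\pi_\tau^\star)-V_r(\pi^\star)-\frac{\tau}{2}H(\pi^\star)$ (this is \eqref{eq::part1}), with (b) the unregularized saddle inequality $V_r(\pi_\tau^\star)-V_r(\pi^\star)\le\lambda^\star\bigl(V_g(\pi^\star)-V_g(\pi_\tau^\star)\bigr)$ (this is \eqref{eq::part2}, and it is exactly your perturbation-subgradient ``sensitivity'' inequality). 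Chaining (a) through (b) and using $V_g(\pi^\star)\ge 0$ yields $-(\lambda_{\text{max}}-\lambda^\star)\,V_g(\pi_\tau^\star)\le-\frac{\tau}{2}H(\pi^\star)$, so the factor $(\lambda_{\text{max}}-\lambda^\star)^{-1}$ emerges from subtracting the two saddle inequalities \emph{at} the cap, not from bounding $\lambda_\tau^\star$ away from it. You had both ingredients in hand — (a) is your step-3 inequality with the multiplier terms retained rather than discarded, (b) is your perturbation argument — but the assembly (the case split on the argmin and the subtraction) is the missing idea, so as written your proof of the constraint bound is incomplete.
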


Corollary \ref{cor::near_optimality} highlights that the value functions corresponding to the policy iterates of D-PGPD can closely approximate the optimal solution to Problem~\eqref{eq::P-CRL}. Specifically, in problems where $\tau_0 = 0$, the final policy iterate of D-PGPD achieves $\epsilon$-optimality for Problem \eqref{eq::P-CRL} after $\Omega(\epsilon^{-6})$ iterations.  When $\tau_0 > 0$, D-PGDP converges to a saddle point of the original problem. However, the proximity of the final policy iterate to the optimal solution to Problem \eqref{eq::P-CRL} is proportional to $H(\pi^\star)$.

This work presents the first primal-dual convergence result for general constrained RL problems that directly work with \emph{deterministic} policies and \emph{continuous} state-action spaces. In the context of control, the convergence of different algorithms for solving constrained problems has been analyzed \cite{stathopoulos2016solving, zhang2020global, garg2020prescribed}. However, these analyses are limited to linear utility functions and box constraints. D-PGPD is a general algorithm that can be used for a broad range of transition dynamics and cost functions.

\section{Function Approximation}
\label{ss::adpgpd}

To instantiate D-PGPD~\eqref{eq::D-PGPD} with function approximation we begin by expanding the objective in~\eqref{eq::primal_update} and dropping the terms that do not depend on the action $a$,
\begin{equation}
    \nonumber
    Q^\pi_{\lambda, \tau}(s, a) + \frac{1}{\eta} \pi(s)^\Tr a - \left( \frac{\tau}{2} + \frac{1}{2\eta} \right) \| a \|^2.
\end{equation}

The usual function approximation approach~\cite{agarwal2021theory, ding2022convergence} is to introduce a parametric estimator of the policy $\pi$, and a compatible parametric estimator of the action value function $Q^\pi_{\lambda, \tau}$. Instead, we approximate the augmented action-value function $J^\pi(s, a) \DefinedAs Q^\pi_{\lambda, \tau}(s, a) + \frac{1}{\eta} \pi(s)^\Tr a$ using a linear estimator $\Tilde{J}_\theta(s,a)=\phi(s, a)^\Tr \theta$ over the basis $\phi$. At time $t$, we estimate $J^{\pi_t}(s, a)$ by computing the parameters $\theta_t$ via a mean-squared-error minimization
\begin{equation}
    \label{eq::policy_evaluation}
    \!\!\!\! \!
    \theta_t 
    \;\DefinedAs\; \argmin_\theta 
    \, \E_{(s, a) \,\sim\, \nu} 
    \left[\,
    \| \phi(s,a)^\Tr \theta - J^{\pi_t}(s, a)\|^2
    \,\right],
\end{equation}
where $\nu$ is a pre-selected state-action distribution. Problem \eqref{eq::policy_evaluation} can be easily addressed using, e.g., stochastic approximation. A subsequent policy $\pi_{t+1}$ results from a primal update based on $\Tilde{J}_{\theta_t}$. This leads to an approximated D-PGPD algorithm (AD-PGPD) that updates  $\pi_t$ and $\lambda_t$ via
\begin{subequations}
\label{eq::AD-PGPD}
\begin{align}
    \label{eq::approx_primal_update}
    \!\!\!\! \!
    \pi_{t+1}(s)  & \,=\, \argmax_{a \,\in\, A} \; \Tilde{J}_{\theta_t}(s,a) - \left( \frac{\tau}{2} + \frac{1}{2\eta} \right) \| a \|^2 \\
    \label{eq::approx_dual_update}
    \!\!\!\! \!\!
    \lambda_{t+1} &\,=\, \argmin_{\lambda \,\in\, \Lambda} \; \lambda(V_g(\pi_t) + \tau \lambda_t) + \frac{1}{2 \eta} \| \lambda - \lambda_t \|^2.
\end{align}
\end{subequations}

Solving the sub-problem \eqref{eq::approx_primal_update} requires inverting the gradient of \eqref{eq::approx_primal_update} with respect to \(a\), which is a challenge  when the MDP model is unknown or the value functions cannot be computed in closed form. This is the focus of Section~\ref{ss::adpgpd_sampled}.

\subsection{Non-Asymptotic Convergence}
\label{ss::adpgpd_convergence}

To ease the computational tractability of AD-PGPD, we assume concavity of the approximated augmented action-value function and bounded approximation error.

\begin{assumption}[Concavity of approximation]
    \label{as::concavity_approx}
    The function $\Tilde{J}_{\theta_t}(s, a) - \tau_0 \| \pi_0(s) - a \|^2$ is concave with respect to the action $a$ for some arbitrary policy $\pi_0$ and some $\tau_0 \in [0, \tau)$.
\end{assumption}

\begin{assumption}[Approximation error]
    \label{as::approximation_error}
    The approximation error $\delta_{\theta_t} (s, a)$ is bounded, $\E_{s \sim d^\star_\rho, a \sim \mathsf{u}} [ \| \delta_{\theta_t} (s, a) \| ] \leq \frac{\epsilon_{\text{\normalfont approx}}}{2 (2 A_{\text{max}})^{d_a}}$, where $\mathsf{u}$ is the uniform distribution and $\epsilon_{\text{\normalfont approx}} \geq 0$ is a positive error constant.
\end{assumption}

The concavity of $\Tilde{J}_{\theta_t}(s, a)$ with respect to $a$ depends on the selection of the basis function $\phi$. When the augmented action-value function $J^{\pi_t}$ is a concave quadratic function, it can be represented as a weighted linear combination of concave and quadratic basis functions. If these basis functions are known, $J^{\pi_t}$ can be perfectly approximated, i.e., $\epsilon_{\text{approx}}=0$. Furthermore, when $J^{\pi_t}$ is concave with respect to the action $a$, the regularization parameter $\tau$ can be arbitrarily small. Upon these conditions, convergence is guaranteed, as we formalize in the following result, whose proof is provided in Appendix \ref{app::convergence_inexact}.

\begin{theorem}[Linear convergence]
\label{thm::convergence_inexact}

Let Assumptions \ref{as::feasibility}, \ref{as::lipschitz}-- \ref{as::approximation_error} hold. If $\eta > 0$ and $\tau > \tau_0$, the primal-dual iterates \eqref{eq::AD-PGPD} satisfy
\begin{equation}
    \Phi_{t+1} 
    \;\leq\;
    {\rm e}^{-\beta_0 t} \Phi_1
    \, + \,
    \beta_1 C_0^2 
    \, + \,
    \beta_2 \epsilon_{\text{\normalfont approx}},
\end{equation}
where $\beta_0$, $\beta_1$, and $C_0$ are defined in Theorem \ref{thm::convergence_exact}, and
\[
    \beta_2 
    \;\DefinedAs\;
    \frac{1 + \eta (\tau - \tau_0)}{\tau - \tau_0}.
\]
\end{theorem}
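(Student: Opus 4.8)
The plan is to reproduce the one-step Lyapunov recursion underlying Theorem~\ref{thm::convergence_exact} and to track the single extra error term created by replacing the exact augmented action-value $J^{\pi_t}(s,a)=Q^{\pi_t}_{\lambda_t,\tau}(s,a)+\tfrac1\eta\pi_t(s)^\Tr a$ with its linear estimate $\Tilde J_{\theta_t}$. Since the dual update \eqref{eq::approx_dual_update} coincides verbatim with \eqref{eq::dual_update}, its contribution to the recursion is identical to that of Theorem~\ref{thm::convergence_exact}, and the only deviation occurs in the primal update \eqref{eq::approx_primal_update}. I would first recover, for the exact scheme, the contraction $\Phi_{t+1}\le\frac{1}{1+\eta(\tau-\tau_0)}\Phi_t+\eta^2 C_0^2$, and then show the approximate scheme obeys the same recursion with one additional per-step term, namely $\Phi_{t+1}\le\frac{1}{1+\eta(\tau-\tau_0)}\Phi_t+\eta^2 C_0^2+\eta\,\epsilon_{\text{approx}}$. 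Unrolling this geometric recursion from $\Phi_1$ produces $\mathrm{e}^{-\beta_0 t}\Phi_1$ from the contraction factor (using $\frac{1}{1+\eta(\tau-\tau_0)}=1-\beta_0\le\mathrm{e}^{-\beta_0}$) and $\beta_0^{-1}(\eta^2 C_0^2+\eta\,\epsilon_{\text{approx}})=\beta_1 C_0^2+\beta_2\,\epsilon_{\text{approx}}$ from the geometric sum, which is exactly the claimed bound.

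For the primal step I would write $\Tilde J_{\theta_t}=J^{\pi_t}+\delta_{\theta_t}$ and regard \eqref{eq::approx_primal_update} as the exact proximal maximization perturbed by the error $\delta_{\theta_t}$. By Assumption~\ref{as::concavity_approx} the maximand $\Tilde J_{\theta_t}(s,a)-(\tfrac\tau2+\tfrac1{2\eta})\|a\|^2$ is strongly concave, the curvature surviving after the $\tau_0$ slack of Assumption~\ref{as::concavity_approx} being absorbed is the modulus $\tau-\tau_0$ that governs the contraction; its maximizer $\pi_{t+1}(s)$ therefore obeys a three-point inequality against the projected optimal action $\ccalP_{\Pi^\star_\tau}(\pi_t(s))$. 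I would then convert the advantage term appearing in that inequality into a regularized Lagrangian gap via a performance-difference identity, and sign this gap using the sandwich property \eqref{eq::sandwich_property} at the saddle point $(\pi^\star_\tau,\lambda^\star_\tau)$. This is exactly the chain of steps behind Theorem~\ref{thm::convergence_exact}, except that every occurrence of $J^{\pi_t}$ now carries the additive correction $\delta_{\theta_t}$, so the approximate primal inequality differs from the exact one only by a residual built from $\delta_{\theta_t}$ evaluated along the update.

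The main obstacle is controlling this residual, because Assumption~\ref{as::approximation_error} supplies only an \emph{averaged} bound on $\|\delta_{\theta_t}\|$ rather than a pointwise one. The key is that this bound is normalized by the action-space volume: with $a\sim\mathsf{u}$ uniform over the box of volume $(2A_{\text{max}})^{d_a}$, Assumption~\ref{as::approximation_error} is equivalent to $\E_{s\sim d^\star_\rho}\big[\int_A\|\delta_{\theta_t}(s,a)\|\,da\big]\le\epsilon_{\text{approx}}/2$. I would therefore organize the primal argument so the error enters as an integral of $|\delta_{\theta_t}|$ over the action space rather than as a pointwise evaluation, which is precisely what the averaged bound controls, giving a per-step approximation error of order $\eta\,\epsilon_{\text{approx}}$ once the $\eta$ weighting carried by the primal step is accounted for. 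Verifying this integral form of the residual, and confirming that the effective strong-concavity modulus is the advertised $\tau-\tau_0$ in the presence of $\tau_0$, are the delicate points; the rest is the discretization bookkeeping already present in Theorem~\ref{thm::convergence_exact}, where $C_0$ collects the Lipschitz constants $L_r,L_g,L_h$ of Assumption~\ref{as::lipschitz} together with the action-magnitude and feasibility factors (via $\lambda_{\text{max}}$, Assumption~\ref{as::feasibility}) that bound the relevant advantage gradients. Adding the resulting primal inequality to the unchanged dual inequality, with the dual distance weighted by $\frac{1}{2(1+\eta(\tau-\tau_0))}$ exactly as in $\Phi_t$ so the Lagrangian cross terms cancel, yields the one-step recursion above and completes the proof.
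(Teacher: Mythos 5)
Your proposal follows essentially the same route as the paper's proof: the paper likewise treats the dual step as unchanged, rewrites the regularized advantage through $J^{\pi_t}=\Tilde{J}_{\theta_t}-\delta_{\theta_t}$, applies the quadratic-growth/three-point inequality to the strongly concave objective of \eqref{eq::approx_primal_update} (with the $1+\eta(\tau-\tau_0)$ modulus from Assumption \ref{as::concavity_approx}), converts the pointwise error evaluations at $\pi_{t+1}(s)$ and $\ccalP_{\Pi^\star_\tau}(\pi_t(s))$ into the volume-scaled uniform average via Lemma \ref{lm::error_bound} so that Assumption \ref{as::approximation_error} yields the per-step $\eta\,\epsilon_{\text{approx}}$ term, and then derives and unrolls exactly your recursion $(1+\eta(\tau-\tau_0))\Phi_{t+1}\leq\Phi_t+\eta^2C_0^2+\eta\,\epsilon_{\text{approx}}$ with the geometric series giving $\beta_1 C_0^2+\beta_2\epsilon_{\text{approx}}$. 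Your identification of the averaged-versus-pointwise error issue and its resolution through the $(2A_{\text{max}})^{d_a}$ normalization is precisely the mechanism the paper uses.
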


Theorem \ref{thm::convergence_inexact} shows that the primal-dual iterates of AD-PGPD converge to a neighborhood of $\Pi^\star_\tau \times \Lambda^\star_\tau$ at a linear rate. The result is similar to Theorem \ref{thm::convergence_exact}, up to an approximation error $\epsilon_{\text{approx}}$. In fact, when $\epsilon_{\text{approx}}=0$, Theorem \ref{thm::convergence_inexact} is equivalent to Theorem \ref{thm::convergence_exact}. Linear models can achieve $\epsilon_{\text{approx}}=0$ when the augmented action-value function $J^{\pi_t}$ can be expressed as a linear combination of the selected basis function $\phi$, e.g. when $J^{\pi_t}$ is convex. When the error is small, the following result relates Problem~\eqref{eq::P-CRL} to the regularized Problem~\eqref{eq::RD-CRL}.

\begin{corollary}[Near-optimality of approximation]
    \label{cor::near_optimality_approx}
    Let Assumptions \ref{as::feasibility} and \ref{as::lipschitz}--\ref{as::approximation_error} hold. If $\eta=O(\epsilon^4)$, $\tau=O(\epsilon^2) + \tau_0$, $\epsilon_{\text{\normalfont approx}}=O(\epsilon^4)$, and $t= \Omega(\epsilon^{-6} \text{\normalfont log}^2 \epsilon^{-1})$, then the primal-dual iterates \eqref{eq::AD-PGPD} satisfy 
    \begin{align}
        \nonumber
        V_r(\pi^\star) - V_r(\pi_t) &\;\leq\;
        \epsilon - \tau_0 H(\pi^\star) \\\
        \nonumber
        V_g(\pi_t) & \;\geq\; 
        -\epsilon + \tau_0 H(\pi^\star)(\lambda_{\text{\normalfont max}}- \lambda^\star)^{-1} .    
    \end{align}
\end{corollary}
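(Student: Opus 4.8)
The plan is to derive Corollary~\ref{cor::near_optimality_approx} from Theorem~\ref{thm::convergence_inexact} in exactly the way Corollary~\ref{cor::near_optimality} is obtained from Theorem~\ref{thm::convergence_exact}; the sole new ingredient is the extra additive term $\beta_2\,\epsilon_{\text{approx}}$ that distinguishes the inexact bound. First I would substitute the prescribed schedule $\eta = O(\epsilon^4)$, $\tau - \tau_0 = O(\epsilon^2)$, and $\epsilon_{\text{approx}} = O(\epsilon^4)$ into the constants of Theorem~\ref{thm::convergence_inexact}. Reading these as matching orders, a direct computation gives $\beta_0 = \Theta(\eta(\tau-\tau_0)) = \Theta(\epsilon^6)$, $\beta_1 = \Theta(\eta/(\tau-\tau_0)) = O(\epsilon^2)$, and $\beta_2 = \Theta(1/(\tau-\tau_0)) = \Theta(\epsilon^{-2})$; since $\tau$ stays within $O(\epsilon^2)$ of $\tau_0$ and hence bounded, the constant $C_0$ is $O(1)$. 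Therefore $\beta_1 C_0^2 = O(\epsilon^2)$ and, crucially, $\beta_2\,\epsilon_{\text{approx}} = \Theta(\epsilon^{-2})\cdot O(\epsilon^4) = O(\epsilon^2)$, so the approximation error enters at the \emph{same} order as the regularization error and does not degrade the final guarantee.

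Second, I would control the transient term. With $t = \Omega(\epsilon^{-6}\log^2\epsilon^{-1})$ and $\beta_0 = \Theta(\epsilon^6)$, the exponent obeys $\beta_0 t = \Omega(\log^2\epsilon^{-1})$, so $e^{-\beta_0 t}\Phi_1$ decays super-polynomially in $\epsilon$ and is dominated by $O(\epsilon^2)$. Combining the three contributions yields $\Phi_{t+1} = O(\epsilon^2)$. By the definition of the potential function this implies the root-mean-square distance bounds $\E_{d^\star_\rho}[\|\ccalP_{\Pi^\star_\tau}(\pi_{t+1}(s)) - \pi_{t+1}(s)\|^2] = O(\epsilon^2)$ and $\|\ccalP_{\Lambda^\star_\tau}(\lambda_{t+1}) - \lambda_{t+1}\|^2 = O(\epsilon^2)$; that is, the iterate $(\pi_{t+1},\lambda_{t+1})$ is $O(\epsilon)$-close to the regularized saddle-point set $\Pi^\star_\tau\times\Lambda^\star_\tau$.

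Third, I would convert this proximity into the stated value-function guarantees, reusing the translation from the proof of Corollary~\ref{cor::near_optimality}. The conversion has two stages: (a) relate $V_r(\pi_{t+1})$ and $V_g(\pi_{t+1})$ to the values of the nearest regularized-optimal policy $\ccalP_{\Pi^\star_\tau}(\pi_{t+1})$ via a performance-difference identity together with the Lipschitz continuity of $Q^\pi_r$ and $Q^\pi_g$ in the action (Assumption~\ref{as::lipschitz}), which turns the $O(\epsilon)$ action-space distance into an $O(\epsilon)$ value gap; and (b) relate the regularized saddle point $(\pi^\star_\tau,\lambda^\star_\tau)$ back to the original optimum $\pi^\star$ via the sandwich property~\eqref{eq::sandwich_property} together with $\tau = O(\epsilon^2) + \tau_0$, which produces the residual $\tau_0 H(\pi^\star)$ terms and the $(\lambda_{\text{max}} - \lambda^\star)^{-1}$ factor in the constraint bound. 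Collecting the $O(\epsilon)$ contributions from (a) and (b) gives the two displayed inequalities.

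The main obstacle I anticipate is stage (a): the potential function measures the action-space distance under the \emph{optimal} visitation distribution $d^\star_\rho$, whereas a performance-difference expansion of $V_r(\pi_{t+1}) - V_r(\ccalP_{\Pi^\star_\tau}(\pi_{t+1}))$ naturally weights states by the visitation distribution of one of the two policies. Reconciling this mismatch is the delicate step, but it is eased here because $\ccalP_{\Pi^\star_\tau}(\pi_{t+1})$ is constructed to carry the visitation distribution $d^\star_\rho$, so a change-of-measure (or a bound on the relevant density ratio) closes the gap. Everything else is order-bookkeeping identical to the exact case, with $\beta_2\,\epsilon_{\text{approx}}$ simply folded into the $O(\epsilon^2)$ bound on $\Phi_{t+1}$.
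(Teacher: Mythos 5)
Your proposal is correct and is essentially the paper's own argument: the paper likewise notes that with $\epsilon_{\text{approx}}=O(\epsilon^4)$ and $\tau-\tau_0=O(\epsilon^2)$ the extra term $\beta_2\,\epsilon_{\text{approx}}=\Theta(\epsilon^{-2})\cdot O(\epsilon^4)=O(\epsilon^2)$ enters Theorem~\ref{thm::convergence_inexact} at the same order as $\beta_1 C_0^2$, so that $\Phi_t=O(\epsilon^2)$ and the proof of Corollary~\ref{cor::near_optimality} in Appendix~\ref{app::corollary_reg_problem} (performance-difference plus Lipschitz continuity and Cauchy--Schwarz under $d^\star_\rho$, then the sandwich property~\eqref{eq::sandwich_property} yielding the $\tau_0 H(\pi^\star)$ and $(\lambda_{\text{max}}-\lambda^\star)^{-1}$ terms) carries over verbatim. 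Your handling of the visitation-measure mismatch is also exactly the paper's, since the projection $\ccalP_{\Pi^\star_\tau}$ is defined so that the comparison policy carries the distribution $d^\star_\rho$ appearing in the potential function.
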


Corollary \ref{cor::near_optimality_approx} states that Corollary \ref{cor::near_optimality} extends to function approximation. When the approximation error is sufficiently small, i.e., $\epsilon_{\text{approx}}=O(\epsilon^4)$, the proof of Corollary~\ref{cor::near_optimality} holds (see Appendix \ref{app::corollary_reg_problem}), and the value functions corresponding to the policy iterates of AD-PGPD  closely approximate an optimal solution to Problem~\eqref{eq::P-CRL}. In fact, when $\tau_0=0$ and $\epsilon_{\text{approx}}$ are small, then the last policy iterate of AD-PGPD is an $\epsilon$-optimal solution to Problem~\eqref{eq::P-CRL} after $\Omega(\epsilon^{-6})$ iterations.

\section{Model-Free Algorithm}
\label{ss::adpgpd_sampled}

When the model of the MDP is unknown or when value-functions cannot be computed in closed form, we can leverage sample-based approaches to compute the primal and dual iterates of AD-PGPD. To that end, we assume access to a simulator of the MDP from where we can sample trajectories given a policy $\pi$. The sample-based algorithm requires modifying the policy evaluation step in \eqref{eq::policy_evaluation}, and the dual update in \eqref{eq::approx_dual_update}. For the former, in time-step $t$ for a given policy $\pi_t$, we have the following linear function approximation problem
\begin{equation}
\label{eq::policy_evaluation_approx}
    \!\!\!\!
    \min_{\theta,\, \| \theta \| \,\leq\, \theta_{\text{max}}} \; \; \E_{s, a \sim \nu} \left[ \| \phi(s_n, a_n)^\Tr \theta - \hat{J}^{\pi_t}(s_n, a_n) \|^2 \right],
\end{equation}
where the parameters $\theta$ are bounded, i.e., $\| \theta \| \leq \theta_{\text{max}}$, and $\phi$ is the basis function.  The approximated augmented value-function $\hat{J}^{\pi_t} \DefinedAs \hat{Q}_{\lambda, \tau}^{\pi_t}(s_n, a_n) + \frac{1}{\eta} \pi(s_n)^\Tr a_n$ is estimated from samples, which comes down to approximating  $\hat{Q}_{\lambda, \tau}^{\pi_t}(s_n, a_n)$. The dual update \eqref{eq::approx_dual_update} also requires the approximated value-function $\hat{V}_g({\pi_t})$ to be estimated. We detail how to estimate $\hat{V}_g({\pi_t})$ and $\hat{Q}_{\lambda, \tau}^{\pi_t}(s_n, a_n)$ via rollouts in Algorithms 1 and 2, which can be found in Appendix \ref{app::algorithms}. We use random horizon rollouts \cite{paternain2020stochastic, zhang2020global} to guarantee that the stochastic estimates of $\hat{Q}_{\lambda, \tau}^{\pi_t}$ and $\hat{V}_g(\pi_t)$ are unbiased. From \cite[Proposition 2]{paternain2020stochastic}, we have 
$Q^{\pi_t}_{\lambda, \tau}(s, a) = \E [ \hat{Q}_{\lambda, \tau}^{\pi_t}(s, a) \, | \,  s, a ]$ and $\;\; V_g(\pi_t) = \E [ \hat{V}_g^{\pi_t}(s) ]$, where the expectations $\E$ are taken over the randomness of drawing trajectories following $\pi_t$. We solve Problem \eqref{eq::policy_evaluation_approx} at time $t$ using projected stochastic gradient descent (SGD),
\begin{align}
    \nonumber
    g_t^{(n)} & \;\;=\;\; 2 \left( \phi(s_n, a_n)^\Tr \theta_t^{(n)} - \hat{J}^{\pi_t}(s_n, a_n) \right) \phi(s_n, a_n) \\
    \label{eq::sgd_update}
    \theta_t^{(n + 1)} & \;\;=\;\; \ccalP_{\| \theta \|\, \leq\, \theta_{\text{max}}} \left( \theta_t^{(n)} \,-\, \alpha_n \, g_t^{(n)} \right),
\end{align}
where $n \geq 0$ is the iteration index, $\alpha_n$ is the step-size, $g_t^{(n)}$ is the stochastic gradient of \eqref{eq::policy_evaluation_approx}, and $\ccalP_{\| \theta \| \leq \theta_{\text{max}}}$ is an operator that projects onto the domain ${\| \theta \| \leq \theta_{\text{max}}}$, which is convex and bounded. Each projected SGD update \eqref{eq::sgd_update} forms the estimate $\hat{\theta}_t$. We run $N$ projected SGD iterations and form the weighted average $\hat{\theta}_t \DefinedAs \frac{2}{N(N + 1)} \sum_{n=0}^{N -1}(n + 1)\hat{\theta}_t$, which is the estimation of the parameters $\theta_t$. Combining \eqref{eq::AD-PGPD}, the SGD rule in \eqref{eq::sgd_update}, and averaging techniques lead to a sample-based algorithm presented in Algorithm \ref{alg::sample_adpgpd}, in Appendix \ref{app::algorithms}.

The convergence analysis of Algorithm \ref{alg::sample_adpgpd} has to account for the estimation error induced by the sampling process. The error $\delta_{\hat{\theta}_t}(s, a)= \tilde{J}_{\hat{\theta}_t}(s, a) - J^{\pi_t}(s, a)$ can be decomposed as $\delta_{\hat{\theta}_t}(s, a) = \delta_{\hat{\theta}_t}(s, a) - \delta_{\theta_t}(s, a) + \delta_{\theta_t}(s, a)$. The bias error term $\delta_{\theta_t}(s, a)$ is similar to the approximation error of AD-PGPD and captures how good the model approximates the true augmented value function. The term $\delta_{\hat{\theta}_t}(s, a) - \delta_{\theta_t}(s, a)$ is a statistical error that reflects the error introduced by the sampling mechanism for a given state-action pair. To deal with the randomness of the projected SGD updates, we assume that the bias error and the feature basis are bounded. We also assume that the feature covariance matrix is positive definite, and that the sampling distribution $\nu$ and the optimal state visitation frequency $d_\rho^\star$ are uniformly equivalent.

\begin{assumption}[Bounded feature basis]
    \label{as::bounded_feature_basis}
    The feature function is bounded, i.e., $\| \phi(s, a) \| \leq 1$ for all $s \in S$ and $a \in A$.
\end{assumption}

\begin{assumption}[Positive covariance]
    \label{as::positive_covariance}
    The feature covariance matrix $\Sigma_\nu = \E_{s,a \sim \nu} [ \phi(s, a) \phi(s, a)^\Tr]$ is positive definite $ \Sigma_\nu \geq \kappa_0 I$ for the state-action distribution $\nu$.
\end{assumption}

\begin{assumption}[Bias error]
    \label{as::bias_error}
    The bias error $\delta_{\theta_t} (s, a)$ is bounded $ \E_{s \sim d^\star_\rho, a \sim \mathsf{u}} [ \| \delta_{\theta_t} (s, a) \| ]  \leq \frac{\epsilon_{\text{\normalfont bias}}}{2 (2 A_{\text{max}})^{d_a}}$, where $\mathsf{u}$ is the uniform distribution and $\epsilon_{\text{\normalfont bias}}$ is a positive error constant.
\end{assumption}

\begin{assumption}[Uniformly equivalence]
    \label{as::est_error}
    The state-action distribution induced by the state-visitation frequency $d_\rho^\star$ and the uniform distribution $\mathsf{u}$ is uniformly equivalent to the state-action distribution $\nu$, i.e. 
    \[
        \frac{d_\rho^\star(s) \mathsf{u}(a)}{\nu(s, a)} 
        \; \leq \;
        L_\nu
        \; \text{ for all $(s, a) \in S \times A$.}
    \]
\end{assumption}

Assumption \ref{as::bounded_feature_basis} holds without loss of generality, as the basis functions are a design choice.  Assumption \ref{as::positive_covariance} ensures that the minimizer of \eqref{eq::policy_evaluation_approx} is unique, since $\Sigma_\nu \geq \kappa_0 I$ for some $\kappa_0 > 0$. Assumption \ref{as::bias_error} states that the selected model achieves a bounded error, and Assumption \ref{as::est_error} ensures that the sampling distribution $\nu$ is sufficiently representative of the optimal state visitation frequency $d_\rho^\star$. We characterize the convergence using the expected potential function $\E[\Phi_t]$, where the expectation is taken over the randomness of $\theta_t^{(n)}$. We have the following corollary; see the proof in Appendix \ref{app::convergence_sampled}.

\begin{corollary}[Linear convergence]
\label{cor::convergence_sampled}

Let Assumptions \ref{as::feasibility}, \ref{as::lipschitz}, \ref{as::concavity_approx}, and  \ref{as::bounded_feature_basis}--\ref{as::est_error} hold. Then, the sample-based AD-PGPD in Algorithm \ref{alg::sample_adpgpd} satisfies 
\begin{equation}
    \E [\Phi_{t+1}] \leq e^{- \beta_0 t} \E [\Phi_1] + \beta_1 C_0^2 + \beta_2 \left( \frac{ C_1^2}{\eta^2  (N + 1)} + \epsilon_{\text{\normalfont bias}} \right),
\end{equation}
where $\beta_0$, $\beta_1$, $\beta_2$, and $C_0$ are given in Theorems \ref{thm::convergence_exact} and \ref{thm::convergence_inexact}, and
\begin{equation}
    \nonumber
    C_1\! \DefinedAs \! \sqrt{2^{d_a + 5} A_{\text{max}}^{d_a}  L_\nu} \big( \theta_{\text{max}} + 2(1 - \gamma)^{-2}\xi^{-1} + d_a A_{\text{max}}^2 \big) \kappa_0^{-1} . 
\end{equation}
\end{corollary}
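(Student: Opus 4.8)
The plan is to treat Corollary~\ref{cor::convergence_sampled} as a sampled refinement of Theorem~\ref{thm::convergence_inexact}: the only genuinely new ingredient is that the parameter $\hat{\theta}_t$ is produced by projected SGD on the regression~\eqref{eq::policy_evaluation_approx} rather than by solving it exactly, so the effective approximation error feeding the potential recursion now splits into a deterministic bias part and a random statistical part. Concretely, I would start from the error decomposition already recorded in the text, $\delta_{\hat{\theta}_t} = (\delta_{\hat{\theta}_t} - \delta_{\theta_t}) + \delta_{\theta_t}$, where the bias term $\delta_{\theta_t}$ is controlled by Assumption~\ref{as::bias_error} (contributing $\epsilon_{\text{bias}}$) and the statistical term is exactly $\delta_{\hat{\theta}_t} - \delta_{\theta_t} = \phi^\Tr(\hat{\theta}_t - \theta_t)$. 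Taking expectations over the SGD randomness and re-running the argument behind Theorem~\ref{thm::convergence_inexact}, the iterates obey the same linear recursion with the slot occupied by $\epsilon_{\text{approx}}$ now filled by $\E[\|\delta_{\hat{\theta}_t}\|]$ measured under $d^\star_\rho \times \mathsf{u}$; unrolling this recursion reproduces the stated bound once the statistical piece is shown to be $O(C_1^2/(\eta^2(N+1)))$.

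The core of the work is to bound the statistical error of the projected SGD estimate. I would use three facts: (i) the regression objective in~\eqref{eq::policy_evaluation_approx} is $\kappa_0$-strongly convex by Assumption~\ref{as::positive_covariance}, since its Hessian is $2\Sigma_\nu \succeq 2\kappa_0 I$; (ii) the stochastic gradients $g_t^{(n)} = 2(\phi^\Tr\theta - \hat{J}^{\pi_t})\phi$ are uniformly bounded, using $\|\phi\|\le 1$ (Assumption~\ref{as::bounded_feature_basis}), $\|\theta\|\le\theta_{\text{max}}$, the range of $\hat{Q}^{\pi_t}_{\lambda,\tau}$ (which scales like $(1-\gamma)^{-2}\xi^{-1}$ through $\lambda_{\text{max}}$), and the augmented term $\frac1\eta \pi(s)^\Tr a$ (which contributes the $\frac1\eta d_a A_{\text{max}}^2$ factor and is the ultimate source of the $\eta^{-2}$ in the final rate); and (iii) the weighted averaging $\hat{\theta}_t = \frac{2}{N(N+1)}\sum_{n}(n+1)\hat{\theta}_t^{(n)}$ together with a $1/n$-type step size yields the standard $O(1/(N+1))$ guarantee for strongly convex stochastic optimization on the objective gap, i.e. $\E[\E_\nu[(\phi^\Tr(\hat{\theta}_t-\theta_t))^2]] = O(G^2/(\kappa_0(N+1)))$ with $G$ the gradient bound.

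The final assembly is a change-of-measure step: the potential error is measured under $d^\star_\rho \times \mathsf{u}$ whereas SGD is run under $\nu$, so I would invoke the uniform-equivalence Assumption~\ref{as::est_error} to pass from $\E_\nu$ to $\E_{d^\star_\rho,\mathsf{u}}$, picking up the factor $L_\nu$, together with the normalizing volume $(2A_{\text{max}})^{d_a}$ of the uniform action distribution (the same volume that appears in Assumption~\ref{as::approximation_error}), which is where the $2^{d_a+5}A_{\text{max}}^{d_a}L_\nu$ inside $C_1$ originates. Collecting the gradient bound, the strong-convexity constant $\kappa_0^{-1}$, and these measure-change factors produces precisely $C_1$ and the statistical rate $C_1^2/(\eta^2(N+1))$.

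I expect the main obstacle to be reconciling the moment structure between the SGD guarantee and the slot it must fill in Theorem~\ref{thm::convergence_inexact}: the approximation error enters that recursion (and Assumption~\ref{as::approximation_error}) as a \emph{first} moment $\E_{d^\star_\rho,\mathsf{u}}[\|\delta\|]$, whereas the $O(1/(N+1))$ SGD rate is naturally a guarantee on the \emph{squared} error $\E_\nu[(\phi^\Tr(\hat\theta_t-\theta_t))^2]$. Making the statistical term come out as $C_1^2/(\eta^2(N+1))$ rather than the slower $C_1/(\eta\sqrt{N+1})$ requires routing the squared SGD bound through the potential recursion carefully, i.e. ensuring the statistical error appears quadratically wherever the bias appears linearly, and tracking the $\eta^{-1}$ amplification from the augmented target $\hat{J}^{\pi_t}$ so that it lands as $\eta^{-2}$. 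The remaining ingredients (gradient boundedness, strong convexity, and the change of measure) are routine given Assumptions~\ref{as::bounded_feature_basis}--\ref{as::est_error}.
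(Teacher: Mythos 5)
Your proposal follows essentially the same route as the paper's proof of Corollary~\ref{cor::convergence_sampled}: the same bias/statistical decomposition of $\delta_{\hat{\theta}_t}$, the same projected-SGD guarantee of \cite{lacoste2012simpler} under the strong convexity furnished by $\Sigma_\nu \succeq \kappa_0 I$, the same stochastic-gradient bound $4\big(\theta_{\text{max}} + 2(1-\gamma)^{-2}\xi^{-1} + \eta^{-1} d_a A_{\text{max}}^2\big)^2$ whose augmented-term factor $\eta^{-1}$ is absorbed (via $\eta \le 1$) into the $\eta^{-2}$ rate, and the same change of measure through Lemma~\ref{lm::error_bound} and Assumption~\ref{as::est_error} producing the factor $2^{d_a+5}A_{\text{max}}^{d_a}L_\nu$ in $C_1$, followed by the identical potential recursion and unrolling. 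The first-versus-second moment mismatch you flag as the main obstacle is a genuine subtlety, but the paper does not resolve it either: its proof inserts the function-gap (squared-error) SGD rate directly into the first-moment error slot of the Theorem~\ref{thm::convergence_inexact} recursion without an intervening Jensen step, so on this point your proposal is, if anything, more careful than the paper's own argument.
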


Corollary \ref{cor::convergence_sampled} is analogous to Theorem \ref{thm::convergence_inexact}, but accounting for the use of sample-based estimates. The sampling effect appears as the number $N$ of projected SGD steps performed at each time-step $t$. Corollary \ref{cor::near_optimality_approx} holds when the bias error $\epsilon_{\text{bias}} = O(\epsilon^4)$ and the estimation error $C_1^2 \eta^{-2} (N + 1)^{-1} = O(\epsilon^4)$. As $\eta=O(\epsilon^4)$, the latter holds when $N = \Omega(\epsilon^{-12})$, where $\Omega$ encapsulates problem-dependent constants. Therefore, the number of rollouts required to output an $\epsilon$-optimal policy is $tN= \Omega(\epsilon^{-18})$. While this result suggests potential improvement, it stands as the first sample-complexity result in the context of constrained MDPs with continuous spaces.

\section{Computational Experiments}

We test D-PGPD on constrained robot navigation and fluid control problems (Figure \ref{fig::fig_1}). See Appendix \ref{app::experiments} for more details.

\begin{figure}[t]
    \centering
    \includegraphics[width=14cm]{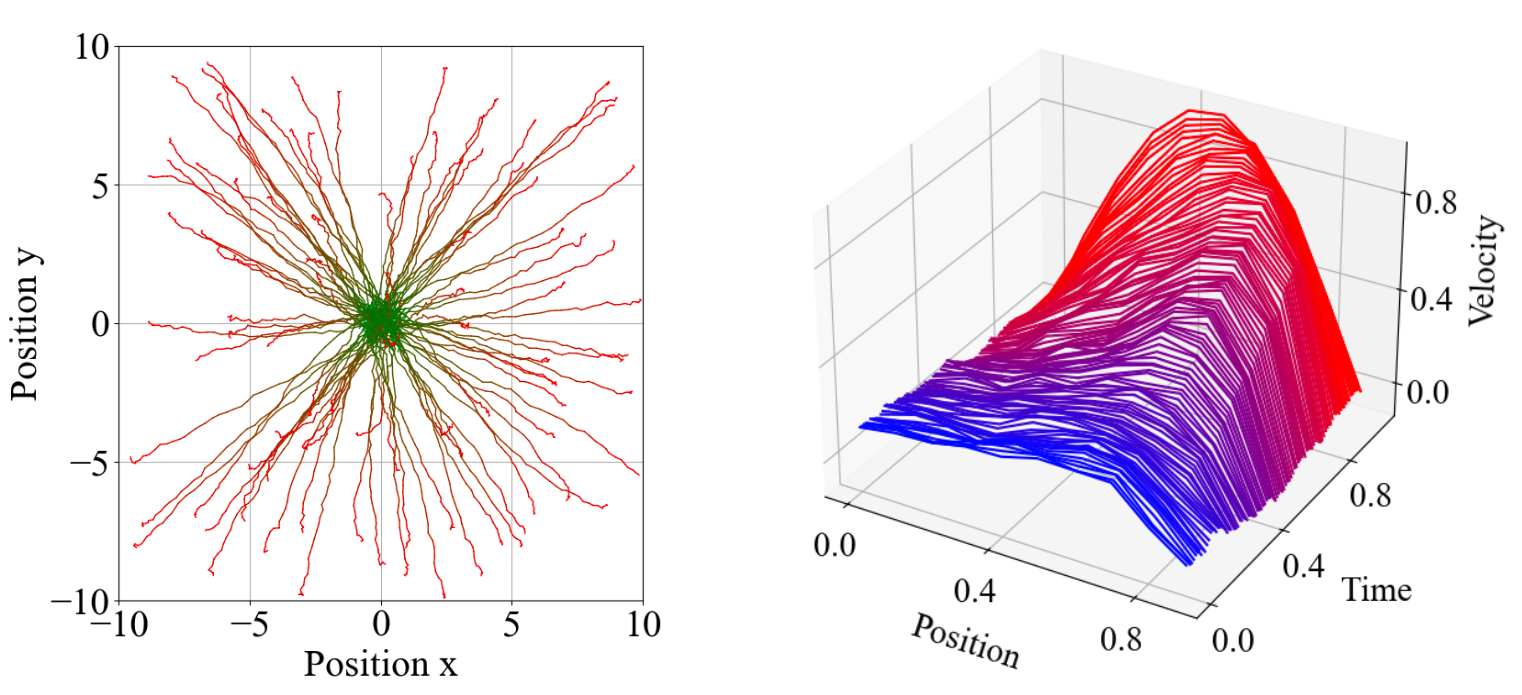}
    \caption{Navigation trajectories of an agent  (Left) and velocity profile of the fluid over time (Right).}
    \label{fig::fig_1}
\end{figure}

\noindent \textbf{Navigation Problem.}  An agent moves in a horizontal plane following some linearized dynamics with zero-mean Gaussian noise~\cite{shimizu2020motion, ma2022alternating}. We aim to drive the agent to the origin while constraining its velocity. When the dynamics are known and the reward function linearly weights quadratic penalties on position and action, this problem is an instance of the constrained linear regulation problem \cite{scokaert1998constrained}, which has closed-form solution. Hence, we can directly apply D-PGPD~\eqref{eq::D-PGPD} and AD-PGPD~\eqref{eq::AD-PGPD} (See Appendix \ref{app::experiments_navigation}). However, we consider the dynamics to be unknown, and we leverage our sample-based implementation of AD-PGPD. Furthermore, we use absolute value penalties instead of quadratic ones, as the latter can result in unstable behavior in sample-based scenarios \cite{engel2014line}. Conventional methods do not solve this problem straightforwardly. We compare our sample-based AD-PGPD with PGDual, a dual method with linear function approximation \cite{zhao2021primal, brunke2022safe}. Figure~\ref{fig::fig_2} shows the value functions of the policy iterates generated by AD-PGPD and PGDual over $40,000$ iterations. The oscillations of AD-PGPD are damped over time, and it converges to a feasible solution with low variance in reward and utility, indicating a near-deterministic behavior without constraint violation. In contrast, PGDual exhibits large variance, indicating that the resultant policy violates the constraint. Nevertheless, the final primal return performance of PGDual is similar to that of AD-PGPD on average.

\noindent \textbf{Fluid Velocity Control.} We apply D-PGPD~\eqref{eq::D-PGPD} to the control of the velocity of an incompressible Newtonian fluid described by the one-dimensional Burgers' equation \cite{baker2000nonlinear},  a non-linear stochastic control problem. The velocity profile of the fluid $z$ varies in a one-dimensional space $x \in [0, 1]$ and time $t \in [0, 1]$, and the goal is to drive the velocity of the fluid towards zero via the control action $a$, e.g., injection of polymers. By discretizing Burgers' equation, we have a non-linear system $s_{t+1} = B_0 s_t + B_1 a_t + B_2 s_t^2 + \omega_t$, where $s_t \in \reals^d$ is the state, $s_t^2$ is the element-wise squared state vector, \( a_t \in \reals^{d} \) is the control input, and $B_0$, $B_1$, $B_2 \in \reals^{d \times d}$ are matrices representing the discretized spatial operators and non-linear terms \cite{borggaard2020quadratic}. The details can be found in Appendix \ref{app::experiments_fluid}. We consider a reward function that penalizes the state quadratically, and a budget constraint that limits the total control action. We compare our sample-based AD-PGPD with PGDual. Figure~\ref{fig::fig_3} shows the value functions of the policy iterates generated by AD-PGPD and PGDual over $10,000$ iterations. The results are consistent with those of the navigation problem. The AD-PGPD algorithm successfully mitigates oscillations and converges to a feasible solution with low return variance. In contrast, although PGDual achieves similar objective value, it does not dampens oscillations, as indicated by the variance of the solution. This implies that PGDual violates the constraint in the last iterate.

\begin{figure}[t]
    \centering
    \includegraphics[width=14cm]{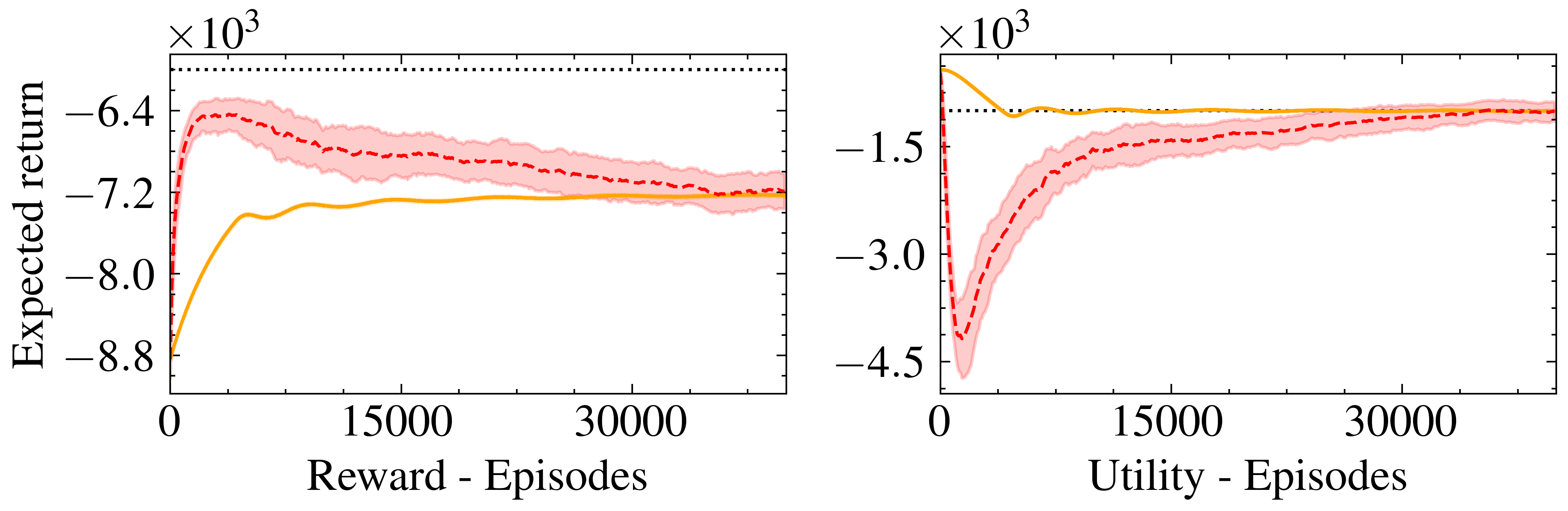}
    \caption{Avg. reward/utility value functions of AD-PGPD (\yellowline) and PGDual (\reddashedline ) iterates  in the navigation  problem.}
    \label{fig::fig_2}
\end{figure}

\begin{figure}[t]
    \centering
    \includegraphics[width=14cm]{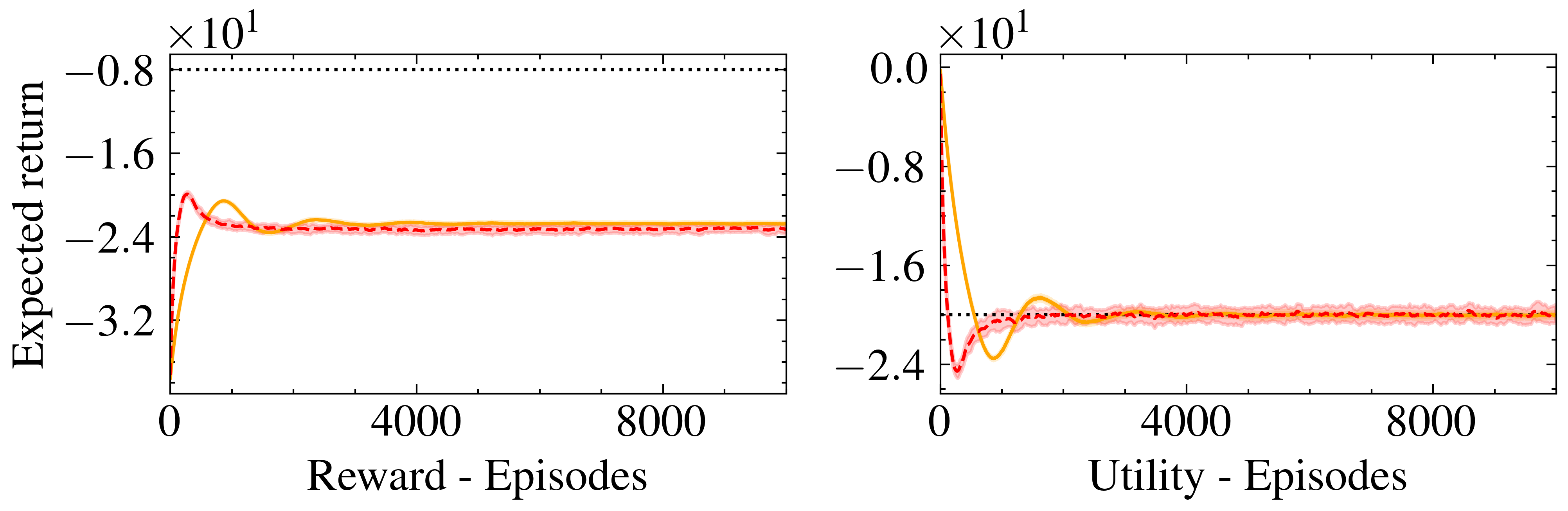}
    \caption{Avg. reward/utility value functions of AD-PGPD (\yellowline) and PGDual (\reddashedline ) iterates  in a fluid velocity control.}
    \label{fig::fig_3}
\end{figure}

\section{Concluding Remarks}

We have presented a deterministic policy gradient primal-dual method for continuous state-action constrained MDPs with non-asymptotic convergence guarantees. We have leveraged function approximation to make the implementation practical and developed a sample-based algorithm. Furthermore, we have shown the effectiveness of the proposed method in navigation and non-linear fluid constrained control problems. Our work opens new avenues for constrained MDPs with continuous state-action spaces, such as (i) minimal assumption on value functions; (ii) online exploration; (iii) optimal sample complexity; and (iv) general function approximation.

\section*{Acknowledgments}
We thank the anonymous reviewers for their insightful comments.
This work has been partially supported by the Spanish NSF (AEI/10.13039 /501100011033) grants TED2021-130347B-I00 and PID2022-136887NB-I00, and the Community of Madrid via the Ellis Madrid Unit and grant TEC-2024/COM-89.

\bibliography{arxiv}

\begin{thebibliography}{59}
\providecommand{\natexlab}[1]{#1}
\providecommand{\url}[1]{\texttt{#1}}
\expandafter\ifx\csname urlstyle\endcsname\relax
  \providecommand{\doi}[1]{doi: #1}\else
  \providecommand{\doi}{doi: \begingroup \urlstyle{rm}\Url}\fi

\bibitem[Achiam et~al.(2017)Achiam, Held, Tamar, and Abbeel]{achiam2017constrained}
Joshua Achiam, David Held, Aviv Tamar, and Pieter Abbeel.
\newblock Constrained policy optimization.
\newblock In \emph{International Conference on Machine Learning}, pages 22--31, 2017.

\bibitem[Agarwal et~al.(2021)Agarwal, Kakade, Lee, and Mahajan]{agarwal2021theory}
Alekh Agarwal, Sham~M Kakade, Jason~D Lee, and Gaurav Mahajan.
\newblock On the theory of policy gradient methods: Optimality, approximation, and distribution shift.
\newblock \emph{Journal of Machine Learning Research}, 22\penalty0 (98):\penalty0 1--76, 2021.

\bibitem[Altman(2021)]{altman2021constrained}
Eitan Altman.
\newblock \emph{Constrained Markov Decision Processes}.
\newblock Routledge, 2021.

\bibitem[Anderson and Moore(2007)]{anderson2007optimal}
Brian~DO Anderson and John~B Moore.
\newblock \emph{Optimal Control: Linear Quadratic Methods}.
\newblock Courier Corporation, 2007.

\bibitem[Auslender and Teboulle(2006)]{auslender2006asymptotic}
Alfred Auslender and Marc Teboulle.
\newblock \emph{Asymptotic Cones and Functions in Optimization and Variational Inequalities}.
\newblock Springer Science \& Business Media, 2006.

\bibitem[Baker et~al.(2000)Baker, Armaou, and Christofides]{baker2000nonlinear}
James Baker, Antonios Armaou, and Panagiotis~D Christofides.
\newblock Nonlinear control of incompressible fluid flow: Application to burgers' equation and 2d channel flow.
\newblock \emph{Journal of Mathematical Analysis and Applications}, 252\penalty0 (1):\penalty0 230--255, 2000.

\bibitem[Bemporad et~al.(2002)Bemporad, Morari, Dua, and Pistikopoulos]{bemporad2002explicit}
Alberto Bemporad, Manfred Morari, Vivek Dua, and Efstratios~N Pistikopoulos.
\newblock The explicit linear quadratic regulator for constrained systems.
\newblock \emph{Automatica}, 38\penalty0 (1):\penalty0 3--20, 2002.

\bibitem[Borggaard and Zietsman(2020)]{borggaard2020quadratic}
Jeff Borggaard and Lizette Zietsman.
\newblock The quadratic-quadratic regulator problem: Approximating feedback controls for quadratic-in-state nonlinear systems.
\newblock In \emph{American Control Conference}, pages 818--823, 2020.

\bibitem[Borkar(2005)]{borkar2005actor}
Vivek~S Borkar.
\newblock An actor-critic algorithm for constrained {M}arkov decision processes.
\newblock \emph{Systems \& control letters}, 54\penalty0 (3):\penalty0 207--213, 2005.

\bibitem[Brunke et~al.(2022)Brunke, Greeff, Hall, Yuan, Zhou, Panerati, and Schoellig]{brunke2022safe}
Lukas Brunke, Melissa Greeff, Adam~W Hall, Zhaocong Yuan, Siqi Zhou, Jacopo Panerati, and Angela~P Schoellig.
\newblock Safe learning in robotics: From learning-based control to safe reinforcement learning.
\newblock \emph{Annual Review of Control, Robotics, and Autonomous Systems}, 5:\penalty0 411--444, 2022.

\bibitem[Calvo-Fullana et~al.(2023)Calvo-Fullana, Paternain, Chamon, and Ribeiro]{calvo2023state}
Miguel Calvo-Fullana, Santiago Paternain, Luiz~FO Chamon, and Alejandro Ribeiro.
\newblock State augmented constrained reinforcement learning: Overcoming the limitations of learning with rewards.
\newblock \emph{IEEE Transactions on Automatic Control}, 2023.

\bibitem[Chow et~al.(2018)Chow, Ghavamzadeh, Janson, and Pavone]{chow2018risk}
Yinlam Chow, Mohammad Ghavamzadeh, Lucas Janson, and Marco Pavone.
\newblock Risk-constrained reinforcement learning with percentile risk criteria.
\newblock \emph{Journal of Machine Learning Research}, 18\penalty0 (167):\penalty0 1--51, 2018.

\bibitem[Cruz-Su{\'a}rez et~al.(2004)Cruz-Su{\'a}rez, Montes-de Oca, and Salem-Silva]{cruz2004conditions}
Daniel Cruz-Su{\'a}rez, Ra{\'u}l Montes-de Oca, and Francisco Salem-Silva.
\newblock Conditions for the uniqueness of optimal policies of discounted markov decision processes.
\newblock \emph{Mathematical Methods of Operations Research}, 60:\penalty0 415--436, 2004.

\bibitem[Ding et~al.(2022)Ding, Zhang, Duan, Ba{\c{s}}ar, and Jovanovi{\'c}]{ding2022convergence}
Dongsheng Ding, Kaiqing Zhang, Jiali Duan, Tamer Ba{\c{s}}ar, and Mihailo~R Jovanovi{\'c}.
\newblock Convergence and sample complexity of natural policy gradient primal-dual methods for constrained {MDP}s.
\newblock \emph{arXiv preprint arXiv:2206.02346}, 2022.

\bibitem[Ding et~al.(2024{\natexlab{a}})Ding, Huan, and Ribeiro]{ding2024resilient}
Dongsheng Ding, Zhengyan Huan, and Alejandro Ribeiro.
\newblock Resilient constrained reinforcement learning.
\newblock In \emph{International Conference on Artificial Intelligence and Statistics}, pages 3412--3420, 2024{\natexlab{a}}.

\bibitem[Ding et~al.(2024{\natexlab{b}})Ding, Wei, Zhang, and Ribeiro]{ding2024last}
Dongsheng Ding, Chen-Yu Wei, Kaiqing Zhang, and Alejandro Ribeiro.
\newblock Last-iterate convergent policy gradient primal-dual methods for constrained mdps.
\newblock \emph{Advances in Neural Information Processing Systems}, 36, 2024{\natexlab{b}}.

\bibitem[Dolgov(2005)]{dolgov2005stationary}
Dmitri Dolgov.
\newblock Stationary deterministic policies for constrained mdps with multiple rewards, costs, and discount factors.
\newblock In \emph{International Joint Conference on Artificial Intelligence}, 2005.

\bibitem[Efroni et~al.(2020)Efroni, Mannor, and Pirotta]{efroni2020exploration}
Yonathan Efroni, Shie Mannor, and Matteo Pirotta.
\newblock Exploration-exploitation in constrained {MDP}s.
\newblock \emph{arXiv preprint arXiv:2003.02189}, 2020.

\bibitem[Engel and Babu{\v{s}}ka(2014)]{engel2014line}
Jan-Maarten Engel and Robert Babu{\v{s}}ka.
\newblock On-line reinforcement learning for nonlinear motion control: Quadratic and non-quadratic reward functions.
\newblock \emph{IFAC Proceedings Volumes}, 47\penalty0 (3):\penalty0 7043--7048, 2014.

\bibitem[Feinberg(2000)]{feinberg2000constrained}
Eugene~A Feinberg.
\newblock Constrained discounted markov decision processes and hamiltonian cycles.
\newblock \emph{Mathematics of Operations Research}, 25\penalty0 (1):\penalty0 130--140, 2000.

\bibitem[Feinberg and Piunovskiy(2002)]{feinberg2002nonatomic}
Eugene~A Feinberg and Aleksey~B Piunovskiy.
\newblock Nonatomic total rewards markov decision processes with multiple criteria.
\newblock \emph{Journal of Mathematical Analysis and Applications}, 273\penalty0 (1):\penalty0 93--111, 2002.

\bibitem[Feinberg and Piunovskiy(2019)]{feinberg2019sufficiency}
Eugene~A Feinberg and Alexey Piunovskiy.
\newblock Sufficiency of deterministic policies for atomless discounted and uniformly absorbing mdps with multiple criteria.
\newblock \emph{SIAM Journal on Control and Optimization}, 57\penalty0 (1):\penalty0 163--191, 2019.

\bibitem[Gao et~al.(2023)Gao, Yan, Li, Wang, and Chen]{gao2023improved}
Xiaoshan Gao, Liang Yan, Zhijun Li, Gang Wang, and I-Ming Chen.
\newblock Improved deep deterministic policy gradient for dynamic obstacle avoidance of mobile robot.
\newblock \emph{IEEE Transactions on Systems, Man, and Cybernetics: Systems}, 53\penalty0 (6):\penalty0 3675--3682, 2023.

\bibitem[Garg et~al.(2020)Garg, Arabi, and Panagou]{garg2020prescribed}
Kunal Garg, Ehsan Arabi, and Dimitra Panagou.
\newblock Prescribed-time convergence with input constraints: A control lyapunov function based approach.
\newblock In \emph{American Control Conference}, pages 962--967, 2020.

\bibitem[Kakade et~al.(2020)Kakade, Krishnamurthy, Lowrey, Ohnishi, and Sun]{kakade2020information}
Sham Kakade, Akshay Krishnamurthy, Kendall Lowrey, Motoya Ohnishi, and Wen Sun.
\newblock Information theoretic regret bounds for online nonlinear control.
\newblock \emph{Advances in Neural Information Processing Systems}, 33:\penalty0 15312--15325, 2020.

\bibitem[Kumar et~al.(2020)Kumar, Kalogerias, Pappas, and Ribeiro]{kumar2020zeroth}
Harshat Kumar, Dionysios~S Kalogerias, George~J Pappas, and Alejandro Ribeiro.
\newblock Zeroth-order deterministic policy gradient.
\newblock \emph{arXiv preprint arXiv:2006.07314}, 2020.

\bibitem[Lacoste-Julien et~al.(2012)Lacoste-Julien, Schmidt, and Bach]{lacoste2012simpler}
Simon Lacoste-Julien, Mark Schmidt, and Francis Bach.
\newblock A simpler approach to obtaining an o (1/t) convergence rate for the projected stochastic subgradient method.
\newblock \emph{HAL}, 2012, 2012.

\bibitem[Lan(2020)]{lan2020first}
Guanghui Lan.
\newblock \emph{First-Order and Stochastic Optimization Methods for Machine Learning}, volume~1.
\newblock Springer, 2020.

\bibitem[Lan(2022)]{lan2022policy}
Guanghui Lan.
\newblock Policy optimization over general state and action spaces.
\newblock \emph{arXiv preprint arXiv:2211.16715}, 2022.

\bibitem[Li et~al.(2022)Li, Li, Li, and Qu]{li2022continuous}
Guofa Li, Shenglong Li, Shen Li, and Xingda Qu.
\newblock Continuous decision-making for autonomous driving at intersections using deep deterministic policy gradient.
\newblock \emph{IET Intelligent Transport Systems}, 16\penalty0 (12):\penalty0 1669--1681, 2022.

\bibitem[Li et~al.(2023)Li, Liu, Yang, Wang, and Wang]{li2023double}
Zihao Li, Boyi Liu, Zhuoran Yang, Zhaoran Wang, and Mengdi Wang.
\newblock Double duality: Variational primal-dual policy optimization for constrained reinforcement learning.
\newblock \emph{Journal of Machine Learning Research}, 24\penalty0 (385):\penalty0 1--43, 2023.

\bibitem[Lillicrap et~al.(2015)Lillicrap, Hunt, Pritzel, Heess, Erez, Tassa, Silver, and Wierstra]{lillicrap2015continuous}
Timothy~P Lillicrap, Jonathan~J Hunt, Alexander Pritzel, Nicolas Heess, Tom Erez, Yuval Tassa, David Silver, and Daan Wierstra.
\newblock Continuous control with deep reinforcement learning.
\newblock \emph{arXiv preprint arXiv:1509.02971}, 2015.

\bibitem[Lim and Zhou(1999)]{lim1999stochastic}
Andrew~EB Lim and Xun~Yu Zhou.
\newblock Stochastic optimal lqr control with integral quadratic constraints and indefinite control weights.
\newblock \emph{IEEE Transactions on Automatic Control}, 44\penalty0 (7):\penalty0 1359--1369, 1999.

\bibitem[Ma et~al.(2022)Ma, Cheng, Zhang, Tomizuka, and Lee]{ma2022alternating}
Jun Ma, Zilong Cheng, Xiaoxue Zhang, Masayoshi Tomizuka, and Tong~Heng Lee.
\newblock Alternating direction method of multipliers for constrained iterative lqr in autonomous driving.
\newblock \emph{IEEE Transactions on Intelligent Transportation Systems}, 23\penalty0 (12):\penalty0 23031--23042, 2022.

\bibitem[Mandhane et~al.(2022)Mandhane, Zhernov, Rauh, Gu, Wang, Xue, Shang, Pang, Claus, Chiang, et~al.]{mandhane2022muzero}
Amol Mandhane, Anton Zhernov, Maribeth Rauh, Chenjie Gu, Miaosen Wang, Flora Xue, Wendy Shang, Derek Pang, Rene Claus, Ching-Han Chiang, et~al.
\newblock Muzero with self-competition for rate control in vp9 video compression.
\newblock \emph{arXiv preprint arXiv:2202.06626}, 2022.

\bibitem[McMahan(2024)]{mcmahan2024deterministic}
Jeremy McMahan.
\newblock Deterministic policies for constrained reinforcement learning in polynomial-time.
\newblock \emph{arXiv preprint arXiv:2405.14183}, 2024.

\bibitem[Montenegro et~al.(2024{\natexlab{a}})Montenegro, Mussi, Metelli, and Papini]{montenegro2024learning}
Alessandro Montenegro, Marco Mussi, Alberto~Maria Metelli, and Matteo Papini.
\newblock Learning optimal deterministic policies with stochastic policy gradients.
\newblock In \emph{International Conference on Machine Learning}, 2024{\natexlab{a}}.

\bibitem[Montenegro et~al.(2024{\natexlab{b}})Montenegro, Mussi, Papini, and Metelli]{montenegro2024last}
Alessandro Montenegro, Marco Mussi, Matteo Papini, and Alberto~Maria Metelli.
\newblock Last-iterate global convergence of policy gradients for constrained reinforcement learning.
\newblock \emph{arXiv preprint arXiv:2407.10775}, 2024{\natexlab{b}}.

\bibitem[Montes-de Oca et~al.(2013)Montes-de Oca, Lemus-Rodr{\'\i}guez, and Salem-Silva]{montes2013nonuniqueness}
Ra{\'u}l Montes-de Oca, Enrique Lemus-Rodr{\'\i}guez, and Francisco~Sergio Salem-Silva.
\newblock Nonuniqueness versus uniqueness of optimal policies in convex discounted markov decision processes.
\newblock \emph{Journal of Applied Mathematics}, 2013, 2013.

\bibitem[Moskovitz et~al.(2023)Moskovitz, O'Donoghue, Veeriah, Flennerhag, Singh, and Zahavy]{moskovitz2023reload}
Ted Moskovitz, Brendan O'Donoghue, Vivek Veeriah, Sebastian Flennerhag, Satinder Singh, and Tom Zahavy.
\newblock Reload: Reinforcement learning with optimistic ascent-descent for last-iterate convergence in constrained mdps.
\newblock In \emph{International Conference on Machine Learning}, pages 25303--25336, 2023.

\bibitem[Paternain et~al.(2019)Paternain, Chamon, Calvo-Fullana, and Ribeiro]{paternain2019constrained}
Santiago Paternain, Luiz Chamon, Miguel Calvo-Fullana, and Alejandro Ribeiro.
\newblock Constrained reinforcement learning has zero duality gap.
\newblock \emph{Advances in Neural Information Processing Systems}, 32, 2019.

\bibitem[Paternain et~al.(2020)Paternain, Bazerque, Small, and Ribeiro]{paternain2020stochastic}
Santiago Paternain, Juan~Andr{\'e}s Bazerque, Austin Small, and Alejandro Ribeiro.
\newblock Stochastic policy gradient ascent in reproducing kernel hilbert spaces.
\newblock \emph{IEEE Transactions on Automatic Control}, 66\penalty0 (8):\penalty0 3429--3444, 2020.

\bibitem[Paternain et~al.(2022)Paternain, Calvo-Fullana, Chamon, and Ribeiro]{paternain2022safe}
Santiago Paternain, Miguel Calvo-Fullana, Luiz~FO Chamon, and Alejandro Ribeiro.
\newblock Safe policies for reinforcement learning via primal-dual methods.
\newblock \emph{IEEE Transactions on Automatic Control}, 68\penalty0 (3):\penalty0 1321--1336, 2022.

\bibitem[Posa et~al.(2016)Posa, Kuindersma, and Tedrake]{posa2016optimization}
Michael Posa, Scott Kuindersma, and Russ Tedrake.
\newblock Optimization and stabilization of trajectories for constrained dynamical systems.
\newblock In \emph{IEEE International Conference on Robotics and Automation}, pages 1366--1373, 2016.

\bibitem[Ross(1989)]{ross1989randomized}
Keith~W Ross.
\newblock Randomized and past-dependent policies for markov decision processes with multiple constraints.
\newblock \emph{Operations Research}, 37\penalty0 (3):\penalty0 474--477, 1989.

\bibitem[Ruszczy{\'n}ski(2006)]{ruszczynski2006nonlinear}
Andrzej~P Ruszczy{\'n}ski.
\newblock \emph{Nonlinear Optimization}, volume~13.
\newblock Princeton University Press, 2006.

\bibitem[Scokaert and Rawlings(1998)]{scokaert1998constrained}
Pierre~OM Scokaert and James~B Rawlings.
\newblock Constrained linear quadratic regulation.
\newblock \emph{IEEE Transactions on Automatic Control}, 43\penalty0 (8):\penalty0 1163--1169, 1998.

\bibitem[Sehnke et~al.(2010)Sehnke, Osendorfer, R{\"u}ckstie{\ss}, Graves, Peters, and Schmidhuber]{sehnke2010parameter}
Frank Sehnke, Christian Osendorfer, Thomas R{\"u}ckstie{\ss}, Alex Graves, Jan Peters, and J{\"u}rgen Schmidhuber.
\newblock Parameter-exploring policy gradients.
\newblock \emph{Neural Networks}, 23\penalty0 (4):\penalty0 551--559, 2010.

\bibitem[Shimizu et~al.(2020)Shimizu, Zhan, Sun, Chen, Kato, and Tomizuka]{shimizu2020motion}
Yutaka Shimizu, Wei Zhan, Liting Sun, Jianyu Chen, Shinpei Kato, and Masayoshi Tomizuka.
\newblock Motion planning for autonomous driving with extended constrained iterative lqr.
\newblock In \emph{Dynamic Systems and Control Conference}, volume 84270, page V001T12A001, 2020.

\bibitem[Silver et~al.(2014)Silver, Lever, Heess, Degris, Wierstra, and Riedmiller]{silver2014deterministic}
David Silver, Guy Lever, Nicolas Heess, Thomas Degris, Daan Wierstra, and Martin Riedmiller.
\newblock Deterministic policy gradient algorithms.
\newblock In \emph{International Conference on Machine Learning}, pages 387--395, 2014.

\bibitem[Singh et~al.(2022)Singh, Gupta, and Shroff]{singh2022learning}
Rahul Singh, Abhishek Gupta, and Ness~B Shroff.
\newblock Learning in constrained {M}arkov decision processes.
\newblock \emph{IEEE Transactions on Control of Network Systems}, 10\penalty0 (1):\penalty0 441--453, 2022.

\bibitem[Stathopoulos et~al.(2016)Stathopoulos, Korda, and Jones]{stathopoulos2016solving}
Giorgos Stathopoulos, Milan Korda, and Colin~N Jones.
\newblock Solving the infinite-horizon constrained lqr problem using accelerated dual proximal methods.
\newblock \emph{IEEE Transactions on Automatic Control}, 62\penalty0 (4):\penalty0 1752--1767, 2016.

\bibitem[Tessler et~al.(2018)Tessler, Mankowitz, and Mannor]{tessler2018reward}
Chen Tessler, Daniel~J Mankowitz, and Shie Mannor.
\newblock Reward constrained policy optimization.
\newblock \emph{arXiv preprint arXiv:1805.11074}, 2018.

\bibitem[Tsiamis et~al.(2020)Tsiamis, Kalogerias, Chamon, Ribeiro, and Pappas]{tsiamis2020risk}
Anastasios Tsiamis, Dionysios~S Kalogerias, Luiz~FO Chamon, Alejandro Ribeiro, and George~J Pappas.
\newblock Risk-constrained linear-quadratic regulators.
\newblock In \emph{IEEE Conference on Decision and Control}, pages 3040--3047. IEEE, 2020.

\bibitem[Zahavy et~al.(2021)Zahavy, O'Donoghue, Desjardins, and Singh]{zahavy2021reward}
Tom Zahavy, Brendan O'Donoghue, Guillaume Desjardins, and Satinder Singh.
\newblock Reward is enough for convex mdps.
\newblock \emph{Advances in Neural Information Processing Systems}, 34:\penalty0 25746--25759, 2021.

\bibitem[Zhang et~al.(2020)Zhang, Koppel, Zhu, and Basar]{zhang2020global}
Kaiqing Zhang, Alec Koppel, Hao Zhu, and Tamer Basar.
\newblock Global convergence of policy gradient methods to (almost) locally optimal policies.
\newblock \emph{SIAM Journal on Control and Optimization}, 58\penalty0 (6):\penalty0 3586--3612, 2020.

\bibitem[Zhao and You(2021)]{zhao2021primal}
Feiran Zhao and Keyou You.
\newblock Primal-dual learning for the model-free risk-constrained linear quadratic regulator.
\newblock In \emph{Learning for Dynamics and Control}, pages 702--714, 2021.

\bibitem[Zhao et~al.(2021)Zhao, You, and Ba{\c{s}}ar]{zhao2021infinite}
Feiran Zhao, Keyou You, and Tamer Ba{\c{s}}ar.
\newblock Infinite-horizon risk-constrained linear quadratic regulator with average cost.
\newblock In \emph{IEEE Conference on Decision and Control}, pages 390--395, 2021.

\bibitem[Zinkevich(2003)]{zinkevich2003online}
Martin Zinkevich.
\newblock Online convex programming and generalized infinitesimal gradient ascent.
\newblock In \emph{International Conference on Machine Learning}, pages 928--936, 2003.

\end{thebibliography}
\bibliographystyle{plainnat}

\newpage
\appendix

\section{Convexity of Value Images}

\label{app::convexity_value_image}

Strong duality holds for Problem \eqref{eq::P-CRL}, as established in Theorem \ref{thm::zero_duality_gap}, despite the non-convexity of the value functions $V_r(\pi)$ and $V_g(\pi)$ with respect to the policy $\pi$. This section aims to shed some light into the reasons behind this phenomenon. First, recall that the policy class $\Pi$ considered in this paper is restricted to deterministic policies. Importantly, this set is not necessarily convex. For any $\alpha \in [0, 1]$ and two deterministic policies $\pi, \pi' \in \Pi$, their convex combination
\begin{equation}
\nonumber
\pi_\alpha \;=\; \alpha \pi + (1 - \alpha) \pi',
\end{equation}
does not generally yield to a deterministic policy, i.e., $\pi_\alpha \notin \Pi$. Furthermore, consider the vector value function
\begin{equation}
\nonumber
V(\pi) \;\DefinedAs\; \begin{bmatrix}
V_r(\pi) \\[0.2cm]
V_g(\pi)
\end{bmatrix}
\end{equation}
associated with a given policy $\pi$. The value function $V(\pi)$ is non-convex in $\pi$. However, let us now focus on the set of all attainable vector value functions corresponding to deterministic policies, which defines the deterministic value image:
\begin{equation}
\nonumber
\mathcal{V}_D \;\DefinedAs\; \{ V(\pi) \;\vert\; \pi \in \Pi \}.
\end{equation}
Under the non-atomicity assumption (see Lemma \ref{lm::convex_span_lemma}), the set $\mathcal{V}_D$ is convex. This convexity implies that there exists a policy $\pi_\alpha \in \Pi$ such that
\begin{equation}
\nonumber
V(\pi_\alpha) \;=\; \alpha V(\pi) + (1-\alpha)V(\pi'),
\end{equation}
even though $\pi_\alpha$ is not a convex combination of $\pi$ and $\pi'$, and the vector value function $V$ remains non-convex.

Lastly, we consider the value image $\mathcal{V}_T$ for all policies. Under the non-atomicity assumption, these two sets, $\mathcal{V}_D$ and $\mathcal{V}_T$, are equivalent (see Lemma \ref{lm::sufficiency_deterministic_lemma}). Consequently, the optimal policy is contained within $\mathcal{V}_D$. Therefore, restricting the search space to deterministic policies is justified in the context of non-atomic MDPs. The convexity of the deterministic value image and its equivalence with the value image for all policies are illustrated in Figure~\ref{fig::fig_11}.

\begin{figure}[h]
    \centering
    \includegraphics[width=7cm]{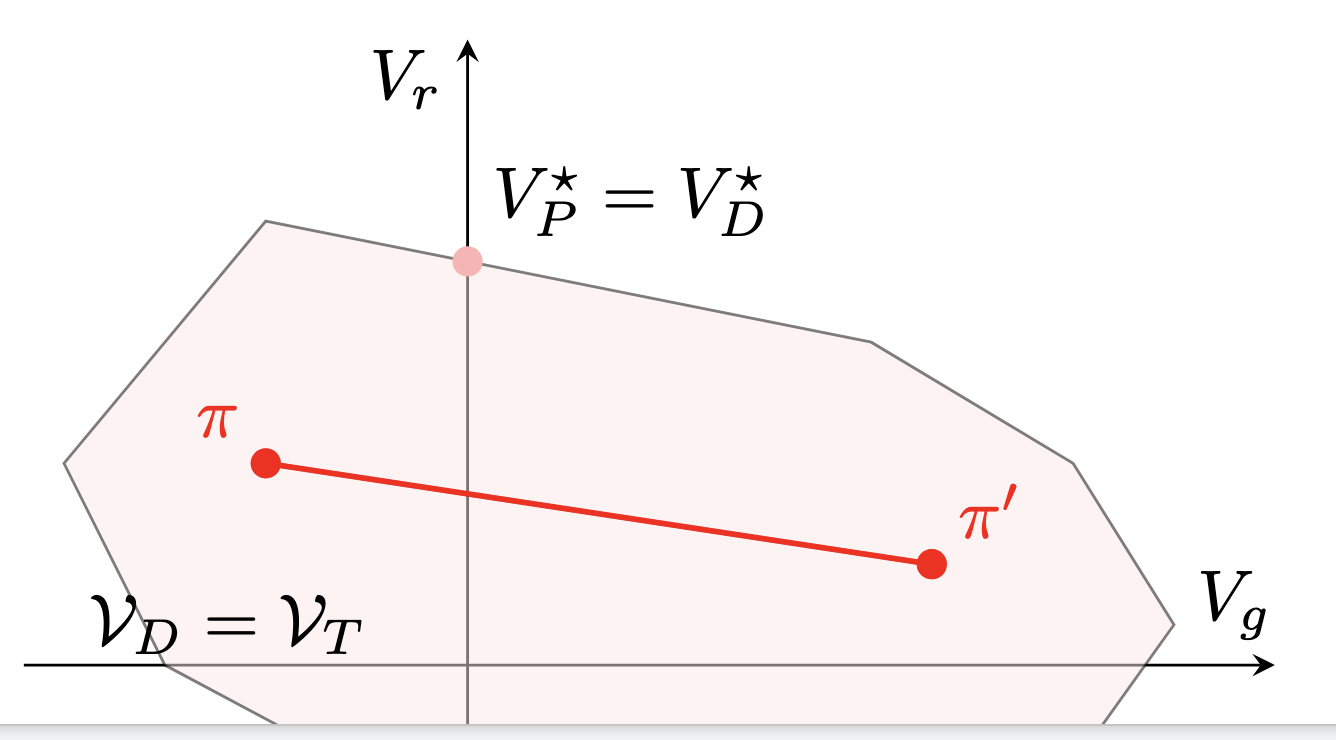}
    \caption{The deterministic value image $\ccalV_D$ is convex and equivalent to the the value image for all policies: $\ccalV_T$. Furthermore, constrained RL has zero duality gap in the deterministic policy space, i.e., $V_P^\star=V_D^\star$.}
    \label{fig::fig_11}
\end{figure}

\section{Supporting Lemmas}

\label{app::supporting_lemmas}

\begin{lemma}[Discounted and uniformly absorbing MDP equivalence]
    \label{lm::unif_mdp_lemma}
    A non-atomic discounted MDP can be equivalently represented as a non-atomic uniformly absorbing MDP.
\end{lemma}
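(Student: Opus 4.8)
The plan is to use the standard discount-to-absorption reduction: convert the discounted MDP into an equivalent total-reward MDP in which, at every step, the process is terminated with probability $1-\gamma$ and sent to an absorbing (cemetery) state, and otherwise evolves according to the original dynamics. Because the per-step termination probability is the constant $1-\gamma > 0$, the resulting process is absorbed at a uniform geometric rate, which is precisely the uniformly-absorbing property needed in order to later invoke the sufficiency results of \cite{feinberg2002nonatomic, feinberg2019sufficiency} in Lemmas \ref{lm::convex_span_lemma} and \ref{lm::sufficiency_deterministic_lemma}.

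Concretely, first I would augment the state space to $S^+ \DefinedAs S \cup \{\Delta\}$, where $\Delta \notin S$ is an absorbing state with a single available action and zero reward, and define the new transition kernel $\tilde{p}$ by $\tilde{p}(B \,|\, s, a) = \gamma\, p(B \,|\, s, a)$ for every measurable $B \subseteq S$ and $\tilde{p}(\{\Delta\} \,|\, s, a) = 1 - \gamma$, with $\Delta$ self-looping. The initial distribution $\rho$ is kept unchanged on $S$. Any deterministic policy $\pi$ of the original MDP lifts to a deterministic policy on $S^+$ (fix the unique action at $\Delta$), and conversely, so the two deterministic policy classes are in bijection; this is what lets the word ``equivalently'' be read at the level of the optimization problem and not just of a single transition kernel.

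Next I would verify value equivalence under the total-reward criterion. Under $\tilde{p}$, the probability that a trajectory is still in $S$ (not yet absorbed) at time $t$ equals $\gamma^t$, uniformly over states and actions, so the expected reward collected at step $t$ in the absorbing MDP equals $\gamma^t\,\E_\pi[r(s_t, a_t)]$. Summing over $t$ shows that the total-reward value of $\pi$ in the absorbing MDP coincides with the discounted value $V_r^\pi$ in the original MDP, and likewise for $u$ and $g$. The same computation delivers the uniform absorption guarantee: the survival probability $\gamma^t$ decays to zero at a rate independent of the starting state-action pair, which is exactly the definition of uniform absorption.

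Finally I would check that non-atomicity is preserved. The restriction of $\tilde{p}(\cdot \,|\, s, a)$ to $S$ is the scaled measure $\gamma\, p(\cdot \,|\, s, a)$, which is atomless on $S$ whenever $p(\cdot \,|\, s, a)$ is, and $\rho$ is unchanged; hence Assumption \ref{as::non-atomicity} continues to hold on the live state space $S$. The main obstacle, and the only delicate point, is the bookkeeping around $\Delta$: it is a genuine atom of $\tilde{p}$, so I must make explicit that the non-atomicity requirement is imposed only on the continuous component $S$ (as in Assumption \ref{as::non-atomicity}, which quantifies over $s, s' \in S$) and that $\Delta$, carrying zero reward and being terminal, does not interfere with either the convexity argument for the value image or the sufficiency theorem applied downstream.
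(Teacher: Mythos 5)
Your proof is correct and is essentially the same argument as the paper's: the paper proves Lemma~\ref{lm::unif_mdp_lemma} simply by citing Lemma~3.12 of \citet{feinberg2019sufficiency}, and your construction --- scaling the kernel by $\gamma$, routing mass $1-\gamma$ to a zero-reward cemetery state, and verifying value equivalence, uniform absorption via the $\gamma^t$ survival probability, and atomlessness of $\gamma\, p(\cdot\,|\,s,a)$ on the live component $S$ --- is precisely the standard discount-to-absorption reduction underlying that cited lemma. Your explicit bookkeeping around the atom at $\Delta$ is a point the paper leaves implicit, but it is handled exactly as in the cited framework, where non-atomicity is imposed only off the absorbing set.
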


\begin{proof}
    See~\citet[Lemma 3.12]{feinberg2019sufficiency}.
\end{proof}

\begin{lemma}[Convexity of the deterministic value image]
    \label{lm::convex_span_lemma}
    Consider the vector value function $V(\pi)\DefinedAs[V_r(\pi), V_g(\pi)]^\Tr$ for an arbitrary deterministic policy $\pi$. Let the span of value functions associated with the class of deterministic policies $\Pi$ be defined as
    $\ccalV_D \DefinedAs \{ V(\pi) : \pi \in \Pi \}$. 
    
    Then, the set $\ccalV_D$ is convex for a non-atomic uniformly absorbing MDP.
\end{lemma}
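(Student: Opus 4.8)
The plan is to represent each value vector $V(\pi)$ as a pair of linear functionals of a discounted occupation measure, and then to derandomize a mixed occupation measure into a single deterministic policy using a Lyapunov / Dvoretzky--Wald--Wolfowitz purification argument that is valid precisely because the MDP is non-atomic.

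First I would pass to the occupation-measure representation. Invoking Lemma~\ref{lm::unif_mdp_lemma}, I work in the uniformly absorbing MDP, where each step is absorbed with a fixed probability so that the discounted horizon becomes an almost surely finite stopping time. For a deterministic policy $\pi \in \Pi$, define the occupation measure $m_\pi$ on $S \times A$ by $m_\pi(B) \DefinedAs \sum_{t \ge 0} \gamma^t \Pr((s_t, a_t) \in B \mid \pi,\, s_0 \sim \rho)$, which is finite thanks to uniform absorption. Because $V_r(\pi) = \int r\,dm_\pi$ and $V_g(\pi) = \int g\,dm_\pi$, the value vector is the image of $m_\pi$ under the fixed linear map $m \mapsto (\int r\,dm,\, \int g\,dm)$. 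Thus $\ccalV_D$ is the image under a linear map of the set of deterministic occupation measures, and it suffices to produce, for any $\pi_0, \pi_1 \in \Pi$ and $\alpha \in [0,1]$, a deterministic policy $\pi_\alpha \in \Pi$ whose value vector equals $\alpha V(\pi_0) + (1-\alpha) V(\pi_1)$.

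Second, I would realize the convex combination by a randomized policy. The mixed measure $m_\alpha \DefinedAs \alpha\, m_{\pi_0} + (1-\alpha)\, m_{\pi_1}$ still satisfies the Bellman flow (balance) equations, so by the standard correspondence between occupation measures and stationary policies it is the occupation measure of some randomized stationary policy $\bar{\pi}_\alpha$; by linearity this policy attains value vector exactly $\alpha V(\pi_0) + (1-\alpha) V(\pi_1)$. This already yields convexity of the value image $\ccalV_T$ over all policies. The crux is then to purify $\bar{\pi}_\alpha$ into a deterministic policy without changing its value vector: invoking non-atomicity (Assumption~\ref{as::non-atomicity}), since $\rho$ and every $p(\cdot \mid s, a)$ are atomless, the stage-wise state-occupation measures are atomless, and the Lyapunov convexity theorem (equivalently, the Dvoretzky--Wald--Wolfowitz purification theorem) guarantees that the randomized action selection of $\bar{\pi}_\alpha$ can be replaced by a measurable deterministic selection $\pi_\alpha(s) \in A$ preserving the aggregate occupation measure, and in particular the integrals $\int r\,dm$ and $\int g\,dm$. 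This is exactly the sufficiency machinery for non-atomic uniformly absorbing MDPs developed in \cite{feinberg2019sufficiency}. The resulting $\pi_\alpha \in \Pi$ is deterministic with $V(\pi_\alpha) = \alpha V(\pi_0) + (1-\alpha) V(\pi_1)$, so $\ccalV_D$ is convex.

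I expect the main obstacle to be the purification step: one must preserve the \emph{entire} forward occupation measure rather than merely the two scalar performance values, since the dynamics themselves depend on the induced distribution, and one must verify that the atomless hypothesis transfers from $\rho$ and the transition kernels to the stage-wise occupation measures so that the Lyapunov/DWW theorem genuinely applies. The uniformly absorbing reduction of Lemma~\ref{lm::unif_mdp_lemma} is what keeps these measures finite and the associated vector measure non-atomic, which is precisely the hypothesis under which the convexity theorem delivers a deterministic realizer.
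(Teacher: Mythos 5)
Your proposal is correct and follows essentially the same route as the paper: the paper's entire proof of this lemma is a citation to \citet[Corollary 3.10]{feinberg2019sufficiency}, and the argument you reconstruct --- occupation-measure representation, realizing the convex combination by a randomized stationary policy, then purifying it into a deterministic policy via Lyapunov/Dvoretzky--Wald--Wolfowitz-type arguments enabled by atomlessness of the stage-wise marginals in the uniformly absorbing reduction --- is precisely the machinery behind that cited result. You also correctly identify the genuine technical crux (purification must respect the dynamics, not just the two scalar functionals), which is exactly what the Feinberg--Piunovskiy development handles.
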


\begin{proof}
    See~\citet[Corollary 3.10]{feinberg2019sufficiency}.
\end{proof}

\begin{lemma}[Sufficiency of deterministic policies]
    \label{lm::sufficiency_deterministic_lemma}

    Consider the vector value function $V(\pi)\DefinedAs[V_r(\pi), V_g(\pi)]^\Tr$ for an arbitrary policy $\pi$. Let the span of value functions associated with the class of deterministic policies $\Pi$ be defined as
    $\ccalV_D \DefinedAs \{ V(\pi) : \pi \in \Pi \}$. Similarly, let the span of value functions associated with the general class of policies $\Pi_T$ be defined as  $\ccalV_T \DefinedAs \{ V(\pi)$ : $\pi \in \Pi_T \}$. 
    
    Then, for a non-atomic uniformly absorbing MDP, it holds that $\ccalV_D = \ccalV_T$.
\end{lemma}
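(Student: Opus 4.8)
The plan is to prove the two inclusions separately. The inclusion $\ccalV_D \subseteq \ccalV_T$ is immediate, since every deterministic policy belongs to the general class $\Pi_T$, so $\{V(\pi):\pi\in\Pi\} \subseteq \{V(\pi):\pi\in\Pi_T\}$. The substance is the reverse inclusion $\ccalV_T \subseteq \ccalV_D$, and I would establish it through a support-function argument that reduces the two-objective claim to a family of single-objective (scalar-reward) MDP optimality statements, using the convexity of $\ccalV_D$ supplied by Lemma~\ref{lm::convex_span_lemma} as the decisive non-atomic ingredient.

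First I would record the structural facts. Both sets lie in $\reals^2$ and are bounded, since $V_r(\pi)$ and $V_g(\pi)$ range over compact intervals by boundedness of $r$ and $g$. The set $\ccalV_T$ is convex: given $\pi_1,\pi_2 \in \Pi_T$ and $\alpha \in [0,1]$, the policy that commits at time $0$ to following $\pi_1$ with probability $\alpha$ and $\pi_2$ with probability $1-\alpha$ belongs to $\Pi_T$ and, by linearity of expectation, realizes the value vector $\alpha V(\pi_1) + (1-\alpha)V(\pi_2)$. The set $\ccalV_D$ is convex by Lemma~\ref{lm::convex_span_lemma}, which is exactly where the non-atomicity hypothesis is consumed. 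I would additionally need $\ccalV_D$ to be closed, which I expect to extract from the same non-atomic machinery of the cited construction.

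The core step is to show that the support functions of these two convex sets coincide in every direction. For $c=(c_1,c_2)\in\reals^2$ we have $\langle c, V(\pi)\rangle = c_1 V_r(\pi) + c_2 V_g(\pi) = V_w(\pi)$ with the bounded measurable scalar reward $w \DefinedAs c_1 r + c_2 g$, using linearity of the value in the reward function. Hence $h_{\ccalV_T}(c)=\sup_{\pi\in\Pi_T} V_w(\pi)$ and $h_{\ccalV_D}(c)=\sup_{\pi\in\Pi} V_w(\pi)$ are the optimal values of a single-criterion MDP taken over all policies and over deterministic stationary policies, respectively. Standard single-criterion MDP optimality theory guarantees these agree: for every $\epsilon>0$ there is a deterministic stationary policy that is $\epsilon$-optimal for the scalar reward $w$, so $h_{\ccalV_D}(c)=h_{\ccalV_T}(c)$ for all $c$. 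Since $\ccalV_D$ is closed and convex, it equals the intersection of its supporting half-spaces $\{v:\langle c,v\rangle \le h_{\ccalV_D}(c)\}$; every $v\in\ccalV_T$ obeys $\langle c,v\rangle \le h_{\ccalV_T}(c)=h_{\ccalV_D}(c)$ for all $c$, hence $v\in\ccalV_D$. Combined with the trivial inclusion this yields $\ccalV_D=\ccalV_T$.

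The main obstacle I anticipate is technical rather than conceptual: justifying the closedness of $\ccalV_D$ in continuous state-action spaces, and making the scalar equality $\sup_{\Pi_T}V_w = \sup_{\Pi}V_w$ rigorous via measurable selection and $\epsilon$-optimal deterministic policies for rewards $w$ of arbitrary sign. A conceptual point worth flagging is that the scalar reduction itself does \emph{not} use non-atomicity, since deterministic policies are optimal for any single criterion regardless; it is therefore essential that non-atomicity enter through the convexity (and closedness) of $\ccalV_D$, as otherwise one would recover only $\ccalV_T \subseteq \overline{\mathrm{conv}}(\ccalV_D)$ rather than $\ccalV_T \subseteq \ccalV_D$. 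A more hands-on alternative would bypass support functions and de-randomize a given stationary randomized policy directly via Lyapunov's convexity theorem for non-atomic vector measures, which is the mechanism underlying the cited result of Feinberg.
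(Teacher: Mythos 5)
There is a genuine gap in your argument, and it sits exactly where you flagged it: the closedness of $\ccalV_D$. Equality of support functions between two convex sets determines only their closures, so your half-space argument delivers $\ccalV_T \subseteq \overline{\ccalV_D}$ and nothing stronger; if $\ccalV_D$ were, say, an open disk and $\ccalV_T$ its closed hull, every step of your argument would go through while the conclusion fails. Crucially, closedness of $\ccalV_D$ is \emph{not} available under the paper's standing assumptions (bounded Borel-measurable rewards, no continuity of the kernel $p$ in the action, no compactness-continuity conditions): already in a one-step example with $A=[0,1]$, a reward $r(a)=a$ for $a<1$ and $r(1)=0$, and an atomless dummy state process, the attainable value set is a half-open interval. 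Non-atomicity, which is where you propose to "extract" closedness, is orthogonal to this failure. So the support-function reduction cannot be completed from the ingredients you have, and the convexity of $\ccalV_D$ (Lemma~\ref{lm::convex_span_lemma}) plus single-criterion sufficiency of deterministic policies genuinely does not imply the lemma.

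It is worth noting what the paper actually does: its proof of this lemma is a direct citation to \citet[Theorem 3.8]{feinberg2019sufficiency}, and the mechanism there is precisely the route you relegated to a closing aside — an exact de-randomization (purification) of an arbitrary policy via Lyapunov-type convexity of non-atomic vector measures, in the spirit of the Dvoretzky--Wald--Wolfowitz theorem. That argument is pointwise: given any $\pi \in \Pi_T$, it produces a deterministic policy whose \emph{entire} two-dimensional value vector $[\,V_r,\ V_g\,]^\Tr$ coincides with that of $\pi$, so $\ccalV_T \subseteq \ccalV_D$ holds element by element, with no appeal to closedness or to supporting half-spaces, and with non-atomicity consumed exactly where it is needed. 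Your "hands-on alternative" is therefore not an alternative but the necessary main argument; if you want to salvage the support-function route, you would have to import compactness-continuity hypotheses (compact action sets, weakly continuous transitions) strong enough to make $\ccalV_D$ compact, which the paper does not assume. A minor secondary caveat: your scalar equality $\sup_{\Pi_T}V_w=\sup_{\Pi}V_w$ in general Borel models also needs universally measurable $\epsilon$-optimal selectors, but that is standard and not the blocking issue.
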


\begin{proof}
    See~\citet[Theorem 3.8]{feinberg2019sufficiency}.
\end{proof}

\begin{lemma}[Quadratic growth Lemma]
    \label{lm::qg_lemma}
    Let $f$ be a $\mu$-strongly-concave function and let $x^\star$ denote its maximizer. Then
    \begin{equation}
        f(x) \;\leq \;
        f(x^\star) - \mu || x - x^\star||^2.
    \end{equation}
\end{lemma}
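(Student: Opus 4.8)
The plan is to prove this as a textbook consequence of the gradient (equivalently, first-order) characterization of strong concavity, combined with the optimality condition at the maximizer $x^\star$. First I would recall that $\mu$-strong concavity of $f$ is equivalent to the inequality
\[
    f(y) \;\leq\; f(x) + \langle \nabla f(x), y - x \rangle - \mu \, \| y - x \|^2
\]
holding for all $x$, $y$ in the domain (this is the normalization that produces the stated constant $\mu$ rather than $\mu/2$; if $f$ is non-smooth one replaces $\nabla f(x)$ by any supergradient). Specializing this with the base point $x = x^\star$ and the query point $y = x$ gives
\[
    f(x) \;\leq\; f(x^\star) + \langle \nabla f(x^\star), x - x^\star \rangle - \mu \, \| x - x^\star \|^2 .
\]

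The remaining step is to dispose of the inner-product term. If $x^\star$ is an interior (unconstrained) maximizer, then $\nabla f(x^\star) = 0$ and the term vanishes identically, yielding the claim. More generally, if $x^\star$ maximizes $f$ over a convex feasible set $C$, the first-order optimality (variational-inequality) condition states that $\langle \nabla f(x^\star), x - x^\star \rangle \leq 0$ for every $x \in C$; hence the inner-product term is nonpositive and can be dropped while preserving the inequality direction. In both cases we conclude
\[
    f(x) \;\leq\; f(x^\star) - \mu \, \| x - x^\star \|^2 ,
\]
which is exactly the asserted quadratic-growth bound.

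Since this is an elementary convex-analysis fact, there is no substantive obstacle: the only points requiring care are (i) matching the convention for the strong-concavity constant so that the coefficient is $\mu$ and not $\mu/2$, and (ii) justifying that the gradient term is nonpositive at the maximizer, which is where the optimality condition (either $\nabla f(x^\star)=0$ in the interior case or the normal-cone/variational inequality in the constrained case) does the work. I would present the constrained version, since the downstream use of this lemma in the convergence proofs applies it to the regularized advantage maximization over the action set $A$, where the maximizer need not lie in the interior.
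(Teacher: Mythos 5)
Your proof is correct, and the paper itself offers no argument for this lemma at all---it is proved by citation only (Lan, Lemma 3.5)---so your derivation is exactly the standard first-order argument that the citation stands in for: strong concavity evaluated at the base point $x^\star$, then the gradient term killed by optimality. Your two points of care are well placed: the coefficient $\mu$ in the paper's statement (rather than the usual $\mu/2$) does force the normalization of strong concavity you adopt, and the constrained variational-inequality form $\langle \nabla f(x^\star), x - x^\star \rangle \leq 0$ is indeed the version the paper needs downstream, since in the convergence proofs the lemma is applied to $f(a) = A^{\pi_t}_{\lambda_t,\tau}(s,a) - \frac{1}{2\eta}\|\pi_t(s)-a\|^2$ with maximizer $\pi_{t+1}(s) = \argmax_{a \in A}$, which need not be interior to the bounded action set $A$.
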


\begin{proof}
    See~\citet[Lemma 3.5]{lan2020first}.
\end{proof}

\begin{lemma}[Standard descent Lemma]
    \label{lm::descent_lemma}
    Let $f$ be a convex differentiable function, and let $x^\star$ denote the minimizer of $f$. Consider the following sequence
    \begin{equation}
        \nonumber
        x_{t+1} \;=\; x_t - \eta \nabla f(x_t),
    \end{equation}
    where $\eta$ is an step-size. Then, the following bound holds
    \begin{align}
        (x_t - x^\star) \nabla f(x_t) 
        \; \leq \; 
        \frac{1}{2 \eta} \left( (x_t - x^\star)^2 - (x_{t+1} - x^\star)^2\right) + \eta^2 \| \nabla f(x_t) \|^2.
    \end{align}
\end{lemma}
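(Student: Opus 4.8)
The plan is to establish the bound by a direct expansion of the squared distance to $x^\star$ along the gradient step---the standard ``law of cosines'' argument for gradient descent. A useful observation up front is that the inequality is purely algebraic: it does not actually invoke convexity of $f$ or the fact that $x^\star$ is a minimizer, so these hypotheses merely fix the context in which the lemma is later applied (the scalar dual update in~\eqref{eq::dual_update}, where $x$ plays the role of $\lambda$ and $(x_t - x^\star)^2$, $\|\nabla f(x_t)\|^2$ are ordinary squares).

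First I would substitute the update rule $x_{t+1} = x_t - \eta \nabla f(x_t)$ into the squared error at time $t+1$ and expand the square,
\begin{equation}
    \nonumber
    (x_{t+1} - x^\star)^2
    \;=\; (x_t - x^\star)^2
    - 2\eta\,(x_t - x^\star)\nabla f(x_t)
    + \eta^2 \|\nabla f(x_t)\|^2 .
\end{equation}
Next I would isolate the cross term $(x_t - x^\star)\nabla f(x_t)$ and divide through by $2\eta > 0$, which yields the exact identity
\begin{equation}
    \nonumber
    (x_t - x^\star)\nabla f(x_t)
    \;=\; \frac{1}{2\eta}\Big( (x_t - x^\star)^2 - (x_{t+1} - x^\star)^2 \Big)
    + \frac{\eta}{2}\,\|\nabla f(x_t)\|^2 .
\end{equation}
This already matches the claimed bound term-for-term except for the coefficient on the gradient-norm term.

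The only step requiring care is therefore reconciling this exact coefficient with the $\eta^2$ appearing in the statement: the expansion delivers $\frac{\eta}{2}\|\nabla f(x_t)\|^2$, and replacing it by $\eta^2\|\nabla f(x_t)\|^2$ is valid as an upper bound precisely when $\eta \ge \frac{1}{2}$. I expect this to be the main---indeed the only---subtle point, since everything else is a one-line completion of the square; in the regime of small step-sizes used elsewhere in the paper one should keep the tighter $\frac{\eta}{2}$ coefficient (or read the stated $\eta^2$ as a typographical stand-in for $\frac{\eta}{2}$, obtained by dividing the expanded gradient term $\eta^2\|\nabla f(x_t)\|^2$ by $2\eta$). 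Finally, I note that the scalar argument extends verbatim to $\reals^d$ upon reading every square as the squared Euclidean norm, so no separate vector-valued argument is needed.
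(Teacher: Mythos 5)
Your argument is correct and is exactly the standard expansion underlying the cited result: the paper gives no in-line proof of this lemma, deferring entirely to \citet[Theorem 1]{zinkevich2003online}, whose proof is the same law-of-cosines computation you carry out. Your two side observations are also right and worth keeping. First, the inequality is indeed a pure algebraic identity in disguise — neither convexity of $f$ nor optimality of $x^\star$ is used, and this matters for how the paper actually invokes the lemma: in the proofs of Theorems \ref{thm::convergence_exact} and \ref{thm::convergence_inexact} the comparator $\lambda^\star_{\tau,t}$ is \emph{not} the minimizer of the linear function $\lambda \mapsto \lambda(V_g(\pi_t)+\tau\lambda_t)$ being descended, so the lemma is only usable because the minimizer hypothesis is vacuous. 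Second, your diagnosis of the coefficient is correct: the exact identity yields $\frac{\eta}{2}\|\nabla f(x_t)\|^2$, the $\eta^2$ in the statement is a typo (as an upper bound it would require $\eta \ge \frac{1}{2}$, contradicting the small-step regime of the paper), and the paper itself corroborates this — in \eqref{eq::pd_exact_2c} and \eqref{eq::pd_inexact_2c} the lemma is applied with the term $\frac{1}{2}\eta C_D^2$, i.e., with the correct $\frac{\eta}{2}$ coefficient. The one point your write-up leaves implicit is that the dual update \eqref{eq::dual_update} is a \emph{projected} step onto $\Lambda$, so in the application your exact identity degrades to an inequality; this is harmless because $\lambda^\star_{\tau,t} \in \Lambda$ and the projection onto a convex set is non-expansive, $\|\ccalP_\Lambda(y) - \lambda^\star_{\tau,t}\| \le \|y - \lambda^\star_{\tau,t}\|$, but it is the one extra line needed to connect the lemma as stated to its use in the convergence proofs.
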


\begin{proof}
    See~\citet[Theorem 1]{zinkevich2003online}
\end{proof}

\begin{lemma}[Performance difference Lemma]
    \label{lm::pd_lemma}
    Consider the following regularized reward function $r_\tau(s_t, a_t) - \frac{\tau}{2} \| a_t \|^2$, and the associated value functions
    \begin{align}
        \nonumber
        V^\pi (s) &\;=\; \E_\pi \left[ \sum_{t=0}^\infty \gamma^t r_\tau(s_t, a_t) \; | \; s_0 = s \right] \\
        \nonumber
        Q^\pi (s, a) &\;=\; \E_\pi \left[ \sum_{t=0}^\infty \gamma^t r_\tau(s_t, a_t) \; | \; s_0 = s, a_0=a \right]. 
    \end{align}
    Consider the regularized advantage function $A^\pi (s, a)= Q^\pi(s, a) - V^\pi(s) - \frac{\tau}{2} \| a \|^2 + \frac{\tau}{2}\pi(s)^2$. Let $\pi$ and $\pi'$ be two feasible policies and $\rho$ be the initial state distribution. Then
    \begin{equation}
        V(\pi') - V(\pi) \;=\; \frac{1}{1 - \gamma} \E_{s \sim d_\rho^{\pi'}} \left[ A^\pi(s, \pi'(s)) \right].
    \end{equation}
\end{lemma}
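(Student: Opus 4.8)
The plan is to prove the identity by the classical telescoping (``shadowing'') argument of Kakade and Langford, adapted to the quadratic-regularized reward. Write $\tilde{r}(s,a) \DefinedAs r_\tau(s,a) - \frac{\tau}{2}\|a\|^2$ for the full regularized reward and let $V(\pi)$ denote its expected value under the initial distribution $\rho$. First I would unfold $V(\pi')$ along a trajectory $\{s_t,a_t\}$ generated by $\pi'$ from $s_0\sim\rho$, so that $V(\pi') = \E_{\pi'}[\sum_{t=0}^\infty \gamma^t \tilde{r}(s_t,a_t)]$ with $a_t=\pi'(s_t)$. The key move is to subtract $V(\pi)$ written in telescoped form: for any fixed trajectory the series $\sum_{t=0}^\infty(\gamma^t V^\pi(s_t) - \gamma^{t+1}V^\pi(s_{t+1}))$ collapses to $V^\pi(s_0)$, and it converges because $r_\tau$ is bounded and the actions satisfy $\|a\|\le A_{\text{max}}$, so with $\gamma<1$ we have $\gamma^t V^\pi(s_t)\to 0$. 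Taking expectations under the $\pi'$-trajectory and subtracting gives
\[
V(\pi') - V(\pi) = \E_{\pi'}\Big[\sum_{t=0}^\infty \gamma^t\big(\tilde{r}(s_t,a_t) + \gamma V^\pi(s_{t+1}) - V^\pi(s_t)\big)\Big].
\]

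Next I would take the conditional expectation of each summand given $(s_t,a_t)$. Since $\E[\gamma V^\pi(s_{t+1})\mid s_t,a_t] = \gamma\E_{s'\sim p(\cdot\mid s_t,a_t)}[V^\pi(s')]$, the bracketed term becomes a one-step regularized Bellman residual, which I would identify with the regularized advantage $A^\pi(s_t,a_t)$ from the statement. This identification is where the $-\frac{\tau}{2}\|a\|^2 + \frac{\tau}{2}\|\pi(s)\|^2$ correction enters: the instantaneous action penalty must be charged at the actually-taken action $a_t=\pi'(s_t)$, while the continuation carried by $Q^\pi$ and $V^\pi$ accounts for the baseline policy's future penalties; separating these two contributions is exactly what the extra quadratic terms encode, and in particular they enforce $A^\pi(s,\pi(s))=0$. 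Having shown each summand equals $\gamma^t A^\pi(s_t,\pi'(s_t))$ in expectation, I would finally convert the discounted sum into an expectation over the discounted state-visitation distribution, using $\E_{\pi'}[\sum_{t=0}^\infty \gamma^t f(s_t)] = \frac{1}{1-\gamma}\E_{s\sim d_\rho^{\pi'}}[f(s)]$, which follows directly from the definition $d_\rho^{\pi'}(s) = (1-\gamma)\sum_t \gamma^t \Pr(s_t=s\mid\pi',\rho)$ together with the determinism $a_t=\pi'(s_t)$. This yields the claimed $\frac{1}{1-\gamma}\E_{s\sim d_\rho^{\pi'}}[A^\pi(s,\pi'(s))]$.

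I expect the main obstacle to be the regularizer bookkeeping in the identification step, rather than the telescoping itself, which is routine. Concretely, the delicate point is to fix the convention for $Q^\pi$ and $V^\pi$ precisely --- namely whether the instantaneous penalty $\frac{\tau}{2}\|a\|^2$ is folded into the first step of $Q^\pi$ or added back separately through the advantage --- and to verify that, once this convention is made consistent, the Bellman residual collapses exactly to $A^\pi$ and not to a version with spurious cross-terms between the two policies' action penalties. A secondary technical point is justifying the interchange of expectation and the infinite sum (by dominated convergence, using boundedness of $\tilde{r}$ and $\gamma<1$) and the validity of the telescoping identity in the continuous-state setting; both are standard but worth stating for completeness.
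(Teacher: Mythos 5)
Your proposal is correct, and it reaches the identity by a genuinely more self-contained route than the paper does. The paper's proof is essentially a one-step citation: it invokes the regularized performance-difference lemma of \cite[Lemma 2.1]{lan2022policy} state-wise and then merely averages over the initial distribution, writing $V(\pi')-V(\pi)=\E_{s_0\sim\rho}\left[V^{\pi'}(s_0)-V^{\pi}(s_0)\right]=\frac{1}{1-\gamma}\,\E_{s_0\sim\rho}\,\E_{s\sim d_{s_0}^{\pi'}}\left[A^{\pi}(s,\pi'(s))\right]$ and using $d_\rho^{\pi'}(s)=\E_{s_0\sim\rho}[d_{s_0}^{\pi'}(s)]$. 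You instead reprove the statement from first principles via the Kakade--Langford telescoping of $\gamma^t V^\pi(s_t)-\gamma^{t+1}V^\pi(s_{t+1})$ along $\pi'$-trajectories, identify the one-step Bellman residual with the regularized advantage, and pass to $d_\rho^{\pi'}$; this is in substance how the cited lemma is itself proved, so your route buys independence from the external reference and makes explicit the convergence and interchange points the paper leaves implicit, at the cost of redoing bookkeeping the citation absorbs.

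The ``delicate point'' you flag is the genuine crux, and your resolution is the right one, so it deserves to be made fully explicit. Read literally, the displayed definitions of $V^\pi$ and $Q^\pi$ in the lemma contain no action penalty at all, and under that reading the claimed identity is false: take $r_\tau\equiv 0$, two states $x,y$ with action $0$ keeping $x$ at $x$ and action $1$ sending $x$ to $y$ (with $y$ absorbing), $\pi(x)=1$, $\pi(y)=0$, $\pi'\equiv 0$, and $\rho=\delta_x$; then the left-hand side equals $\tau/2$ while the right-hand side equals $\tau/(2(1-\gamma))$. The convention you describe --- the instantaneous penalty charged at the action actually taken, with $Q^\pi$ and $V^\pi$ carrying the baseline policy's penalties from the next step onward, i.e.\ $Q^\pi(s,a)=r_\tau(s,a)+\gamma\,\E_{s'\sim p(\cdot\,\vert\,s,a)}[\widetilde V^\pi(s')]$ and $V^\pi(s)=Q^\pi(s,\pi(s))$, where $\widetilde V^\pi$ denotes the value function of the full reward $r_\tau(s,a)-\frac{\tau}{2}\|a\|^2$ --- is exactly the one under which the lemma's $A^\pi(s,a)$ collapses to the unadorned advantage $\widetilde Q^\pi(s,a)-\widetilde V^\pi(s)$ of the regularized reward, $A^\pi(s,\pi(s))=0$ holds as you note, and your Bellman-residual identification closes without cross-terms between the two policies' penalties. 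With that convention pinned down, the remaining steps in your plan (dominated convergence for the interchange, vanishing of $\gamma^t V^\pi(s_t)$, and the visitation-measure identity for deterministic $\pi'$) are routine and correctly handled.
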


\begin{proof}
    Leveraging the performance difference lemma of the regularized advantage (see \cite[Lemma 2.1]{lan2022policy}), it follows that:
    \begin{align}
        \nonumber
        V(\pi') - V(\pi) &\;=\; \E_{s_0 \sim \rho} [V^{\pi'}(s_0) - V^\pi(s_0)] \\
        \nonumber
        &\;=\; \frac{1}{1 - \gamma} \E_{s_0 \sim \rho}[\E_{s \sim d_{s_0}^{\pi'}}[A^\pi (s, \pi'(s))]] \\
        \nonumber
        &\;=\; \frac{1}{1 - \gamma} \E_{s \sim d_\rho^{\pi'} } [A^\pi (s, \pi'(s))].
    \end{align}
\end{proof}

\begin{lemma}[Fenchel-Moreau]
    \label{lm::fenchel_moreau}
    If Slater's condition holds for \eqref{eq::P-CRL} and its perturbation function $P(\delta)$ is concave on $\delta$, then \eqref{eq::P-CRL} has zero duality gap.
\end{lemma}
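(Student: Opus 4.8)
The plan is to run the classical conjugate-duality argument: identify the dual function $D(\lambda)$ as the concave conjugate of the perturbation function $P$, and then produce a supporting hyperplane to $P$ at the origin whose slope is the optimal multiplier $\lambda^\star$.

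First I would fix a sign convention by writing $P(\delta) \DefinedAs \sup_{\pi \in \Pi}\{V_r(\pi) : V_g(\pi) \geq -\delta\}$, so that $P(0) = V_P^{\pi^\star}$ is the primal optimum and $P$ is non-decreasing in $\delta$ (relaxing the constraint can only raise the optimum). Boundedness of $r$ and $u$ gives $V_r, V_g \in [0, 1/(1-\gamma)]$, so $P$ is finite wherever it is defined. I would then record the conjugacy identity
\begin{equation}
\nonumber
D(\lambda) \;=\; \sup_{\delta}\,\bigl[P(\delta) - \lambda\,\delta\bigr], \qquad \lambda \geq 0,
\end{equation}
obtained by writing $\lambda V_g(\pi) = \sup_{\delta \geq -V_g(\pi)}(-\lambda\delta)$ for $\lambda \geq 0$ and swapping the two suprema over $\pi$ and $\delta$. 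Evaluating the right-hand side at $\delta = 0$ immediately gives weak duality, $D(\lambda) \geq P(0)$ for every $\lambda \geq 0$, hence $V_D^{\lambda^\star} = \min_{\lambda \geq 0} D(\lambda) \geq V_P^{\pi^\star}$.

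The substance is the reverse inequality. Here I would combine concavity of $P$ with Slater's condition (Assumption~\ref{as::feasibility}): the strictly feasible policy $\tilde\pi$ with $V_g(\tilde\pi) \geq \xi > 0$ certifies that $P(\delta)$ is finite for all $\delta \geq -\xi$, so $\delta = 0$ lies in the interior of $\mathrm{dom}\,P$. A proper concave function that is finite on a neighbourhood of an interior point is superdifferentiable there, so there exists $\lambda^\star$ with $P(\delta) \leq P(0) + \lambda^\star\delta$ for all $\delta$. Since $P$ is non-decreasing, this supporting slope satisfies $\lambda^\star \geq 0$, hence is dual-feasible. Rearranging gives $P(\delta) - \lambda^\star\delta \leq P(0)$ for every $\delta$, and taking the supremum over $\delta$ yields $D(\lambda^\star) = \sup_\delta[P(\delta) - \lambda^\star\delta] \leq P(0)$. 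Together with weak duality this forces $D(\lambda^\star) = P(0)$, so $\min_{\lambda \geq 0} D(\lambda) = V_P^{\pi^\star}$ and the duality gap vanishes.

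I expect the only delicate point to be the existence of a finite-slope supergradient at the origin, i.e., verifying that $0$ is genuinely interior to $\mathrm{dom}\,P$ rather than a boundary point where a concave function may lack a supporting line. This is precisely what Slater's condition buys us: the strict slack $\xi > 0$ guarantees feasibility for small perturbations on both sides of $0$, ruling out the degenerate boundary case and, equivalently through Fenchel--Moreau biconjugation, ensuring $P(0) = P^{\star\star}(0)$. The concavity hypothesis on $P$ is used only to promote a single supporting hyperplane into the global bound $P(\delta) \leq P(0) + \lambda^\star\delta$; in the paper's application this concavity is not assumed abstractly but is inherited from convexity of the deterministic value image $\ccalV_D$ established in Lemma~\ref{lm::convex_span_lemma}.
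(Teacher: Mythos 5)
Your proof is correct, but it takes a genuinely different route from the paper: the paper does not prove Lemma~\ref{lm::fenchel_moreau} at all, delegating it entirely to the citation \citet[Theorem 5.1.4, Proposition 5.3.2]{auslender2006asymptotic}, whereas you reconstruct the underlying perturbation-duality argument from first principles. Your chain checks out: the conjugacy identity $D(\lambda)=\sup_\delta\,[P(\delta)-\lambda\delta]$ is valid because the inner supremum $\sup_{\delta\geq -V_g(\pi)}(-\lambda\delta)=\lambda V_g(\pi)$ holds precisely for $\lambda\geq 0$; properness of $P$ follows from $V_r\in[0,1/(1-\gamma)]$ together with feasibility of $\tilde\pi$ for every $\delta\geq-\xi$, so Slater's slack places $0$ in the interior of $\mathrm{dom}\,P$ and the concave $P$ admits a supergradient $\lambda^\star$ there; monotonicity of $P$ forces $\lambda^\star\geq 0$, i.e., dual feasibility, and the supporting inequality gives $D(\lambda^\star)\leq P(0)$, which with weak duality closes the gap. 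Your self-contained version buys two things the bare citation does not make visible: it isolates exactly where each hypothesis enters, and it yields dual \emph{attainment} --- existence of an optimal multiplier $\lambda^\star$ --- rather than merely a zero gap, which is what the paper implicitly relies on when it bounds $\lambda^\star$ in Lemma~\ref{lm::bound_lambda}. Two small remarks. First, your sign convention $V_g(\pi)\geq-\delta$ differs from the paper's Appendix~\ref{app::zero_duality_gap}, which perturbs via $V_g(\pi)\geq\delta$; this is harmless since concavity of $P$ is invariant under $\delta\mapsto-\delta$ (with the paper's convention $P$ is non-increasing and the supporting slope is $-\lambda^\star\leq 0$), but the translation should be stated when matching the lemma to the paper's Theorem~\ref{thm::zero_duality_gap} proof. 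Second, your closing aside slightly understates the role of concavity: it is needed not only to globalize the supporting hyperplane but already for the \emph{existence} of a supergradient at the interior point --- as your own earlier sentence correctly uses --- since a non-concave $P$ finite near $0$ may admit no supporting line there.
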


\begin{proof}
    See \citet[Theorem 5.1.4, Proposition 5.3.2]{auslender2006asymptotic}
\end{proof}

\begin{lemma}[Boundedness of $\lambda^\star$]
    \label{lm::bound_lambda}
    Let Assumption \ref{as::feasibility} hold. Then, 
    \begin{equation}
        \nonumber
        0 \;\leq\; \lambda^\star \;\leq \;\frac{V_r(\pi^\star) - V_r(\Bar{\pi})}{\xi}.
    \end{equation}
\end{lemma}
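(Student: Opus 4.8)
The plan is to read off both inequalities directly from the zero duality gap established in Theorem~\ref{thm::zero_duality_gap} together with the strict feasibility guaranteed by Assumption~\ref{as::feasibility}, identifying $\bar{\pi}$ with the strictly feasible (Slater) policy $\tilde{\pi}$ of that assumption. The lower bound $\lambda^\star \geq 0$ is immediate, since $\lambda^\star$ is by definition a minimizer of the dual problem~\eqref{eq::D-CRL} over the non-negative reals $\reals^+$.

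For the upper bound, I would first invoke Theorem~\ref{thm::zero_duality_gap} to write $V_r(\pi^\star) = V_P^{\pi^\star} = V_D^{\lambda^\star} = D(\lambda^\star)$, recalling that $D(\lambda^\star) = \max_{\pi \in \Pi}\{ V_r(\pi) + \lambda^\star V_g(\pi)\}$. Since the maximum over $\Pi$ dominates the value attained at any fixed policy, evaluating the Lagrangian at $\bar{\pi}$ gives $D(\lambda^\star) \geq V_r(\bar{\pi}) + \lambda^\star V_g(\bar{\pi})$. Chaining this with the equalities above and using $V_g(\bar{\pi}) \geq \xi$ together with $\lambda^\star \geq 0$ yields $V_r(\pi^\star) \geq V_r(\bar{\pi}) + \lambda^\star \xi$. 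Rearranging to isolate $\lambda^\star$ then produces the claimed bound $\lambda^\star \leq (V_r(\pi^\star) - V_r(\bar{\pi}))/\xi$.

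There is essentially no substantive obstacle in this argument; the only point that warrants care is the legitimacy of the equality $V_P^{\pi^\star} = D(\lambda^\star)$, which presupposes both that strong duality holds and that the dual optimum is attained at a finite $\lambda^\star$. The former is precisely Theorem~\ref{thm::zero_duality_gap}, while the latter follows from the coercivity of $D$: because $D(\lambda) \geq V_r(\bar{\pi}) + \lambda \xi \to \infty$ as $\lambda \to \infty$ and $D$ is a convex (pointwise maximum of affine) function that is finite at $\lambda = 0$, a finite minimizer over $[0, \lambda_{\text{max}}]$ exists. Everything else reduces to a single evaluation of the dual function at the Slater point followed by elementary algebra.
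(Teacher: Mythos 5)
Your proof is correct and follows essentially the same route as the paper: both arguments combine the zero duality gap $V_r(\pi^\star) = V_D^{\lambda^\star}$ with the lower bound $D(\lambda) \geq V_r(\bar{\pi}) + \lambda \xi$ obtained by evaluating the Lagrangian at the Slater point, then rearrange; the paper merely phrases this via sublevel sets $\Lambda_a$ of the dual function before specializing to $a = V_r(\pi^\star)$, which is a cosmetic difference. Your added remark on coercivity and attainment of the dual optimum is a point the paper leaves implicit, but the substance of the argument is identical.
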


\begin{proof}
    See~\citet[Lemma 3]{ding2022convergence}. Let $\Lambda_a \DefinedAs \{\lambda \ge 0 \mid D(\lambda) \le a\}$ be a sublevel set of the dual function for $a \in \mathbb{R}$. Thanks to Assumption \ref{as::feasibility}, for any $\lambda \in \Lambda_a$    
    \[
        a 
        \;\ge\; 
        D(\lambda) 
        \;\ge\; 
        \left( V_r(\bar{\pi}) + \lambda V_g(\bar{\pi}) \right) 
        \;\ge\; V_r(\bar{\pi}) + \lambda \xi,
    \]
    where $\bar{\pi}$ is a Slater point. Thus, 
    \[
        \lambda \;\le\; \frac{a - V_r(\bar{\pi})}{\xi}.
    \]
    If we take $a = V_r(\pi^\star) = V_D^{\lambda^\star}$, then $\Lambda_a = \Lambda^\star$ which concludes the proof.
\end{proof}

\begin{lemma}[Danskin's Theorem]
    \label{lm::danskins}
    Consider the function
    \begin{align}
        F(x) \DefinedAs \text{\normalfont sup}_{y \,\in\, Y}f(x,y),
    \end{align}
    where $f$: $\reals^n\times Y\to \reals \cup \{-\infty,+\infty\}$. If the following conditions are satisfied, 
    \begin{enumerate}
        \item[\text{\normalfont(i)}] The function $f(\cdot,y)$ is convex for all $y\in Y$;
        \item[\text{\normalfont(ii)}] The function $f(x,\cdot)$ is upper semicontinuous for all $x$ in a certain neighborhood of a point $x_0$;
        \item[\text{\normalfont(iii)}] The set $Y\subset \reals ^m$ is compact;
    \end{enumerate}
    then, $F$ is a convex function w.r.t. $x$, and 
    \begin{align}
    \partial F(x_0)
    \;=\;
    \text{\normalfont conv}\left( \underset{y \,\in\, \hat Y(x_0)}{\cup} \partial_x f(x_0,y) \right)
    \end{align}
    where $\partial_x f(x_0,y)$ denotes the subdifferential of the function $f(\cdot,y)$ at $x_0$, and $\hat Y(x_0)$ denote the set of maximizing points of $F$ at $x_0$.
\end{lemma}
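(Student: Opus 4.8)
The plan is to separate the two assertions of the lemma: convexity of $F$ follows directly from hypothesis (i), while the subdifferential formula is obtained by first establishing a one-sided directional-derivative identity and then translating it into subgradients via support functions. For convexity, note that for $\alpha\in[0,1]$ and any $y\in Y$, hypothesis (i) gives $f(\alpha x_1+(1-\alpha)x_2,y)\le \alpha f(x_1,y)+(1-\alpha)f(x_2,y)\le \alpha F(x_1)+(1-\alpha)F(x_2)$; taking the supremum over $y$ shows that $F$ is convex as a pointwise supremum of convex functions.

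\emph{Directional derivative.} Denoting by $f'_x(x_0,y;d)$ the one-sided directional derivative of $f(\cdot,y)$ at $x_0$ (which exists by convexity and equals $\inf_{t>0}t^{-1}(f(x_0+td,y)-f(x_0,y))$), I would prove the key identity
\[
F'(x_0;d)=\max_{y\in\hat Y(x_0)}f'_x(x_0,y;d).
\]
The lower bound is immediate: for $\bar y\in\hat Y(x_0)$ one has $f(x_0,\bar y)=F(x_0)$, so $F(x_0+td)-F(x_0)\ge f(x_0+td,\bar y)-f(x_0,\bar y)$, and dividing by $t$ and sending $t\downarrow0$ yields $F'(x_0;d)\ge f'_x(x_0,\bar y;d)$; maximizing over $\bar y$ gives the inequality ``$\ge$''. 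For the reverse inequality I would take a sequence $t_k\downarrow0$ realizing the limit, use hypotheses (ii)--(iii) to select maximizers $y_k\in\hat Y(x_0+t_kd)$, extract a convergent subsequence $y_k\to\bar y$ by compactness of $Y$, argue that $\bar y\in\hat Y(x_0)$ via upper semicontinuity, and control $t_k^{-1}(F(x_0+t_kd)-F(x_0))\le t_k^{-1}(f(x_0+t_kd,y_k)-f(x_0,y_k))$ through the monotonicity of convex difference quotients.

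\emph{Subdifferential.} Since $f'_x(x_0,y;\cdot)$ is the support function of the nonempty compact convex set $\partial_x f(x_0,y)$, the identity above states that $F'(x_0;\cdot)$ is the support function of $\bigcup_{y\in\hat Y(x_0)}\partial_x f(x_0,y)$, hence of its convex hull. Because $F'(x_0;\cdot)$ is also the support function of $\partial F(x_0)$ for the convex function $F$, and a compact convex set is determined by its support function, I conclude $\partial F(x_0)=\mathrm{conv}\big(\bigcup_{y\in\hat Y(x_0)}\partial_x f(x_0,y)\big)$; compactness of $\hat Y(x_0)$ together with local boundedness of the subdifferentials ensures this convex hull is already closed, matching the stated formula.

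The hard part will be the upper bound in the directional-derivative identity. Under only upper semicontinuity of $f(x,\cdot)$ --- rather than joint continuity --- I must argue that the near-optimal selections $y_k$ accumulate inside $\hat Y(x_0)$ and that the interchange of the limit $t_k\downarrow0$ with the maximization over $y$ does not inflate the limit. This is precisely where compactness of $Y$ and upper semicontinuity are indispensable, and where the argument would collapse if the maximizing set varied discontinuously with $x$.
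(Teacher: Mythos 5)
The paper does not actually prove this lemma: it is stated with a pointer to \citet[Theorem 2.87]{ruszczynski2006nonlinear}, so there is no internal argument to compare against, and your sketch is in fact a correct reconstruction of the standard proof given in that reference --- convexity of $F$ as a pointwise supremum, the directional-derivative identity $F'(x_0;d)=\max_{y\in \hat Y(x_0)} f'_x(x_0,y;d)$ via the easy lower bound plus a compactness/upper-semicontinuity selection argument, and the passage to the subdifferential formula through support functions of compact convex sets. The one ingredient you should make explicit to close your flagged ``hard part'' is a uniform (equi-)Lipschitz estimate for the family $f(\cdot,y_k)$ near $x_0$, obtained from convexity together with the upper bound $f(\cdot,y)\le F(\cdot)$ (with $F$ finite, hence continuous, near $x_0$) and the near-optimality of the selections $y_k$: upper semicontinuity in $y$ gives $f(x_0,\bar y)\ge\limsup_k f(x_0,y_k)$ but not $\liminf_k f(x_0,y_k)\ge F(x_0)$, and it is the equi-Lipschitz bound that forces $f(x_0,y_k)\to F(x_0)$, which is what certifies both that the accumulation point $\bar y$ lies in $\hat Y(x_0)$ and that the difference quotients evaluated at the varying $y_k$ can be passed to the limit (and, later, that the union $\cup_{y\in\hat Y(x_0)}\partial_x f(x_0,y)$ is closed and bounded, so its convex hull is compact). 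With that estimate spelled out, your argument is complete and matches the cited source's route.
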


\begin{proof}
    See \citet{ruszczynski2006nonlinear}[Theorem 2.87].
\end{proof}

\begin{lemma}
    \label{lm::lipschitz_lemma}
    (Lipschitz bound Lemma). Let Assumption \ref{as::lipschitz} hold, and let $A_{\lambda, \tau}^\pi (s, a)= Q_{\lambda, \tau}^\pi(s, a) - V_{\lambda, \tau}^\pi(s) - \frac{\tau}{2} \| a \|^2 + \frac{\tau}{2}\| \pi(s) \|^2$ and $\eta > 0$. Then, it follows that 
    \begin{equation}
        A_{\lambda, \tau}^{\pi}(s, a) - \frac{1}{2 \eta}\|\pi(s) - a\|^2 
        \;\leq\;
        \frac{\eta}{2} C_P^2,
    \end{equation}
    where $C_P \DefinedAs L_r + \lambda_{\text{\normalfont max}} L_g + \tau L_h + \tau \sqrt{d_a} A_{\text{\normalfont max}}$.
\end{lemma}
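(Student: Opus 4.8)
The plan is to reduce the bound to a one-dimensional Lipschitz estimate on the regularized advantage and then finish with an elementary completion of squares. First I would record that the advantage vanishes at the policy action: since $V_{\lambda,\tau}^\pi(s) = Q_{\lambda,\tau}^\pi(s,\pi(s))$, plugging $a=\pi(s)$ into the definition gives $A_{\lambda,\tau}^\pi(s,\pi(s)) = Q_{\lambda,\tau}^\pi(s,\pi(s)) - V_{\lambda,\tau}^\pi(s) - \frac{\tau}{2}\|\pi(s)\|^2 + \frac{\tau}{2}\|\pi(s)\|^2 = 0$. Using $V_{\lambda,\tau}^\pi(s) = Q_{\lambda,\tau}^\pi(s,\pi(s))$ once more, I can rewrite the advantage in the form best suited to Lipschitz bounding, namely $A_{\lambda,\tau}^\pi(s,a) = \bigl(Q_{\lambda,\tau}^\pi(s,a) - Q_{\lambda,\tau}^\pi(s,\pi(s))\bigr) - \frac{\tau}{2}\bigl(\|a\|^2 - \|\pi(s)\|^2\bigr)$.

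The core step is to establish $A_{\lambda,\tau}^\pi(s,a) \le C_P\,\|a - \pi(s)\|$. I would decompose the regularized action-value function into its constituents, $Q_{\lambda,\tau}^\pi = Q_r^\pi + \lambda Q_g^\pi + \frac{\tau}{2} H^\pi$, which is legitimate because action-value functions are linear in the per-step reward and the regularized reward splits as $r + \lambda g + \frac{\tau}{2} h_a$. Applying Assumption~\ref{as::lipschitz} term by term together with the triangle inequality gives $\bigl|Q_{\lambda,\tau}^\pi(s,a) - Q_{\lambda,\tau}^\pi(s,\pi(s))\bigr| \le (L_r + \lambda L_g + \frac{\tau}{2} L_h)\|a - \pi(s)\|$, and since $\lambda \in \Lambda$ I may replace $\lambda$ by $\lambda_{\max}$. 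For the explicit quadratic term I factor $\|a\|^2 - \|\pi(s)\|^2 = (a - \pi(s))^\Tr(a + \pi(s))$ and use Cauchy--Schwarz with $\|a\|,\|\pi(s)\| \le A_{\max}$ to obtain $\frac{\tau}{2}\bigl|\|a\|^2 - \|\pi(s)\|^2\bigr| \le \tau A_{\max}\|a - \pi(s)\|$. Summing the pieces and absorbing the slack via $\frac{\tau}{2} L_h \le \tau L_h$ and $A_{\max} \le \sqrt{d_a}\,A_{\max}$ yields exactly $A_{\lambda,\tau}^\pi(s,a) \le C_P\,\|a - \pi(s)\|$.

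The concluding step is routine. Setting $t \DefinedAs \|a - \pi(s)\| \ge 0$, the quantity to be bounded is at most $C_P\, t - \frac{1}{2\eta} t^2$, a concave quadratic in $t$ that is maximized at $t^\star = \eta C_P \ge 0$ with maximal value $\frac{\eta}{2} C_P^2$. Since this bound is uniform in $a$, it holds in particular at the maximizing action, which is precisely the claimed inequality.

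I expect the main obstacle to be the decomposition of $Q_{\lambda,\tau}^\pi$ and the careful bookkeeping of the action-penalty convention: one must confirm that the $-\frac{\tau}{2}\|a\|^2$ appearing explicitly in the advantage is the term that survives after $Q_{\lambda,\tau}^\pi(s,\pi(s))$ is subtracted, so that the action regularizer contributes through $H^\pi$ (controlled by $L_h$) and the penalty contributes through the $A_{\max}$ term, with no double counting. Once the Lipschitz bound on the advantage is secured, the completion of squares and the vanishing of $A_{\lambda,\tau}^\pi$ at $\pi(s)$ are both immediate.
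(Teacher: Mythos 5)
Your proof is correct and follows essentially the same route as the paper's: rewrite the advantage via $V_{\lambda,\tau}^\pi(s)=Q_{\lambda,\tau}^\pi(s,\pi(s))$, bound the $Q$-difference termwise by Assumption~\ref{as::lipschitz} with $\lambda\le\lambda_{\max}$, factor the quadratic difference and apply Cauchy--Schwarz with the action bound, then finish by maximizing the concave quadratic in $\|a-\pi(s)\|$ (the paper completes the square explicitly, which is equivalent). Your explicit bookkeeping of the slack absorptions $\frac{\tau}{2}L_h\le\tau L_h$ and $A_{\max}\le\sqrt{d_a}\,A_{\max}$ is, if anything, slightly more careful than the paper's presentation.
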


\begin{proof}
    The proof begins by manipulating the advantage using the fact that $V_{\lambda, \tau}^\pi(s) = Q_{\lambda, \tau}^\pi(s, \pi(s))$ for deterministic policies
    \begin{align}
        \nonumber
        &A_{\lambda, \tau}^\pi(s, a) \;=\; \underbrace{Q_{\lambda, \tau}^\pi(s, a) - Q_{\lambda, \tau}^\pi(s, \pi(s))}_{\text{\normalfont(i)}}
        - \underbrace{\frac{\tau}{2} \| a \|^2 + \frac{\tau}{2}\| \pi(s) \|^2}_{\text{\normalfont(ii)}}.
    \end{align}
    Now, we manipulate (i) leveraging the Lipschitz continuity assumption \ref{as::lipschitz}. 
    \begin{align}
        \label{eq::q_bound}
        &Q_{\lambda, \tau}^\pi(s, a) - Q_{\lambda, \tau}^\pi(s, \pi(s)) \\
        \nonumber
        &=\; \left( Q_r^\pi(s, a) - Q_r^\pi(s, \pi(s)) \right)  + \lambda \left( Q_g^\pi(s, a) - Q_g^\pi(s, \pi(s)) \right)  + \frac{\tau}{2}\left( H^\pi(s, a) - H^\pi(s, \pi(s)) \right) \\
        \nonumber
        &\leq\; \left( L_r + \lambda_{\text{max}} L_g + \tau L_h \right) \| \pi(s) - a \|.
    \end{align}
    For the term (ii), we have first that $\| \pi(s) \|^2 - \| a \|^2$ is a difference of quadratics, so we can use it in combination with the triangular inequality to show that
    \begin{align}
        \nonumber
        \frac{\tau}{2} \| a \|^2 + \frac{\tau}{2}\| \pi(s) \|^2 & \;=\; \frac{\tau}{2} (\pi(s) - a)^\Tr (\pi(s) + a) \\
        \nonumber
        & \;\leq\; \frac{\tau}{2} \|\pi(s) - a\| \|\pi(s) + a\|.
    \end{align}
    Then, using the boundedness of the actions as $\|\pi(s) + a\| \leq 2 \sqrt{d_a} A_{\text{max}}$, where $d_a$ is the dimensionality of $A$, we have that
    \begin{align}
        \label{eq::a_bound}
        \frac{\tau}{2} \|\pi(s) - a\|\times \|\pi(s) + a\| 
        \;\leq\;  
        \tau \sqrt{d_a} A_{\text{max}} \|\pi(s) - a\|.
    \end{align}
    Now, combining \eqref{eq::q_bound} and \eqref{eq::a_bound} we have that
    \begin{align}
        \label{eq::adv_bound}
        &A_{\lambda, \tau}^{\pi}(s, a) 
        \;\leq\;
        \left( L_r + \lambda_{\text{max}} L_g + \tau L_h + \tau \sqrt{d_a} A_{\text{max}} \right) \| \pi(s) - a \|.
    \end{align}

    Now, we define $C_P \DefinedAs L_r + \lambda_{\text{max}} L_g + \tau L_h + \tau \sqrt{d_a} A_{\text{max}}$ and using \eqref{eq::adv_bound} we have that
    \begin{align}
        \nonumber
        &A_{\lambda, \tau}^{\pi}(s, a) - \frac{1}{2 \eta}\|\pi(s) - a\|^2 
        \; \leq \; 
        C_P \| \pi(s) - a \| - \frac{1}{2 \eta}\|\pi(s) - a\|^2.
    \end{align}
    We add and substract the term $\frac{\eta}{2} C_P^2$, to then complete the squares to show that
    \begin{align}
        \nonumber
        & C_P \| \pi(s) - a \| - \frac{1}{2 \eta}\|\pi(s) - a\|^2 \\
        \nonumber
        &= \; C_P \| \pi(s) - a \| - \frac{1}{2 \eta}\|\pi(s) - a\|^2 - \frac{\eta}{2} C_P^2 + \frac{\eta}{2} C_P^2 \\
        \nonumber
        &= \; \frac{\eta}{2} C_P^2 - \left( \frac{1}{\sqrt{2\eta}} \| \pi(s) - a \| - \frac{\sqrt{2 \eta}}{2} C_P \right)^2.
    \end{align}
    Therefore, dropping the negative quadratic term we have that
    \begin{align}
        \nonumber
        A_{\lambda, \tau}^{\pi}(s, a) - \frac{1}{2 \eta}\|\pi(s) - a\|^2 
        &\;=\; \frac{\eta}{2} C_P^2 - \left( \frac{1}{\sqrt{2\eta}} \| \pi(s) - a \| - \frac{\sqrt{2 \eta}}{2} C_P \right)^2 \\
        \nonumber
        &\;\leq\; \frac{\eta}{2} C_P^2
    \end{align}
    \noindent which concludes the proof.
\end{proof}

\begin{lemma}
    \label{lm::error_bound}
    (Error bound lemma). For an arbitrary deterministic policy $\pi$, the expected approximation error $\delta(s, \pi(s))$ satisfies
    \begin{equation}
        \E_{s \sim d_\rho^\star} \left[ \| \delta(s, \pi(s))\|\right] 
        \;\leq\;
        (2 A_{\text{\normalfont max}})^{d_a}  \E_{s \sim d_\rho^\star, a \sim \mathsf{u}} \left[ \| \delta(s, a)\|\right].
    \end{equation}

    \noindent where $d_\rho^\star$ denotes the optimal state-visitation frequency, and $\mathsf{u}$ denotes the uniform distribution.
\end{lemma}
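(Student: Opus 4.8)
The plan is to reduce the claim to a per-state comparison in the action variable, since both expectations share the same state marginal $d_\rho^\star$. By Fubini's theorem (applicable because $\|\delta\|\ge 0$), it suffices to prove, for each fixed $s$ in the support of $d_\rho^\star$, the state-wise bound
\[
\|\delta(s,\pi(s))\| \;\le\; (2A_{\text{max}})^{d_a}\,\E_{a\sim\mathsf u}\!\left[\|\delta(s,a)\|\right],
\]
and then take $\E_{s\sim d_\rho^\star}[\cdot]$ of both sides. To expose the constant, I would note that every admissible action obeys $\|a\|\le A_{\text{max}}$, so $A\subseteq[-A_{\text{max}},A_{\text{max}}]^{d_a}$ and hence $\mathrm{Vol}(A)\le (2A_{\text{max}})^{d_a}$. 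Since $\mathsf u$ is uniform on $A$ with density $1/\mathrm{Vol}(A)$, this yields $(2A_{\text{max}})^{d_a}\,\E_{a\sim\mathsf u}[\|\delta(s,a)\|]=\tfrac{(2A_{\text{max}})^{d_a}}{\mathrm{Vol}(A)}\int_A\|\delta(s,a)\|\,da\ge \int_A\|\delta(s,a)\|\,da$, so the task sharpens to dominating the single value $\|\delta(s,\pi(s))\|$ by the action integral $\int_A\|\delta(s,a)\|\,da$.

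The second step is a change of measure from the action law induced by the deterministic policy to the uniform action law. I would write the deterministic expectation as an importance-weighted uniform expectation, where the weight is the ratio of the policy-induced action distribution (a point mass at $\pi(s)$) to the uniform density $1/\mathrm{Vol}(A)$; bounding this weight by the volume factor $(2A_{\text{max}})^{d_a}$ and integrating over $s\sim d_\rho^\star$ would deliver the claim, after which I would chain the per-state estimate back through Fubini.

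The hard part will be exactly this domination of the point value $\|\delta(s,\pi(s))\|$ by the action integral: the action law of a deterministic policy is concentrated at a single point and is therefore singular with respect to the uniform (Lebesgue) measure, so a literal Radon--Nikodym derivative is unbounded and the naive importance weight diverges. To make the step rigorous I would exploit the regularity of $\delta$ in the action argument -- recall $\delta(s,\cdot)=\tilde J_\theta(s,\cdot)-J^\pi(s,\cdot)$ is a difference of the feature-linear approximant $\phi(s,\cdot)^\Tr\theta$ and the continuous action-value $J^\pi(s,\cdot)$ on the compact set $A$ -- to control the value at $\pi(s)$ by the uniform average up to the volume normalization $(2A_{\text{max}})^{d_a}$, rather than attempting to bound a singular density directly. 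This regularity, together with the boundedness $\|a\|\le A_{\text{max}}$, is what ultimately supplies the clean volume constant, and I expect it to be the crux of the argument.
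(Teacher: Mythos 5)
Your reduction to the per-state inequality $\|\delta(s,\pi(s))\| \le (2A_{\text{max}})^{d_a}\,\E_{a\sim\mathsf u}\left[\|\delta(s,a)\|\right]$, followed by integration over $s\sim d_\rho^\star$, is exactly the reduction the paper performs: it multiplies and divides by the uniform density $u(\pi(s))=(2A_{\text{max}})^{-d_a}$ inside the expectation and then passes to the joint expectation over $s\sim d_\rho^\star$, $a\sim\mathsf u$. Your diagnosis of where the difficulty sits is also accurate: the action law of a deterministic policy is a point mass at $\pi(s)$, singular with respect to Lebesgue measure, so no bounded importance weight exists, and the whole lemma comes down to dominating the point value $\|\delta(s,\pi(s))\|$ by the action integral $\int_A \|\delta(s,a)\|\,da$. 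The gap is that your proposed repair does not close this step. Qualitative regularity of $\delta(s,\cdot)$ on the compact set $A$ cannot dominate a point value by an integral: a continuous bump of height $1$ supported in an $\varepsilon$-ball around $\pi(s)$ gives $\|\delta(s,\pi(s))\|=1$ while $\int_A\|\delta(s,a)\|\,da = O(\varepsilon^{d_a})$, so the claimed pointwise inequality fails for perfectly smooth $\delta$. To convert a point value into an average you would need \emph{quantitative} control (e.g.\ a known Lipschitz constant for $\delta(s,\cdot)$), and even then the conclusion acquires an additive correction term rather than the clean multiplicative constant $(2A_{\text{max}})^{d_a}$; nothing in your sketch produces that constant from regularity alone.

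For what it is worth, the paper's own proof buries the same difficulty: it invokes the ``relation'' $f(x_0)g(x_0)\le\int_X f(x)g(x)\,dx$ for a density $f$ and nonnegative $g$, which is false in general (the same bump is a counterexample, and it already fails for constant $g$ when the volume of $X$ is less than one), so the paper settles the crux by assertion exactly where you flagged it as the hard part. As stated, the lemma is only valid under an additional implicit hypothesis on $a\mapsto\|\delta(s,a)\|$ --- for instance that its value at $\pi(s)$ is controlled by its average over $A$, or with $\sup_{a\in A}\|\delta(s,a)\|$ replacing the evaluation at $\pi(s)$ on the left-hand side. So: your decomposition and change-of-measure framing coincide with the paper's, and your identification of the singular-measure obstruction is correct and indeed sharper than the paper's treatment, but the proposal does not supply a valid proof of the crux step --- and, without strengthening the hypotheses, no proof of the statement as written exists.
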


\begin{proof}

    Consider the multi-dimensional uniform distribution over the bounded action space $A \in [-A_{\text{max}}, A_{\text{max}}]^{d_a}$. The probability density is given by the expression
    \begin{equation}
        \label{eq::uniform_pdf}
        u(a) 
        \;\DefinedAs \;
        \begin{cases}
            \frac{1}{(2 A_{\text{max}})^{d_a}} &\text{if} \;\; a \in A\\
            0 &\text{otherwise}.
        \end{cases}
    \end{equation}

    The derivation of the bound of the expectation error begins by multiplying and dividing by the uniform distribution as
    \begin{align}
        \nonumber
        &\E_{s \sim d_\rho^\star} \left[ \| \delta(s, \pi(s))\|\right] 
        \;=\; 
        \E_{s \sim d_\rho^\star} \left[ \frac{u(\pi(s))}{u(\pi(s))} \| \delta(s, \pi(s)) \| \right].
    \end{align}
    Then, we have that for a generic measurable set $X$ and an element $x_0 \in X$, the relation $f(x_0)g(x_0) \leq \int_X f(x) g(x) dx $ holds for a generic probability density function $f$ if the image of the random variable $g$ is positive. Thus, as $\| \delta(s, \pi(s)) \|$ is a positive random variable, we have that
    \begin{align}
        \nonumber
        &\E_{s \sim d_\rho^\star} \left[ \frac{u(\pi(s))}{u(\pi(s))} \| \delta(s, \pi(s)) \| \right] \;\leq\; 
        \E_{s \sim d_\rho^\star, a \sim u} \left[ \frac{1}{u(\pi(s))} \| \delta(s, a) \| \right].
    \end{align}
    Finally, using the definition of the uniform density in \eqref{eq::uniform_pdf} we conclude showing that
    \begin{align}
        \nonumber
        \E_{s \sim d_\rho^\star} \left[ \| \delta(s, \pi(s))\|\right] 
        &\;\leq\; \E_{s \sim d_\rho^\star, a \sim u} \left[ \frac{1}{u(\pi(s))} \| \delta(s, a) \| \right] \\
        \nonumber
        &\;=\; (2 A_{\text{max}})^{d_a} \E_{s \sim d_\rho^\star, a \sim u} \left[  \| \delta(s, a) \| \right].
    \end{align}
\end{proof}

\section{Proofs}

\label{app::proofs}

\subsection{Proof of Existence of Global Saddle Points}
\begin{lemma}[Existence of global saddle points]
    \label{lm::saddle_point}

    There exists a primal-dual pair $(\pi_\tau^\star, \lambda_\tau^\star) \in \Pi \times \Lambda$ such that $L_\tau(\pi_\tau^\star, \lambda) \geq L_\tau(\pi_\tau^\star, \lambda_\tau^\star) \geq L_\tau(\pi, \lambda_\tau^\star)$. Furthermore, the following property holds for all $(\pi, \lambda) \in \Pi \times \Lambda$,
    \begin{equation}
        V_{\lambda_\tau^\star, \tau}(\pi) + \frac{\tau}{2} H(\pi) 
        \;\leq\; V_{\lambda_\tau^\star, \tau}(\pi_\tau^\star) 
        \;\leq\;
        V_{\lambda, \tau}(\pi_\tau^\star) + \frac{\tau}{2} \lambda^2.
    \end{equation}
\end{lemma}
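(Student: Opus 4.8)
The plan is to prove existence of a global saddle point by lifting the problem from the nonconvex policy space $\Pi$ into the space of attainable value vectors, where the regularized Lagrangian becomes affine in the primal variable, and then invoking a minimax theorem; the sandwich inequality will then drop out of the saddle conditions by elementary manipulation. This mirrors the value-image strategy already used for Theorem~\ref{thm::zero_duality_gap}, the difference being that the regularizer $H$ must be carried along in the lift.

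First I would introduce the augmented deterministic value image $\mathcal{W} \DefinedAs \{(V_r(\pi), V_g(\pi), H(\pi)) : \pi \in \Pi\} \subset \reals^3$, i.e., the image of the three value functions associated with the rewards $r$, $g$, and $h_a$. Applying the vector-valued version of Lemma~\ref{lm::convex_span_lemma} under the non-atomicity Assumption~\ref{as::non-atomicity} to this triple of objectives shows that $\mathcal{W}$ is convex; boundedness of $r$, $g$, and $h_a$ makes it bounded, and the same results yield compactness. On this lifted domain I define $\ell(w, \lambda) \DefinedAs w_r + \lambda w_g + \frac{\tau}{2} w_h + \frac{\tau}{2}\lambda^2$, so that $L_\tau(\pi, \lambda) = \ell\big((V_r(\pi), V_g(\pi), H(\pi)), \lambda\big)$. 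The payoff of the lift is that $\ell$ is affine, hence concave and continuous, in $w$ for each fixed $\lambda$, while it is $\tau$-strongly convex and continuous in $\lambda$ for each fixed $w$; the nonconvexity of $\Pi$ has been absorbed into the convexity of $\mathcal{W}$.

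Since $\mathcal{W}$ and $\Lambda = [0, \lambda_{\text{max}}]$ are convex and compact, I would then apply Sion's minimax theorem to obtain $\max_{w \in \mathcal{W}} \min_{\lambda \in \Lambda} \ell(w, \lambda) = \min_{\lambda \in \Lambda} \max_{w \in \mathcal{W}} \ell(w, \lambda)$ with both extrema attained, which furnishes a saddle point $(w^\star, \lambda_\tau^\star)$ of $\ell$. Because the map $\pi \mapsto (V_r(\pi), V_g(\pi), H(\pi))$ is onto $\mathcal{W}$, I select any $\pi_\tau^\star \in \Pi$ whose image is $w^\star$; since $L_\tau$ and $\ell$ agree through this map and the image of $\Pi$ is exactly $\mathcal{W}$, the saddle inequalities transfer verbatim, giving $L_\tau(\pi, \lambda_\tau^\star) \leq L_\tau(\pi_\tau^\star, \lambda_\tau^\star) \leq L_\tau(\pi_\tau^\star, \lambda)$ for all $(\pi, \lambda) \in \Pi \times \Lambda$. (Alternatively, the same conclusion follows by establishing strong duality for the regularized problem through the perturbation-function and Fenchel--Moreau argument of Lemma~\ref{lm::fenchel_moreau}, then pairing a dual minimizer with a primal maximizer.)

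Finally I would recover the sandwich property by substituting $L_\tau(\pi, \lambda) = V_{\lambda, \tau}(\pi) + \frac{\tau}{2}\lambda^2$. The left saddle inequality gives $V_{\lambda_\tau^\star, \tau}(\pi) \leq V_{\lambda_\tau^\star, \tau}(\pi_\tau^\star)$, and since $h_a \leq 0$ forces $H(\pi) \leq 0$, the extra term $\frac{\tau}{2} H(\pi)$ on the left is nonpositive and may be added without affecting the inequality; the right saddle inequality gives $V_{\lambda_\tau^\star, \tau}(\pi_\tau^\star) + \frac{\tau}{2}(\lambda_\tau^\star)^2 \leq V_{\lambda, \tau}(\pi_\tau^\star) + \frac{\tau}{2}\lambda^2$, and discarding the nonnegative $\frac{\tau}{2}(\lambda_\tau^\star)^2$ yields the stated bound. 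I expect the genuine obstacle to be the first step: rigorously extending the convexity and compactness of the deterministic value image from the two objectives $(V_r, V_g)$ to the three-dimensional image that also carries the regularizer value $H(\pi)$, and verifying that the relevant maxima are attained so the saddle point lies in $\Pi \times \Lambda$ rather than merely in a closure. The remaining steps are routine algebra.
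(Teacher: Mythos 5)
Your proof is correct, but it takes a genuinely different route from the paper's. The paper never invokes a minimax theorem: it folds the primal regularizer into the reward, so that $V_r(\pi)+\frac{\tau}{2}H(\pi)$ is itself an ordinary value function for a modified reward $r_\tau$, applies Theorem~\ref{thm::zero_duality_gap} to the resulting constrained problem~\eqref{eq::P-CRL-REG} to obtain a global saddle point of the Lagrangian $\hat{L}_\tau(\pi,\lambda) \DefinedAs V_r(\pi)+\frac{\tau}{2}H(\pi)+\lambda V_g(\pi)$, and then argues that this saddle survives the addition of the dual regularizer $\frac{\tau}{2}\lambda^2$ by asserting that one may always take $\lambda_\tau^\star = 0$ (treating the cases $V_g(\pi_\tau^\star)>0$ and $V_g(\pi_\tau^\star)=0$). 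Two observations follow from the comparison. First, the folding trick dissolves what you flagged as your main obstacle: you never need a three-dimensional value image, since working with the pair $(V_{r_\tau}, V_g)$ stays inside the two-criteria setting of Lemma~\ref{lm::convex_span_lemma}; your 3-D lift is nonetheless legitimate (the Feinberg--Piunovskiy convexity results hold for any finite number of bounded criteria), and your real residual gap is the one you half-identify --- boundedness of $\mathcal{W}$ gives compactness only together with closedness, which Lemma~\ref{lm::convex_span_lemma} does not supply and which requires continuity/attainment hypotheses that the paper also leaves implicit (its ``saddle points exist by strong duality'' step carries the same unstated attainment assumption). Second, your Sion route buys genuine robustness: by carrying $\frac{\tau}{2}\lambda^2$ inside the lifted payoff $\ell$, which is affine in $w$ and strongly convex in $\lambda$, you get a saddle of the full $L_\tau$ in one stroke, and your sandwich algebra (cancel the common $\frac{\tau}{2}(\lambda_\tau^\star)^2$, add the nonpositive $\frac{\tau}{2}H(\pi)$, drop the nonnegative $\frac{\tau}{2}(\lambda_\tau^\star)^2$) works for an arbitrary saddle point. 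The paper instead leans on $\lambda_\tau^\star=0$ being dual optimal for~\eqref{eq::dual-reg}, which is justified only when the constraint is inactive at the regularized optimum; the binding case with a strictly positive multiplier is not covered by its two-case analysis, and indeed the proof of Corollary~\ref{cor::near_optimality} explicitly entertains $\lambda_\tau^\star = -V_g(\pi_\tau^\star)/\tau > 0$. On this point your argument is not merely an alternative but the more watertight one.
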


\begin{proof}
    We know that the Lagrangian $L(\pi, \lambda)$ has global saddle points due to strong duality; see Theorem \ref{thm::zero_duality_gap}. We want to show that the regularized Lagrangian $L_\tau(\pi, \lambda)$ has global saddle points too. With that goal in mind, we consider the following regularized problem
    \begin{equation}
        \label{eq::P-CRL-REG}
        \begin{array}{rl}
        \displaystyle\max_{\pi\, \in\, \Pi}
        & \;\; V_{r} (\pi) \,+\, \frac{\tau}{2}H(\pi)
        \\[0.2cm]
        \subjectto & \;\; V_g (\pi) 
        \;\geq\; 0, 
        \end{array}
    \end{equation}
    where $H(\pi)$ is the regularizer introduced in Section \ref{ss::proposed_method}. The term $V_{r} (\pi) + \frac{\tau}{2}H(\pi)$ can be redefined in terms of the regularized reward function $r_\tau(s, a) \DefinedAs r(s, a) - \frac{\tau}{2} h_a(a)$. This leads to a value function $V_{r_\tau} (\pi)$. As Theorem \ref{thm::zero_duality_gap} does not require any Assumption in terms of the reward function, it follows that Problem \eqref{eq::P-CRL-REG} has zero duality gap too. This implies that its associated Lagrangian $\hat{L}_\tau(\pi, \lambda) \DefinedAs V_{r} (\pi) + \frac{\tau}{2}H(\pi) + \lambda V_g(\pi)$ has global saddle points.

    It remains to be shown that adding the regularization of the Lagrangian multiplier preserves some global saddle points, i.e., $L_\tau(\pi, \lambda) = \hat{L}_\tau(\pi, \lambda) + \frac{\tau}{2}\lambda^2$ has global saddle points. We first consider the dual problem associated with $\hat{L}_\tau(\pi, \lambda)$
    \begin{equation}
        \label{eq::dual-reg}
        \min_{\lambda \in \lambda} \; \max_{\pi \in \Pi} \; V_{r} (\pi) + \frac{\tau}{2}H(\pi) + \lambda V_g(\pi).
    \end{equation}
    
    The Lagrangian multiplier $\lambda^\star_\tau=0$ is always a solution to the dual Problem \eqref{eq::dual-reg} due to complementary slackness. We denote by $\pi^\star_\tau$ a primal solution associated with $\lambda^\star_\tau$. If $V_g(\pi^\star_\tau)$ is strictly feasible, i.e. $V_g(\pi^\star_\tau)>0$, it follows that $\lambda^\star_\tau=0$ is the minimizer of the dual function. Otherwise, if $V_g(\pi^\star_\tau)=0$ the dual function does not depend on $\lambda$, and we can select $\lambda^\star_\tau=0$ safely. Furthermore, the pair $(\pi^\star_\tau, \lambda^\star_\tau)$ satisfies for all $\pi \in \Pi$,
    \begin{align}
    \nonumber
        \hat{L}_\tau(\pi^\star_\tau, \lambda^\star_\tau) + \frac{\tau}{2} (\lambda^\star_\tau)^2 \;\geq\;
        \hat{L}_\tau(\pi, \lambda^\star_\tau) + \frac{\tau}{2} (\lambda^\star_\tau)^2.
    \end{align}
    
    Similarly, as the minimum element of the set $\Lambda$ is zero, it follows that for all $\lambda \in \Lambda$,
    \begin{align}
    \nonumber
        \hat{L}_\tau(\pi^\star_\tau, \lambda^\star_\tau) + \frac{\tau}{2} (\lambda^\star_\tau)^2 \;\leq\; \hat{L}_\tau(\pi^\star_\tau, \lambda) + \frac{\tau}{2} (\lambda)^2.
    \end{align}

    This implies that the pair $(\pi^\star_\tau, \lambda^\star_\tau)$ is a global saddle point of $L_\tau(\pi, \lambda)$, as it satisfies for all $(\pi, \lambda) \in \Pi \times \Lambda$,
    \begin{equation}
        \nonumber
        L_\tau(\pi, \lambda^\star_\tau) 
        \;\leq\; L_\tau(\pi^\star_\tau, \lambda^\star_\tau) 
        \;\leq\; L_\tau(\pi^\star_\tau, \lambda).
    \end{equation}

    Moreover, the first inequality implies that for any $\pi \in \Pi$ we have
    \begin{align}
        \nonumber
        &V_{\lambda_\tau^\star}(\pi_\tau^\star) + \frac{\tau}{2} H(\pi_\tau^\star) + \frac{\tau}{2} h(\lambda_\tau^\star)  \;\geq\;  V_{\lambda_\tau^\star}(\pi) + \frac{\tau}{2} H(\pi) + \frac{\tau}{2} h(\lambda_\tau^\star)
    \end{align}
    which leads to
    \begin{align}
        \nonumber
        V_{\lambda_\tau^\star}(\pi_\tau^\star)  
        \;\geq\;  V_{\lambda_\tau^\star}(\pi) + \frac{\tau}{2} H(\pi).
    \end{align}

    The second inequality implies that for any $\lambda \in \Lambda$ we have
    \begin{align}
        \nonumber
        &V_{\lambda}(\pi_\tau^\star) + \frac{\tau}{2} H(\pi_\tau^\star)  + \frac{\tau}{2} h(\lambda)  \\
        \nonumber
        &\;\geq\; V_{\lambda_\tau^\star}(\pi_\tau^\star) + \frac{\tau}{2} H(\pi_\tau^\star) + \frac{\tau}{2} h(\lambda_\tau^\star) \\
        \nonumber
        &\;\geq\; V_{\lambda_\tau^\star}(\pi_\tau^\star) + \frac{\tau}{2} H(\pi_\tau^\star)
    \end{align}
    which leads to
    \begin{align}
        \nonumber
        V_{\lambda}(\pi_\tau^\star)  + \frac{\tau}{2} h(\lambda)  \;\geq\;
        V_{\lambda_\tau^\star}(\pi_\tau^\star).
    \end{align}

    Combining both inequalities we have for all $(\pi, \lambda) \in \Pi \times \Lambda$,
    \begin{equation}
    \nonumber
        V_{\lambda_\tau^\star}(\pi) + \frac{\tau}{2} H(\pi) 
        \;\leq\; V_{\lambda_\tau^\star}(\pi_\tau^\star) 
        \;\leq\;
        V_{\lambda}(\pi_\tau^\star) + \frac{\tau}{2} \lambda^2
    \end{equation}
    which concludes the proof.
\end{proof}

\subsection{Proof of Theorem \ref{thm::zero_duality_gap}}
\label{app::zero_duality_gap}

\begin{proof}

To establish zero duality gap we consider the perturbed function of Problem \eqref{eq::P-CRL}. For any $\delta \in \reals$, the perturbation function $P(\delta)$ associated with \eqref{eq::P-CRL} is defined as follows
\begin{align}
    \label{eq::perturbed_P-CRL}
    P(\delta) 
    \;\DefinedAs\;
    \max_{\pi \in \Pi}
    & \;\; V_{r} (\pi) \\
    \nonumber
    \subjectto & \;\; V_g (\pi) \geq \delta. \notag
\end{align}

The proof relies on Lemma \ref{lm::fenchel_moreau}, which states that if Slater's condition holds for \eqref{eq::P-CRL} and its perturbation function $P(\delta)$ is concave on $\delta$, then \eqref{eq::P-CRL} has zero duality gap, even if the problem in \eqref{eq::P-CRL} is non-convex. Therefore, we have to show that $P(\delta)$ is convex. More precisely, we need to establish that for a perturbation $\delta_\alpha \DefinedAs \alpha \delta + (1 - \alpha) \delta'$, where $\alpha \in [0, 1]$, it holds that
\begin{equation}
\nonumber
    P(\delta_\alpha) 
    \;\DefinedAs\;
    P(\alpha \delta + (1 - \alpha) \delta) 
    \;\geq\;
    \alpha P(\delta) + (1 - \alpha) P(\delta').
\end{equation}

Let $\pi$ and $\pi'$ be the policies that achieve the maximum value of $P(\delta)$ for perturbations $\delta$ and $\delta'$ respectively. This implies 
\begin{align}
    \label{eq::perturbation_inequalities}
    V_g(\pi) 
    \;\geq\;
    \delta 
    \;\; \text{and} \;\; 
    V_g(\pi') 
    \;\geq\;
    \delta'.
\end{align}

Now, for a given policy $\pi$, we consider the vector value function $V(\pi)\DefinedAs[V_r(\pi), V_g(\pi)]^\Tr$. We define the deterministic value image as the span of value functions associated with the class of deterministic policies $\Pi$ as:
\begin{equation}
    \nonumber
    \ccalV_D 
    \;\DefinedAs\;
    \{ V(\pi) : \pi \in \Pi \}.
\end{equation}

The deterministic value image $\ccalV_D$ is convex for non-atomic MDPs (see \citet[Corollary 3.10]{feinberg2019sufficiency}. This implies that there exists $\pi_\alpha \in \Pi$ such that $V({\pi_{\alpha}})\DefinedAs \alpha V(\pi) + (1 - \alpha) V({\pi'})$. More precisely:
\begin{align}
    V_r ({\pi_{\alpha}}) 
    \;=\;
    \alpha V_r (\pi) + (1 - \alpha) V_r (\pi') \\
    \label{eq::perturbation_convex_combinations}
    V_g ({\pi_{\alpha}}) 
    \;=\;
    \alpha V_g (\pi) + (1 - \alpha) V_g({\pi'}).
\end{align}

It follows immediately that the objective of \eqref{eq::perturbed_P-CRL} is convex. Then, it only remains to be confirmed that $\pi_{\alpha}$ is feasible for the perturbation $\delta_\alpha$. We perform a convex combination of the inequalities in \eqref{eq::perturbation_inequalities} to show that
\begin{align}
    \nonumber
    \alpha V_g({\pi}) + (1 - \alpha) V_g({\pi'}) 
    \;\geq\;
    \delta_\alpha  
    \\
    \nonumber \text{ and }\;\;
    V_g({\pi_\alpha}) 
    \;\geq\; \delta_\alpha
\end{align}
which concludes the proof.
\end{proof}

\subsection{Proof of Unique Saddle Point}
\label{app::uniqueness_saddle}

\begin{proof}

The regularized Lagrangian $L_\tau(\pi, \lambda)$ is a regular unconstrained MDP for a fixed $\lambda$. For all $\lambda$, if the reward function $r_{\lambda, \tau}$ is convex on $s$ and strictly concave on $a$, and the dynamics are linear, the resultant unconstrained MDP is guaranteed to have a unique maximizer $\pi_\tau^\star(\lambda)$ \citep{cruz2004conditions, montes2013nonuniqueness}. Now, consider the regularized dual function
\begin{equation}
    \label{eq::dual_uniqueness}
    D_\tau(\lambda) 
    \; = \;
    \lambda^2 + \max_{\pi \in \Pi} V_{\lambda, \tau} (\pi).
\end{equation}

We know by Danskin’s Theorem (see Lemma \ref{lm::danskins}) that (i) the function \eqref{eq::dual_uniqueness} is convex w.r.t $\lambda$, and (ii) the minimizer $\lambda^\star_\tau$ is unique if the Lagrangian maximizer $\pi_\tau^\star(\lambda)$ is unique for all $\lambda \in \Lambda$. Therefore, we can conclude uniqueness of the primal-dual optimal pair of $L_\tau(\pi, \lambda)$ under concavity of the reward function $r_{\lambda, \tau}$ and linearity of the dynamics. 
\end{proof}

\subsection{Proof of Theorem \ref{thm::convergence_exact}}

\label{app::convergence_exact}

\begin{proof}

We set up the stage by introducing the constants that will be relevant throughout the proof. First, we have $C_P \DefinedAs L_r + \lambda_{\text{max}} L_g + \tau L_h + \tau \sqrt{d_a} A_{\text{max}}$, where $L_r$, $L_g$, and $L_h$ are the Lipschitz constants of action value functions introduced in Assumption \ref{as::lipschitz}, and $A_{\text{max}}$ is the maximum action of the bounded action space $A$. Then, we have $C_D\DefinedAs \frac{1}{1 - \gamma}(1 + \frac{\tau}{\xi}) \geq V_g (\pi_t) + \tau \lambda_t$, where $\xi$ comes from the feasibility Assumption \ref{as::feasibility}.

We also recall that the operator $\ccalP_{\Pi^\star_\tau}$  projects a policy  onto the set of regularized optimal policies $\Pi^\star_\tau$ with state visitation distribution $d^\star_\rho$. For a given policy $\pi_t$, we denote $\pi^\star_{\tau, t} \DefinedAs \ccalP_{\Pi^\star_\tau}(\pi_t)$. Similarly, the operator $\ccalP_{\Lambda^\star_\tau}$ projects a Lagrangian multiplier onto the set of regularized optimal Lagrangian multipliers $\Lambda^\star_\tau$, and the projection is denoted as $\lambda^\star_{\tau, t} \DefinedAs \ccalP_{\Lambda^\star_\tau}(\lambda_t)$. 

We proceed by decomposing the primal-dual gap by adding and substracting $L_\tau(\pi_t, \lambda_t)$,
\begin{align}
    \nonumber
    &L_\tau(\pi^\star_{\tau, t}, \lambda_t) - L_\tau(\pi_t, \lambda^\star_{\tau, t}) \\
    \nonumber
    &=\; \underbrace{L_\tau(\pi^\star_{\tau, t}, \lambda_t) - L_\tau(\pi_t, \lambda_t)}_{\text{\normalfont (i)}} 
    \,+\,
    \underbrace{L_\tau(\pi_t, \lambda_t) - L_\tau(\pi_t, \lambda^\star_{\tau, t})}_{\text{\normalfont (ii)}}.
\end{align}

Let's define the distance function $D_t(\pi^\star_{\tau, t})= \E_{d^\star_\rho} \left[ \| \pi^\star_{\tau, t}(s) - \pi_t(s) \|^2  \right]$. Now, using the definition of the regularized Lagrangian and the performance difference Lemma \ref{lm::pd_lemma}, we can show that for term (i) we have that
\begin{align}
    \label{eq::pd_exact_1a}
    L_\tau(\pi^\star_{\tau, t}, \lambda_t) - L_\tau(\pi_t, \lambda_t) 
    &\;=\; V_{\lambda_t, \tau} (\pi^\star_{\tau, t}) - V_{\lambda_t, \tau} ({\pi_t}) \\
    \nonumber
    &\;=\; \frac{1}{1 - \gamma} \E_{d^\star_\rho} \left[ A^{\pi_t}_{\lambda_t, \tau}(s, \pi^\star_{\tau, t}(s)) \right].
\end{align}
Note that the objective function in \eqref{eq::primal_update} is strongly concave w.r.t. $a$. Therefore, we can leverage the quadratic growth Lemma \ref{lm::qg_lemma} with the function $f(a) \DefinedAs A^{\pi_t}_{\lambda_t, \tau}(s, a) - \frac{1}{2 \eta}\|\pi_t(s) - a\|^2$, and its the maximizer being $a^\star=\pi_{t+1}(s)$. By the optimality conditions of \eqref{eq::primal_update} we have that
\begin{align}
    \label{eq::pd_exact_1b}
     \E_{d^\star_\rho} \left[ A^{\pi_t}_{\lambda_t, \tau}(s, \pi^\star_{\tau, t}(s)) \right] 
    &\;\leq\; \E_{d^\star_\rho} \left[ A^{\pi_t}_{\lambda_t, \tau}(s, \pi_{t+1}(s)) - \frac{1}{2\eta} \|\pi_{t+1}(s) - \pi_t(s)\|^2 \right] \\
    \nonumber
    & \;\;\;\; \;\;+ \E_{d^\star_\rho} \left[\frac{1}{2\eta} \| \pi^\star_{\tau, t}(s) - \pi_t(s)\|^2 \right] \\
    \nonumber
    & \;\;\;\; \;\;- \E_{d^\star_\rho} \left[ \left(\frac{1 + \eta (\tau - \tau_0)}{2 \eta}\right) \|\pi^\star_{\tau, t}(s) - \pi_{t+1}(s)\|^2 \right].
\end{align}
Now, we can use Lemma \ref{lm::lipschitz_lemma} with $a=\pi_{t+1}(s)$ to bound the term
\begin{align}
    \label{eq::pd_exact_1c}
    \E_{d^\star_\rho} \left[ A^{\pi_t}_{\lambda_t, \tau}(s, \pi_{t+1}(s)) - \frac{1}{2\eta} \|\pi_{t+1}(s) - \pi_t(s)\|^2 \right] 
    \;\leq\;
    \frac{\eta}{2} C_P^2.
\end{align}
Combining the expressions in \eqref{eq::pd_exact_1a}, \eqref{eq::pd_exact_1b} and \eqref{eq::pd_exact_1c} we have that for term (i) it holds that
\begin{align}
    \nonumber
    L_\tau(\pi^\star_{\tau, t}, \lambda_t) - L_\tau(\pi_t, \lambda_t) 
    \;\leq\;
    \frac{D_t(\pi^\star_{\tau, t}) - ( 1 + \eta (\tau - \tau_0)) D_{t+1}(\pi^\star_{\tau, t}) }{2 \eta (1 - \gamma)}  + \frac{\eta}{2 (1 - \gamma)} C_P^2.
\end{align}

For the second term (ii) we use the definition of the regularized Lagrangian and the definition of the regularized value function to show that
\begin{align}
    \label{eq::pd_exact_2a}
    &L_\tau(\pi_t, \lambda_t) - L_\tau(\pi_t, \lambda^\star_{\tau, t}) \\
    \nonumber
    &=\; V_{\lambda_t, \tau}({\pi_t}) - V_{\lambda^\star_{\tau, t}, \tau}({\pi_t}) + \frac{\tau}{2} (\lambda_t)^2 - \frac{\tau}{2} (\lambda^\star_{\tau, t})^2 \\
    \nonumber
    &=\; (\lambda_t - \lambda^\star_{\tau, t}) V_g (\pi_t) + \frac{\tau}{2} (\lambda_t)^2 - \frac{\tau}{2} (\lambda^\star_{\tau, t})^2.
\end{align}
Then, completing the squares we have that
\begin{align}
    \label{eq::pd_exact_2b}
    &(\lambda_t - \lambda^\star_{\tau, t}) V_g (\pi_t) + \frac{\tau}{2} (\lambda_t)^2 - \frac{\tau}{2} (\lambda^\star_{\tau, t})^2 \\
    \nonumber
    &=\; (\lambda_t - \lambda^\star_{\tau, t}) (V_g (\pi_t) + \tau \lambda_t) - \frac{\tau}{2}  \|\lambda^\star_{\tau, t} - \lambda_t\|^2.
\end{align}
We can now use the standard descent Lemma \ref{lm::descent_lemma} to show that
\begin{align}
    \label{eq::pd_exact_2c}
    &(\lambda_t - \lambda^\star_{\tau, t}) (V_g (\pi_t) + \tau \lambda_t) - \frac{\tau}{2}  \|\lambda^\star_{\tau, t} - \lambda_t\|^2 \\
    \nonumber
    &\leq \; \frac{\|\lambda^\star_{\tau, t} - \lambda_t \|^2 - \|\lambda^\star_{\tau, t} - \lambda_{t+1} \|^2}{2 \eta} + \frac{1}{2} \eta C_D^2 - \frac{\tau}{2} \| \lambda_t - \lambda^\star_{\tau, t} \|^2 \\
    \nonumber
    &=\; \frac{(1 - \eta \tau)\|\lambda^\star_{\tau, t} - \lambda_t \|^2 - \|\lambda^\star_{\tau, t} - \lambda_{t+1} \|^2}{2 \eta} + \frac{1}{2} \eta C_D^2,
\end{align}
where the last equality rearranges the term $\frac{\tau}{2} \| \lambda_t - \lambda^\star_{\tau, t} \|^2$ inside the fraction. Now, combining the expressions in \eqref{eq::pd_exact_2a}, \eqref{eq::pd_exact_2b} and \eqref{eq::pd_exact_2c} we have that for term (ii) it holds that
\begin{align}
    \nonumber
    &L_\tau(\pi_t, \lambda_t) - L_\tau(\pi_t, \lambda^\star_{\tau, t}) 
    \;=\;
    \frac{(1 - \eta \tau)\|\lambda^\star_{\tau, t} - \lambda_t \|^2 - \|\lambda^\star_{\tau, t} - \lambda_{t+1} \|^2}{2 \eta} + \frac{1}{2} \eta C_D^2.
\end{align}

The next step is to combine (i) and (ii). We define $C_0 \DefinedAs C_P + C_D$ and since the duality gap $L_\tau(\pi^\star_{\tau, t}, \lambda_t) - L_\tau(\pi_t, \lambda^\star_{\tau, t})$ is positive and holds for any $(\pi_\tau^\star,\lambda_\tau^\star) \in \Pi^\star_\tau \times \Lambda^\star_\tau$, we have that
\begin{align}
    \nonumber
    0 & \;\leq\; \eta (1 - \gamma) \left( L_\tau(\pi^\star_{\tau, t}, \lambda_t) - L_\tau(\pi_t, \lambda^\star_{\tau, t})\right) \\
    \nonumber
    & \;\leq\; \frac{1}{2} D_t(\pi^\star_{\tau, t}) + \frac{1 - \eta \tau}{2} \|\lambda^\star_{\tau, t} - \lambda_t \|^2  \\
    \nonumber
    & \;\;\;\;\;\;- \frac{ 1 + \eta (\tau - \tau_0)}{2} D_{t+1}(\pi^\star_{\tau, t}) - \frac{1}{2}\|\lambda^\star_{\tau, t} - \lambda_{t+1} \|^2 + \eta^2 C_0^2.
\end{align}
Rearranging the expression above, it follows that
\begin{align}
    \nonumber
    &\frac{ 1 + \eta (\tau - \tau_0)}{2} D_{t+1}(\pi^\star_{\tau, t}) + \frac{1}{2}\|\lambda^\star_{\tau, t} - \lambda_{t+1} \|^2 \\
    \nonumber
    &=\; ( 1 + \eta (\tau - \tau_0)) \left( \frac{1}{2} D_{t+1}(\pi^\star_{\tau, t}) + \frac{1}{2(1 + \eta (\tau - \tau_0))}\| \lambda^\star_{\tau, t} - \lambda_{t+1} \|^2 \right) \\
    \nonumber
    &\leq\; \frac{1}{2} D_t(\pi^\star_{\tau, t}) + \frac{1 - \eta \tau}{2}\|\lambda^\star_{\tau, t} - \lambda_t \|^2 + \eta^2 C_0^2\\
    \nonumber
    &\leq\; \frac{1}{2} D_t(\pi^\star_{\tau, t}) + \frac{1}{2(1 + \eta (\tau - \tau_0))}\|\lambda^\star_{\tau, t} - \lambda_t \|^2 + \eta^2 C_0^2,
\end{align}
where the last inequality follows from the fact that $\tau_0 \in [0, \tau)$, and therefore $\frac{1 - \eta \tau}{2} \leq \frac{1}{2(1 + \eta (\tau - \tau_0))}$. Therefore, in summary we have that
\begin{align}
    \label{eq::inequality_contraction_a}
    &( 1 + \eta (\tau - \tau_0))  \left( \frac{1}{2} D_{t+1}(\pi^\star_{\tau, t}) + \frac{1}{2(1 + \eta (\tau - \tau_0))}\| \lambda^\star_{\tau, t} - \lambda_{t+1} \|^2 \right) \\
    \nonumber
    &\;\leq\; \frac{1}{2} D_t(\pi^\star_{\tau, t}) + \frac{1}{2(1 + \eta (\tau - \tau_0))}\|\lambda^\star_{\tau, t} - \lambda_t \|^2 + \eta^2 C_0^2.
\end{align}
Now, since $\pi_\tau^\star = \ccalP_{\Pi^\star_\tau}(\pi_t)$ and $\lambda_\tau^\star = \ccalP_{\Lambda^\star_\tau}(\lambda_t)$ are projections, it holds that
\begin{align}
    \nonumber
    &\frac{ 1 + \eta (\tau - \tau_0)}{2} D_{t+1}(\pi^\star_{\tau, t + 1}) + \frac{1}{2}\|\lambda^\star_{\tau, t + 1} - \lambda_{t+1} \|^2 \\
    \nonumber
    & =\; \frac{ 1 + \eta (\tau - \tau_0)}{2} \E_{d^\star_\rho} \left[ \|(\ccalP_{\Pi^\star_\tau}(\pi_{t+1}(s)) - \pi_{t+1}(s)\|^2  \right]  + \frac{1}{2}\|\ccalP_{\Lambda^\star_\tau}(\lambda_{t+1}) - \lambda_{t+1} \|^2 \\
    \nonumber
    &\leq \;\frac{ 1 + \eta (\tau - \tau_0)}{2} \E_{d^\star_\rho} \left[ \|(\ccalP_{\Pi^\star_\tau}(\pi_{t}(s)) - \pi_{t+1}(s)\|^2  \right]  + \frac{1}{2}\| \ccalP_{\Lambda^\star_\tau}(\lambda_{t}) - \lambda_{t+1} \|^2 \\
    \nonumber
    &= \;\frac{ 1 + \eta (\tau - \tau_0)}{2} D_{t+1}(\pi^\star_{\tau, t}) + \frac{1}{2}\|\lambda^\star_{\tau, t} - \lambda_{t+1} \|^2.
\end{align}
In summary, we have that
\begin{align}
    \label{eq::inequality_projection_a}
    &\frac{ 1 + \eta (\tau - \tau_0)}{2} D_{t+1}(\pi^\star_{\tau, t + 1}) + \frac{1}{2}\|\lambda^\star_{\tau, t + 1} - \lambda_{t+1} \|^2 \\
    \nonumber
    &= \;\frac{ 1 + \eta (\tau - \tau_0)}{2} D_{t+1}(\pi^\star_{\tau, t}) + \frac{1}{2}\|\lambda^\star_{\tau, t} - \lambda_{t+1} \|^2.
\end{align}
We define the potential function $\Phi_t \DefinedAs \frac{1}{2} D_t(\pi_{\tau, t}^\star) + \frac{1}{2(1 + \eta (\tau - \tau_0))} \| \ccalP_{\Lambda^\star_\tau}(\lambda_{t}) - \lambda_t \|^2$ and combine the inequalities \eqref{eq::inequality_projection_a} and \eqref{eq::inequality_contraction_a} to show
\begin{equation}
    \label{eq::potential_contraction}
    (1 + \eta (\tau - \tau_0)) \Phi_{t+1} 
    \;\leq\;
    \Phi_t + \eta^2 C_0^2.
\end{equation}

The next step is to expand \eqref{eq::potential_contraction} so that we have
\begin{align}
    \nonumber
    \Phi_{t+1} &\;\leq\; \frac{1}{1 + \eta (\tau - \tau_0)} \Phi_t + \eta^2 C_0^2 
    \\
    \nonumber
    &\;\leq\; \left(\frac{1}{1 + \eta (\tau - \tau_0)} \right)^2 \Phi_{t-1}  + \left(\eta^2 + \eta^2 \left(\frac{1}{1 + \eta (\tau - \tau_0)}\right) \right) C_0^2.
\end{align}
If we keep expanding recursively we end up having
\begin{align}
    \nonumber
    \Phi_{t+1} &\;\leq\; \left(\frac{1}{1 + \eta (\tau - \tau_0)} \right)^t \Phi^{1}  + \left(\eta^2 \left( 1 + \left(\frac{1}{1 + \eta (\tau - \tau_0)}\right) + \hdots \right) \right) C_0^2.
\end{align}
Finally, using the definition of the exponential for the first term, and using the geometric series for the second one, we have that
\begin{align}
    \nonumber
    &\Phi_{t+1} \;\leq\; {\rm e}^{-\frac{\eta (\tau - \tau_0)}{1 + \eta (\tau - \tau_0) }t} \Phi_1 + \frac{\eta (1 + \eta (\tau - \tau_0))}{\tau - \tau_0} C_0^2
\end{align}
which completes the proof.
\end{proof}

\subsection{Proof of Corollary \ref{cor::near_optimality}}
\label{app::corollary_reg_problem}

\begin{proof}
    Due to Theorem \eqref{thm::convergence_exact}, taking $\tau=O(\epsilon) + \tau_0$ and $\eta=O(\epsilon)$ leads to $\Phi_{t+1}=O(\epsilon)$ for any $t= \Omega(\frac{1}{\epsilon^3} \text{\normalfont log} \frac{1}{\epsilon})$, where $\Omega$ encapsulates some problem-dependent constants. For some $t= \Omega(\frac{1}{\epsilon^3} \text{\normalfont log} \frac{1}{\epsilon})$ the primal-dual iterate $(\pi_t, \lambda_t)$ satisfies $D_t(\pi^\star_{\tau, t})=O(\epsilon)$ and $\frac{1}{2(1 + \eta (\tau - \tau_0))}\| \lambda^\star_{\tau, t} - \lambda_t \|^2 = O(\epsilon)$. Now, adding and substracting $V_r({\pi^\star_{\tau, t}})$ we have
    \begin{align}
        \nonumber
        &V_r({\pi^\star}) - V_r({\pi_t}) 
        \;=\;
        \underbrace{V_r({\pi^\star}) - V_r({\pi^\star_{\tau, t}})}_{\text{\normalfont (i)}} 
        \,+ \,
        \underbrace{V_r({\pi^\star_{\tau, t}}) - V_r({\pi_t})}_{\text{\normalfont (ii)}}.
    \end{align}

    For the term (i) we can take $\pi=\pi^\star$ in \eqref{eq::sandwich_property} to show
    \begin{equation}
    \nonumber
        V_r({\pi^\star}) + \frac{\tau}{2} H(\pi^\star) 
        \;\leq\; V_r({\pi^\star_{\tau, t}}) + \lambda_\tau^\star \left( V_g({\pi^\star_{\tau, t}}) - V_g({\pi^\star}) \right).
    \end{equation}
    Then, taking $\lambda=0$ in \eqref{eq::sandwich_property} leads to $\lambda_\tau^\star  V_g({\pi^\star_\tau}) \leq 0$. Noticing that $\pi^\star$ feasible implies $V_g({\pi^\star}) \geq 0$ we have
    \begin{equation}
        \label{eq::bound_first_term}
        V_r({\pi^\star}) - V_r({\pi^\star_{\tau, t}}) \;\leq\;
        - \frac{\tau}{2} H(\pi^\star).
    \end{equation}

    For the term (ii), by the performance difference Lemma in \ref{lm::pd_lemma} with $\tau=0$ we have that
    \begin{align}
        \nonumber
        &V_r({\pi^\star_{\tau, t}}) - V_r({\pi_t}) 
        \;=\;
        \frac{1}{1 - \gamma} \E_{d^\star_\rho} \left[ Q_r^{\pi_t} (s, \pi^\star_{\tau, t}(s)) - Q_r^{\pi_t}(s, \pi_t(s)) \right].
    \end{align}
    Then, we have that by Assumption \ref{as::lipschitz} the action-value function is $L_r$-Lipschitz
    \begin{align}
        \nonumber
        \frac{1}{1 - \gamma} \E_{d^\star_\rho} \left[ Q_r^{\pi_t} (s, \pi^\star_{\tau, t}(s)) - Q_r^{\pi_t}(s, \pi_t(s)) \right] 
        &\;\leq\;
        \frac{L_r}{1 - \gamma} \E_{d^\star_\rho} \left[ \| \pi^\star_{\tau, t} (s) - \pi_t(s) \|  \right] 
        \\
        \nonumber
        &\;=\; \frac{L_r}{1 - \gamma} \E_{d^\star_\rho} \left[ \sqrt{\| \pi^\star_{\tau, t} (s) - \pi_t(s) \|^2} \; \right],
    \end{align}
    where we have taken the square and the square root in the last equality. Note now that we can use Cauchy-Schwartz to show that
    \begin{align}
        \nonumber
        V_r({\pi^\star_{\tau, t}}) - V_r({\pi_t}) 
        &\;=\; \frac{L_r}{1 - \gamma} \E_{d^\star_\rho} \left[ \sqrt{\| \pi^\star_{\tau, t} (s) - \pi_t(s) \|^2} \; \right] \\
        \nonumber
        &\;\leq\; \frac{L_r}{1 - \gamma} \sqrt{\E_{d^\star_\rho} \left[ \| \pi^\star_{\tau, t} (s) - \pi_t(s) \|^2 \right]} \\
        \nonumber
        &\;=\; \frac{L_r}{1 - \gamma} \sqrt{D_t(\pi^\star_{\tau, t})}.
    \end{align}
    
    Therefore, combining (i) and (ii) leads to $V_r({\pi^\star}) - V_r({\pi_t}) \leq O(\sqrt{\epsilon}) - \frac{\tau_0}{2} H(\pi^\star)$.

    Then, for any $\pi^\star_\tau \in \Pi^\star_\tau$, we have
    \begin{equation}
    \nonumber
        -V_g({\pi_t}) 
        \; = \;
        \underbrace{-V_g({\pi^\star_\tau})}_{\text{\normalfont (iii)}} 
        \,+\, \underbrace{V_g({\pi^\star_\tau}) - V_g({\pi_t})}_{\text{\normalfont (iv)}}.
    \end{equation}

    For (iii) we take $\lambda=\lambda_{\text{max}}\DefinedAs\frac{1}{(1 - \gamma)\xi}$ in  \eqref{eq::sandwich_property} to show that $-(\lambda_{\text{max}} - \lambda_\tau^\star) V_g({\pi_\tau^\star}) \leq \frac{\tau}{2}(\lambda_{\text{max}})^2$. By the definition

    \begin{equation}
    \nonumber
        \lambda_\tau^\star \;\DefinedAs \;
        \argmin_{\lambda \in \Lambda} \; \lambda V_g({\pi_{\tau, t}^\star}) + \frac{\tau}{2} \lambda^2,
    \end{equation}
    $\lambda_\tau^\star$ can take three different values:
    \begin{enumerate}
        \item When $0 \leq - V_g({\pi_\tau^\star})/\tau \leq \lambda_{\text{max}}$, then $\lambda_\tau^\star=- V_g({\pi_\tau^\star})/\tau$ and $\lambda_{\text{max}}-\lambda_\tau^\star \geq 0$.
        \item When $- V_g({\pi_\tau^\star})/\tau \leq 0$, then $\lambda_\tau^\star=0$ and $\lambda_{\text{max}}-\lambda_\tau^\star \geq 0$.
        \item When $- V_g({\pi_\tau^\star})/\tau \geq \lambda_{\text{max}}$, then $\lambda_\tau^\star=\lambda_{\text{max}}$.
    \end{enumerate}

    For the third case, we take $\pi = \pi^\star$ in \eqref{eq::sandwich_property} to show
    \begin{align}
        \label{eq::part1}
        &\lambda_{\text{max}} (V_g({\pi^\star}) - V_g({\pi^\star_\tau})) \;\leq \;
        V_r({\pi^\star_\tau}) - V_r({\pi^\star}) - \frac{\tau}{2}H(\pi^\star).
    \end{align}

    Furthermore, for any global saddle point of the non-regularized Lagrangian $(\pi^\star, \lambda^\star) \in \Pi^\star \times \Lambda^\star$, it holds
    \begin{equation}
        \label{eq::part2}
        V_r({\pi^\star_\tau}) - V_r({\pi^\star}) 
        \;\leq\;
        \lambda^\star (V_g({\pi^\star}) - V_g({\pi^\star_\tau})).
    \end{equation}

    Combining \eqref{eq::part1} and \eqref{eq::part2}, and leveraging that $V_g(\pi^\star) \geq 0$, we have
    \begin{equation}
    \nonumber
        -(\lambda_{\text{max}} - \lambda^\star)V_g({\pi^\star_\tau}) 
        \;\leq\;
        - \frac{\tau}{2}H(\pi^\star).
    \end{equation}

    As $\lambda^\star \leq \lambda_{\text{max}}$ is a global saddle point,  we can conclude that
    \begin{equation}
    \nonumber
        -V_g({\pi_\tau^\star}) \;\leq\;
        - \frac{\tau}{2(\lambda_{\text{max}} - \lambda^\star)} H(\pi^\star).
    \end{equation}

    The term (iv) has a similar bound to (ii), so we can show that it is $O(\sqrt{\epsilon})$. Combining (iii) and (iv) we have $-V_g({\pi_t}) \leq O(\sqrt{\epsilon}) - \frac{\tau_0}{2(\lambda_{\text{max}} - \lambda^\star)} H(\pi^\star)$. We replace $\sqrt{\epsilon}$ for $\epsilon$ and combine all big O notation to conclude the proof.
\end{proof}

\subsection{Proof of Theorem \ref{thm::convergence_inexact}}

\label{app::convergence_inexact}

\begin{proof}

We recall the constants that will be relevant throughout the proof. First, we have $C_P \DefinedAs L_r + \lambda_{\text{max}} L_g + \tau L_h + \tau \sqrt{d_a} A_{\text{max}}$, where $L_r$, $L_g$, and $L_h$ are the Lipschitz constants of action value functions introduced in Assumption \ref{as::lipschitz}, and $A_{\text{max}}$ is the maximum action of the bounded action space $A$. Then, we have $C_D\DefinedAs \frac{1}{1 - \gamma}(1 + \frac{\tau}{\xi}) \geq V_g (\pi_t) + \tau \lambda_t$, where $\xi$ comes from the feasibility Assumption \ref{as::feasibility}. We also introduce again the operator $\ccalP_{\Pi^\star_\tau}$ that projects a policy  onto the set of regularized optimal policies $\Pi^\star_\tau$ with state visitation distribution $d^\star_\rho$. For a given policy $\pi_t$, we denote $\pi^\star_{\tau, t} \DefinedAs \ccalP_{\Pi^\star_\tau}(\pi_t)$. Similarly, the operator $\ccalP_{\Lambda^\star_\tau}$ projects a Lagrangian multiplier onto the set of regularized optimal Lagrangian multipliers $\Lambda^\star_\tau$, and the projection is denoted as $\lambda^\star_{\tau, t} \DefinedAs \ccalP_{\Lambda^\star_\tau}(\lambda_t)$. 

We begin by using the definition of the regularized advantage evaluated in $\pi^\star_{\tau, t}(s)$, and we add and substract $\frac{1}{\eta} \pi_t(s)^\Tr \pi^\star_{\tau, t}(s)$ so that we have
\begin{align}
    \nonumber
    A_{\lambda_t, \tau}^{\pi_t} (s, \pi^\star_{\tau, t}(s)) 
    &\;=\; Q_{\lambda_t, \tau}^{\pi_t}(s, \pi^\star_{\tau, t}(s)) - V_{\lambda_t, \tau}^{\pi_t}(s)   - \frac{\tau}{2} \| \pi^\star_{\tau, t}(s) \|^2 + \frac{\tau}{2} \| \pi_t(s) \|^2  \\
    \nonumber
    &\;=\; Q_{\lambda_t, \tau}^{\pi_t}(s, \pi^\star_{\tau, t}(s)) - V_{\lambda_t, \tau}^{\pi_t}(s) - \frac{\tau}{2} \| \pi^\star_{\tau, t}(s) \|^2  \\
    \nonumber
    & \;\;\;\;\;\;\; + \frac{\tau}{2} \| \pi_t(s) \|^2 + \frac{1}{\eta} \pi_t(s)^\Tr \pi^\star_{\tau, t}(s) - \frac{1}{\eta} \pi_t(s)^\Tr \pi^\star_{\tau, t}(s).
\end{align}
Note that we can equivalently use the augmented action value function $J^{\pi_t}$ such that it holds that
\begin{align}
    \nonumber
    A_{\lambda_t, \tau}^{\pi_t} (s, \pi^\star_{\tau, t}(s)) 
    & \;=\; J^{\pi_t}(s, \pi^\star_{\tau, t}(s)) - \frac{1}{\eta} \pi_t(s)^\Tr \pi^\star_{\tau, t}(s) - V_{\lambda_t, \tau}^{\pi_t}(s)  - \frac{\tau}{2} \| \pi^\star_{\tau, t}(s) \|^2 + \frac{\tau}{2} \| \pi_t(s) \|^2.
\end{align}
Using the definition of the approximation error introduced in Assumption \ref{as::approximation_error},  $\delta_{\theta_t} (s, a) \DefinedAs \Tilde{J}_{\theta_t}(s, a) - J^{\pi_t}(s, a)$, we have that
\begin{align}
    \nonumber
    A_{\lambda_t, \tau}^{\pi_t} (s, \pi^\star_{\tau, t}(s)) 
    & \;=\; \Tilde{J}_{\theta_t}(s, \pi^\star_{\tau, t}(s)) - \delta_{\theta_t}(s, \pi^\star_{\tau, t}(s)) - \frac{1}{\eta} \pi_t(s)^\Tr \pi^\star_{\tau, t}(s)  \\
    \nonumber
    &\;\;\;\;\; - V_{\lambda_t, \tau}^{\pi_t}(s)- \frac{\tau}{2} \| \pi^\star_{\tau, t}(s) \|^2 + \frac{\tau}{2} \| \pi_t(s) \|^2.
\end{align}
Now, we can use the quadratic growth Lemma \ref{lm::qg_lemma} and the optimality conditions of the strongly concave objective in \eqref{eq::approx_primal_update}, with $f(a) = \Tilde{J}_{\theta_t}(s, a) - \frac{\tau}{2} \| a \|^2 - \frac{1}{2 \eta} \| a \|^2$ and the maximizer being $a^\star=\pi_{t+1}(s)$ to show that the following bound holds
\begin{align}
    \label{eq::bound_adv_a}
    A_{\lambda_t, \tau}^{\pi_t} (s, \pi^\star_{\tau, t}(s)) 
    & \;=\; \Tilde{J}_{\theta_t}(s, \pi^\star_{\tau, t}(s)) - \delta_{\theta_t}(s, \pi^\star_{\tau, t}(s)) - \frac{1}{\eta} \pi_t(s)^\Tr \pi^\star_{\tau, t}(s)  \\
    \nonumber
    &\;\;\;\; \;\; - V_{\lambda_t, \tau}^{\pi_t}(s)- \frac{\tau}{2} \| \pi^\star_{\tau, t}(s) \|^2 + \frac{\tau}{2} \| \pi_t(s) \|^2 \\
    \nonumber
    &\;\leq \; \Tilde{J}_{\theta_t}(s, \pi_{t + 1}(s)) - \frac{\tau}{2} \| \pi_{t+1}(s) \|^2 - \frac{1}{2 \eta} \| \pi_{t+1}(s) \|^2 \\
    \nonumber
    &\;\;\;\; \;\; - \left( \frac{1 + \eta (\tau - \tau_0) }{ 2 \eta}\right)\|\pi^\star_{\tau, t}(s) - \pi_{t+1}(s)\|^2  \\
    \nonumber
    &\;\;\;\; \;\; + \frac{\tau}{2} \| \pi^\star_{\tau, t}(s) \|^2 + \frac{1}{2 \eta}\| \pi^\star_{\tau, t}(s) \|^2 - \delta_{\theta_t}(s, \pi^\star_{\tau, t}(s))      \\
    \nonumber
    &\;\;\;\; \;\;  - \frac{1}{\eta} \pi_t(s)^\Tr \pi^\star_{\tau, t}(s) - V_{\lambda_t, \tau}^{\pi_t}(s) - \frac{\tau}{2} \| \pi^\star_{\tau, t}(s) \|^2 \\
    \nonumber
    &\;\;\;\; \;\; + \frac{\tau}{2} \| \pi_t(s) \|^2.
\end{align}
Using again the definition of the approximation error introduced in Assumption \ref{as::approximation_error}, we know that $J^{\pi_t}(s, a) \DefinedAs \Tilde{J}_{\theta_t}(s, a) - \delta_{\theta_t} (s, a)$. Then, adding and substracting $\frac{\tau}{2} \| \pi_t(s) \|^2$ we have the following equality

\begin{align}
    \label{eq::bound_adv_b}
    &\Tilde{J}_{\theta_t}(s, \pi_{t + 1}(s)) - \frac{\tau}{2} \| \pi_{t+1}(s) \|^2 - \frac{1}{2 \eta} \| \pi_{t+1}(s) \|^2  - \left( \frac{1 + \eta (\tau - \tau_0) }{ 2 \eta}\right)\|\pi^\star_{\tau, t}(s) - \pi_{t+1}(s)\|^2 \\
    \nonumber
    & + \frac{\tau}{2} \| \pi^\star_{\tau, t}(s) \|^2  + \frac{1}{2 \eta}\| \pi^\star_{\tau, t}(s) \|^2   - \delta_{\theta_t}(s, \pi^\star_{\tau, t}(s)) - \frac{1}{\eta} \pi_t(s)^\Tr \pi^\star_{\tau, t}(s)  \\
    \nonumber
    &  - V_{\lambda_t, \tau}^{\pi_t}(s) - \frac{\tau}{2} \| \pi^\star_{\tau, t}(s) \|^2 + \frac{\tau}{2} \| \pi_t(s) \|^2 \\
    \nonumber
    &= \; J^{\pi_t}(s, \pi_{t + 1}(s)) + \delta_{\theta_t}(s, \pi_{t+1}(s))  - \frac{\tau}{2} \| \pi_{t+1}(s) \|^2 - \frac{1}{2 \eta} \| \pi_{t+1}(s) \|^2 \\
    \nonumber
    & \;\;\;\; \;\; - \left( \frac{1 + \eta (\tau - \tau_0) }{2 \eta}\right)\|\pi^\star_{\tau, t}(s) - \pi_{t+1}(s)\|^2  + \frac{\tau}{2} \| \pi^\star_{\tau, t}(s) \|^2 + \frac{1}{2 \eta}\| \pi^\star_{\tau, t}(s) \|^2 - \delta_{\theta_t}(s, \pi^\star_{\tau, t}(s)) \\
    \nonumber
    & \;\;\;\; \;\; - \frac{1}{\eta} \pi_t(s)^\Tr \pi^\star_{\tau, t}(s) - V_{\lambda_t, \tau}^{\pi_t}(s)- \frac{\tau}{2} \| \pi^\star_{\tau, t}(s) \|^2  + \frac{1}{2\eta}\| \pi_t(s) \|^2 - \frac{1}{2\eta}\| \pi_t(s) \|^2 + \frac{\tau}{2} \| \pi_t(s) \|^2.
\end{align}
We can rearrange some terms in the expression above by realizing that they are expanded differences of quadratics. Therefore, after some cumbersome algebraic manipulations we have that it holds that
\begin{align}
    \label{eq::bound_adv_c}
    &
    J^{\pi_t}(s, \pi_{t + 1}(s)) + \delta_{\theta_t}(s, \pi_{t+1}(s))   - \frac{\tau}{2} \| \pi_{t+1}(s) \|^2 - \frac{1}{2 \eta} \| \pi_{t+1}(s) \|^2 
    \\
    \nonumber
    & - \left( \frac{1 + \eta (\tau - \tau_0) }{2 \eta}\right)\|\pi^\star_{\tau, t}(s) - \pi_{t+1}(s)\|^2 + \frac{\tau}{2} \| \pi^\star_{\tau, t}(s) \|^2 + \frac{1}{2 \eta}\| \pi^\star_{\tau, t}(s) \|^2 - \delta_{\theta_t}(s, \pi^\star_{\tau, t}(s)) \\
    \nonumber
    &  - \frac{1}{\eta} \pi_t(s)^\Tr \pi^\star_{\tau, t}(s) - V_{\lambda_t, \tau}^{\pi_t}(s)- \frac{\tau}{2} \| \pi^\star_{\tau, t}(s) \|^2  + \frac{1}{2\eta}\| \pi_t(s) \|^2 - \frac{1}{2\eta}\| \pi_t(s) \|^2 + \frac{\tau}{2} \| \pi_t(s) \|^2 \\
    \nonumber
    &= \; A_{\lambda_t, \tau}^{\pi_t}(s, \pi_{t+1}(s)) - \frac{1}{2 \eta}\| \pi_t(s) - \pi_{t+1}(s) \|^2  + \frac{1}{2\eta}\| \pi^\star_{\tau, t}(s) - \pi_t(s) \|^2 \\
    \nonumber
    & \;\;\;\; \;\; - \left( \frac{1 + \eta (\tau - \tau_0) }{2 \eta}\right)\|\pi^\star_{\tau, t}(s) - \pi_{t+1}(s)\|^2  + \delta_{\theta_t}(s, \pi_{t+1}(s)) - \delta_{\theta_t}(s, \pi^\star_{\tau, t}(s)).
\end{align}
In summary, combining the bounds in \eqref{eq::bound_adv_a}, \eqref{eq::bound_adv_b}, and \eqref{eq::bound_adv_c} we have that
\begin{align}
    \label{eq:advantage_bound}
    A_{\lambda_t, \tau}^{\pi_t} (s, \pi^\star_{\tau, t}(a))
    & \;\leq\; A_{\lambda_t, \tau}^{\pi_t}(s, \pi_{t+1}(s)) - \frac{1}{2 \eta}\| \pi_t(s) - \pi_{t+1}(s) \|^2  + \frac{1}{2\eta}\| \pi^\star_{\tau, t}(s) - \pi_t(s) \|^2 \\
    \nonumber
    & \;\;\;\; \;\; - \left( \frac{1 + \eta (\tau - \tau_0) }{2 \eta}\right)\|\pi^\star_{\tau, t}(s) - \pi_{t+1}(s)\|^2 + \delta_{\theta_t}(s, \pi_{t+1}(s)) - \delta_{\theta_t}(s, \pi^\star_{\tau, t}(s)).
\end{align}

With this bound in place, we can now decompose the primal-dual gap by adding and substracting $L_\tau(\pi_t, \lambda_t)$ as we did in the proof of Theorem \eqref{thm::convergence_exact} (see Corollary \ref{app::convergence_exact})
\begin{align}
\nonumber
    &L_\tau(\pi^\star_{\tau, t}, \lambda_t) - L_\tau(\pi_t, \lambda^\star_{\tau, t})  \;=\; \underbrace{L_\tau(\pi^\star_{\tau, t}, \lambda_t) - L_\tau(\pi_t, \lambda_t)}_{\text{(i)}} 
    \,+\,
    \underbrace{L_\tau(\pi_t, \lambda_t) - L_\tau(\pi_t, \lambda^\star_{\tau, t})}_{\text{(ii)}}.
\end{align}

Let's define the distance function $D_t(\pi^\star_{\tau, t})= \E_{d^\star_\rho} \left[ \| \pi^\star_{\tau, t}(s) - \pi_t(s) \|^2  \right]$. Now, using the definition of the regularized Lagrangian and the performance difference Lemma \ref{lm::pd_lemma}, we can show that for term (i) we have that
\begin{align}
    \label{eq::pd_inexact_1a}
    L_\tau(\pi^\star_{\tau, t}, \lambda_t) - L_\tau(\pi_t, \lambda_t) 
    & \;=\; V_{\lambda_t, \tau} (\pi^\star_{\tau, t}) - V_{\lambda_t, \tau} ({\pi_t}) \\
    \nonumber
    &\;=\; \frac{1}{1 - \gamma} \E_{d^\star_\rho} \left[ A^{\pi_t}_{\lambda_t, \tau}(s, \pi^\star_{\tau, t}(s)) \right].
\end{align}
Now, using the bound in \eqref{eq:advantage_bound} that we have just derive we have that
\begin{align}
    \label{eq::pd_inexact_1b}
    \E_{d^\star_\rho} \left[ A^{\pi_t}_{\lambda_t, \tau}(s, \pi^\star_{\tau, t}(s)) \right] 
    & \;\leq\; \E_{d^\star_\rho} \left[ A^{\pi_t}_{\lambda_t, \tau}(s, \pi_{t+1}(s)) - \frac{1}{2\eta} \|\pi_{t+1}(s) - \pi_t(s)\|^2 \right] \\
    \nonumber
    & \;\;\;\;\;\; + \E_{d^\star_\rho} \left[\frac{1}{2\eta} \| \pi^\star_{\tau, t}(s) - \pi_t(s)\|^2 \right] \\
    \nonumber
    & \;\;\;\;\;\; - \E_{d^\star_\rho} \left[ \left(\frac{1 + \eta (\tau - \tau_0)}{2 \eta}\right) \|\pi^\star_{\tau, t}(s) - \pi_{t+1}(s)\|^2 \right] \\
    \nonumber
    & \;\;\;\;\;\; + \E_{d^\star_\rho} \left[ \delta_{\theta_t}(s, \pi_{t+1}(s)) - \delta_{\theta_t}(s, \pi^\star_{\tau, t}(s)) \right].
\end{align}
Also, we can use Lemma \ref{lm::error_bound}, Assumption \ref{as::approximation_error} and the triangular inequality to establish that
\begin{align}
    \label{eq::pd_inexact_1c}
    \E_{d^\star_\rho} \left[ \delta_{\theta_t}(s, \pi_{t+1}(s)) - \delta_{\theta_t}(s, \pi^\star_{\tau, t}(s)) \right] 
    &\;\leq\; 2 (2 A_{\text{max}})^{d_a} \E_{s \sim d^\star_\rho, a \sim \mathsf{u}} \left[ \| \delta_{\theta_t}(s, a) \| \right] \\
    \nonumber
    &\;\leq\; \epsilon_{\text{approx}},
\end{align}
where $\mathsf{u}$ is the uniform distribution. Furthermore, we can use Lemma \ref{lm::lipschitz_lemma} with $a=\pi_{t+1}(s)$ to bound the term
\begin{align}
    \label{eq::pd_inexact_1d}
    \E_{d^\star_\rho} \left[ A^{\pi_t}_{\lambda_t, \tau}(s, \pi_{t+1}(s)) - \frac{1}{2\eta} \|\pi_{t+1}(s) - \pi_t(s)\|^2 \right] 
    \;\leq\;
    \frac{\eta}{2} C_P^2.
\end{align}
Finally, combining \eqref{eq::pd_inexact_1a}, \eqref{eq::pd_inexact_1b}, \eqref{eq::pd_inexact_1c}, and \eqref{eq::pd_inexact_1d}, we have that for the term (i) it holds that
\begin{align}
    \nonumber
    &L_\tau(\pi^\star_{\tau, t}, \lambda_t) - L_\tau(\pi_t, \lambda_t) \\
    \nonumber
    &\leq \frac{D_t(\pi^\star_{\tau, t}) - ( 1 + \eta (\tau - \tau_0)) D_{t+1}(\pi^\star_{\tau, t}) }{2 \eta (1 - \gamma)}  + \frac{\eta}{2 (1- \gamma)} C_P^2 + \frac{1}{1 - \gamma} \epsilon_{\text{approx}}.
\end{align}

For the second term (ii) we use the definition of the regularized Lagrangian and the definition of the regularized value function to show that
\begin{align}
    \label{eq::pd_inexact_2a}
    L_\tau(\pi_t, \lambda_t) - L_\tau(\pi_t, \lambda^\star_{\tau, t}) &\;=\; V_{\lambda_t, \tau}({\pi_t}) - V_{\lambda^\star_{\tau, t}, \tau}({\pi_t}) + \frac{\tau}{2} (\lambda_t)^2 - \frac{\tau}{2} (\lambda^\star_{\tau, t})^2 \\
    \nonumber
    & \;=\; (\lambda_t - \lambda^\star_{\tau, t}) V_g (\pi_t) + \frac{\tau}{2} (\lambda_t)^2 - \frac{\tau}{2} (\lambda^\star_{\tau, t})^2.
\end{align}
Then, completing the squares we have that
\begin{align}
    \label{eq::pd_inexact_2b}
    &(\lambda_t - \lambda^\star_{\tau, t}) V_g (\pi_t) + \frac{\tau}{2} (\lambda_t)^2 - \frac{\tau}{2} (\lambda^\star_{\tau, t})^2 \;=\; 
    (\lambda_t - \lambda^\star_{\tau, t}) (V_g (\pi_t) + \tau \lambda_t) - \frac{\tau}{2}  \|\lambda^\star_{\tau, t} - \lambda_t\|^2.
\end{align}
We can now use the standard descent Lemma \ref{lm::descent_lemma} to show that
\begin{align}
    \label{eq::pd_inexact_2c}
    &(\lambda_t - \lambda^\star_{\tau, t}) (V_g (\pi_t) + \tau \lambda_t) - \frac{\tau}{2}  \|\lambda^\star_{\tau, t} - \lambda_t\|^2 \\
    \nonumber
    &\leq\; \frac{\|\lambda^\star_{\tau, t} - \lambda_t \|^2 - \|\lambda^\star_{\tau, t} - \lambda_{t+1} \|^2}{2 \eta} + \frac{1}{2} \eta C_D^2 - \frac{\tau}{2} \| \lambda_t - \lambda^\star_{\tau, t} \|^2 \\
    \nonumber
    &=\; \frac{(1 - \eta \tau)\|\lambda^\star_{\tau, t} - \lambda_t \|^2 - \|\lambda^\star_{\tau, t} - \lambda_{t+1} \|^2}{2 \eta} + \frac{1}{2} \eta C_D^2,
\end{align}
where the last equality rearranges the term $\frac{\tau}{2} \| \lambda_t - \lambda^\star_{\tau, t} \|^2$ inside the fraction. Now, combining the expressions in \eqref{eq::pd_inexact_2a}, \eqref{eq::pd_inexact_2b} and \eqref{eq::pd_inexact_2c} we have that for term (ii) it holds that
\begin{align}
    \nonumber
    &L_\tau(\pi_t, \lambda_t) - L_\tau(\pi_t, \lambda^\star_{\tau, t}) 
    \;=\;
    \frac{(1 - \eta \tau)\|\lambda^\star_{\tau, t} - \lambda_t \|^2 - \|\lambda^\star_{\tau, t} - \lambda_{t+1} \|^2}{2 \eta} + \frac{1}{2} \eta C_D^2.
\end{align}

We define $C_0 \DefinedAs C_P + C_D$ and combine (i) and (ii) to show
\begin{align}
    \nonumber
    0 & \; \leq\;  \eta (1 - \gamma) \left( L_\tau(\pi^\star_{\tau, t}, \lambda_t) - L_\tau(\pi_t, \lambda^\star_{\tau, t})\right) \\
    \nonumber
    & \;\leq\; \frac{1}{2} D_t(\pi^\star_{\tau, t}) + \frac{1 - \eta \tau}{2} \|\lambda^\star_{\tau, t} - \lambda_t \|^2 \\
    \nonumber
    & \;\;\;\;\;\; - \frac{ 1 + \eta (\tau - \tau_0)}{2} D_{t+1}(\pi^\star_{\tau, t}) - \frac{1}{2}\|\lambda^\star_{\tau, t} - \lambda_{t+1} \|^2  + \eta^2 C_0^2 + \eta \epsilon_{\text{approx}},
\end{align}
where the first inequality follows from the fact that the duality gap is positive and holds for any $(\pi_\tau^\star,\lambda_\tau^\star) \in \Pi^\star_\tau \times \Lambda^\star_\tau$. Rearranging the expression above, it follows
\begin{align}
    \label{eq::inequality_contraction}
    &\frac{ 1 + \eta (\tau - \tau_0)}{2} D_{t+1}(\pi^\star_{\tau, t}) + \frac{1}{2}\|\lambda^\star_{\tau, t} - \lambda_{t+1} \|^2 \\
    \nonumber
    &=\; ( 1 + \eta (\tau - \tau_0))  \left( \frac{1}{2} D_{t+1}(\pi^\star_{\tau, t}) + \frac{1}{2(1 + \eta (\tau - \tau_0))}\| \lambda^\star_{\tau, t} - \lambda_{t+1} \|^2 \right) \\
    \nonumber
    &\leq\; \frac{1}{2} D_t(\pi^\star_{\tau, t}) + \frac{1 - \eta \tau}{2}\|\lambda^\star_{\tau, t} - \lambda_t \|^2 + \eta^2 C_0^2 + \eta \epsilon_{\text{approx}}\\
    \nonumber
    &\leq\; \frac{1}{2} D_t(\pi^\star_{\tau, t}) + \frac{1}{2(1 + \eta (\tau - \tau_0))}\|\lambda^\star_{\tau, t} - \lambda_t \|^2 + \eta^2 C_0^2 + \eta \epsilon_{\text{approx}},
\end{align}
where the last inequality follows from the fact that $\tau_0 \in [0, \tau)$, and therefore $\frac{1 - \eta \tau}{2} \leq \frac{1}{2(1 + \eta (\tau - \tau_0))}$. Since $\pi_\tau^\star = \ccalP_{\Pi^\star_\tau}(\pi_t)$ and $\lambda_\tau^\star = \ccalP_{\Lambda^\star_\tau}(\lambda_t)$ are projections, it holds that
\begin{align}
    \label{eq::inequality_projection}
    &\frac{ 1 + \eta (\tau - \tau_0)}{2} D_{t+1}(\pi^\star_{\tau, t + 1}) + \frac{1}{2}\|\lambda^\star_{\tau, t + 1} - \lambda_{t+1} \|^2 \\
    \nonumber
     &=\;\frac{ 1 + \eta (\tau - \tau_0)}{2} \E_{d^\star_\rho} \left[ \|(\ccalP_{\Pi^\star_\tau}(\pi_{t+1}(s)) - \pi_{t+1}(s)\|^2  \right]  + \frac{1}{2}\|\ccalP_{\Lambda^\star_\tau}(\lambda_{t+1}) - \lambda_{t+1} \|^2 \\
    \nonumber
    &\leq\; \frac{ 1 + \eta (\tau - \tau_0)}{2} \E_{d^\star_\rho} \left[ \|(\ccalP_{\Pi^\star_\tau}(\pi_{t}(s)) - \pi_{t+1}(s)\|^2  \right]  + \frac{1}{2}\| \ccalP_{\Lambda^\star_\tau}(\lambda_{t}) - \lambda_{t+1} \|^2 \\
    \nonumber
    &=\; \frac{ 1 + \eta (\tau - \tau_0)}{2} D_{t+1}(\pi^\star_{\tau, t}) + \frac{1}{2}\|\lambda^\star_{\tau, t} - \lambda_{t+1} \|^2.
\end{align}

We define the potential function $\Phi_t \DefinedAs \frac{1}{2} D_t(\pi_{\tau, t}^\star) + \frac{1}{2(1 + \eta (\tau - \tau_0))} \| \ccalP_{\Lambda^\star_\tau}(\lambda_{t}) - \lambda_t \|^2$, and combine the inequalities \eqref{eq::inequality_projection} and \eqref{eq::inequality_contraction} to show
\begin{equation}
    \label{eq::potential_contraction_inexact}
    (1 + \eta (\tau - \tau_0)) \Phi_{t+1} 
    \;\leq\;
    \Phi_t + \eta^2 C_0^2 + \eta \epsilon_{\text{approx}}.
\end{equation}

The next step is to expand \eqref{eq::potential_contraction_inexact} so that we have
\begin{align}
    \nonumber
    \Phi_{t+1} & \;\leq\; \frac{1}{1 + \eta (\tau - \tau_0)} \Phi_t + \eta^2 C_0^2 + \eta \epsilon_{\text{approx}} \\
    \nonumber
    &\;\leq\; \left(\frac{1}{1 + \eta (\tau - \tau_0)} \right)^2 \Phi_{t-1}  + \left(\eta^2 + \eta^2 \left(\frac{1}{1 + \eta (\tau - \tau_0)}\right) \right) \left( C_0^2 + \frac{\epsilon_{\text{approx}}}{\eta}  \right).
\end{align}
If we keep expanding recursively we end up having
\begin{align}
    \nonumber
    \Phi_{t+1}  &\;\leq\; \left(\frac{1}{1 + \eta (\tau - \tau_0)} \right)^t \Phi^{1}   + \left(\eta^2 \left( 1 + \left(\frac{1}{1 + \eta (\tau - \tau_0)}\right) + \hdots \right) \right) \left( C_0^2 + \frac{\epsilon_{\text{approx}}}{\eta}  \right).
\end{align}
Finally, using the definition of the exponential for the first term, and using the geometric series for the second one, we have that
\begin{align}
    \nonumber
    &\Phi_{t+1} \leq {\rm e}^{-\frac{\eta (\tau - \tau_0)}{1 + \eta (\tau - \tau_0) }t} \Phi_1  + \frac{\eta (1 + \eta (\tau - \tau_0))}{\tau - \tau_0} C_0^2 + \frac{1 + \eta (\tau - \tau_0)}{\tau - \tau_0} \epsilon_{\text{approx}}
\end{align}
which completes the proof.

\end{proof}

\subsection{Proof of Corollary \ref{cor::convergence_sampled}}
\label{app::convergence_sampled}

\begin{proof}
    
The proof is an extension of the proof of Theorem \ref{thm::convergence_inexact}. However, we need to take into account the randomness of estimating $\theta_t$ using samples. As the updates in \eqref{eq::sgd_update} are performed using projected SGD, we can leverage known error bounds \cite{lacoste2012simpler} to bound the expected estimation error 
\[
    \delta(\theta_t, \hat{\theta}_t, \nu) \DefinedAs \E_\nu \left[ \delta_{\theta_t^{(n)}}(s, a) - \delta_{\theta_t}(s, a) \right].
\]

With that goal in mind, we need to check:
\begin{enumerate}
    \item[(i)] The domain $\| \theta \| \leq \theta_{\text{max}}$ is convex and bounded.
    \item[(ii)] The gradient $g_t^{(n)}$ is unbiased since $\hat{Q}^{\pi_t}_{\lambda, \tau}$ is an unbiased estimate of $Q^{\pi_t}_{\lambda, \tau}$.
    \item[(iii)] The minimizer of \eqref{eq::policy_evaluation_approx} is unique since Assumption \ref{as::positive_covariance} guarantees that $\Sigma_\nu \geq \kappa_0 I$ for some $\kappa_0 > 0$.
    \item[(iv)] The squared norm of the gradient $g_t^{(n)}$ is bounded.
\end{enumerate}

To prove this last point we use the Cauchy-Schwartz inequality twice to bound the norm of the subgradient as
\begin{align}
\nonumber
    \| g_t^{(n)} \|^2 
    &\;=\; 4 \|  \left( \phi(s_n, a_n)^\Tr \theta_t^{(n)} - \hat{J}^{\pi_t}(s_n, a_n) \right) \phi(s_n, a_n) \|^2 \\
    \nonumber
    &\;\leq\; 4 \|  \phi(s_n, a_n)^\Tr \phi(s_n, a_n) \theta_t^{(n)} \|^2  + 4 \| \hat{J}^{\pi_t}(s_n, a_n) \phi(s_n, a_n) \|^2 \\
    \nonumber
    &\;\leq\; 4 \| \phi(s_n, a_n)^\Tr \phi(s_n, a_n) \|^2 \| \theta_t^{(n)} \|^2  + 4 \| \hat{J}^{\pi_t}(s_n, a_n) \|^2 \| \phi(s_n, a_n) \|^2.
\end{align}
As the feature basis function $\phi$ is bounded by Assumption \ref{as::bounded_feature_basis}, we have that
\begin{align}
    \nonumber
    &\| g_t^{(n)} \|^2 
    \;\leq\; 
    4 \left( \| \theta_t^{(n)} \|^2 + \| \hat{Q}^{\pi_t}_{\lambda, \tau} (s_n, a_n)\|^2 + \frac{1}{\eta^2} \| \pi_t(s_n)^\Tr a_n \|^2 \right).
\end{align}
Finally, we can bound $\| \hat{Q}^{\pi_t}_{\lambda, \tau} (s_n, a_n)\|^2$, since $r_{\lambda, \tau} \leq \frac{2}{(1 - \gamma)\xi}$, so that we have that
\begin{align}
    \| g_t^{(n)} \|^2 
    &\;\leq\; 4 \left(  \theta_{\text{max}}^2 + \left( \frac{2}{(1 - \gamma)^2 \xi} \right)^2 + \frac{1}{\eta^2} d_a^2 A_{\text{max}}^4 \right) \\
    \nonumber
    &\;\leq\; 4 \left( \theta_{\text{max}} + \frac{2}{(1 - \gamma)^2 \xi} + \frac{1}{\eta} d_a A_{\text{max}}^2 \right)^2.
\end{align}

Taking the step-size of the projected SGD step to be $\alpha_n = \frac{2}{\kappa_0(k + 2)}$ \citet{lacoste2012simpler}, it follows that
\begin{equation}
    \label{eq::bound_est_error}
    \E \left[\delta(\theta_t, \hat{\theta}_t, \nu) \right] 
    \;\leq\; 
    \frac{16 \left( \theta_{\text{max}} + \frac{2}{(1 - \gamma)^2 \xi} + \frac{1}{\eta} d_a A_{\text{max}}^2 \right)^2 }{\kappa_0^2 (N + 1)},
\end{equation}
where $N$ is the terminal step, and the expectation is taken over the randomness of $\hat{\theta}_t$. The bound in \eqref{eq::bound_est_error} gives as an expression to deal with the estimation error induced by using sample-based methods to estimate the parameters. Then, from the proof of Theorem \ref{thm::convergence_inexact} in Appendix \ref{app::convergence_inexact} we have that
\begin{align}
    \nonumber
    &L_\tau(\pi^\star_{\tau, t}, \lambda_t) - L_\tau(\pi_t, \lambda_t) \\
    \nonumber
    &\leq \;
    \frac{D_t(\pi^\star_{\tau, t}) - ( 1 + \eta (\tau - \tau_0)) D_{t+1}(\pi^\star_{\tau, t}) }{2 \eta (1 - \gamma)}  + \frac{\eta}{2 (1- \gamma)} C_P^2  + \frac{1}{1 - \gamma}\E_{d^\star_\rho} \left[ \delta_{\hat{\theta}_t}(s, \pi_{t+1}(s)) - \delta_{\hat{\theta}_t}(s, \pi^\star_{\tau, t}(s)) \right].
\end{align}
We want to leverage this new bound in the estimation error to analyse the duality gap. We begin by using Lemma \ref{lm::error_bound} to show that
\begin{align}
    \nonumber
    &\E_{d^\star_\rho} \left[ \delta_{\hat{\theta}_t}(s, \pi_{t+1}(s)) - \delta_{\hat{\theta}_t}(s, \pi^\star_{\tau, t}(s)) \right] 
    \;\leq\;
    2(2 A_{\text{max}})^{d_a} \E_{s \sim d^\star_\rho, a \sim \mathsf{u}} \left[ \delta_{\hat{\theta}_t}(s, a) \right].
\end{align}
where $\mathsf{u}$ is the uniform distribution. Using now Assumption \ref{as::est_error}we have that 
\begin{align}
    \nonumber
    &2(2 A_{\text{max}})^{d_a} \E_{s \sim d^\star_\rho, a \sim \mathsf{u}} \left[ \delta_{\hat{\theta}_t}(s, a) \right] 
    \;\leq\;
    2(2 A_{\text{max}})^{d_a} L_\nu \E_{s, a \sim \nu} \left[ \delta_{\hat{\theta}_t}(s, a) \right].
\end{align}
Then, we can add and substract the bias error $\delta_{\theta_t}$ to show that
\begin{align}
    \nonumber
    2(2 A_{\text{max}})^{d_a} L_\nu \E_{s, a \sim \nu} \left[ \delta_{\hat{\theta}_t}(s, a) \right] 
    &= 2(2 A_{\text{max}})^{d_a} L_\nu  \left( \E_{s, a \sim \nu} \left[ \delta_{\hat{\theta}_t}(s, a) - \delta_{\theta_t}(s, a) \right] + \E_{s, a \sim \nu} \left[ \delta_{\theta_t}(s, a) \right] \right).
\end{align}
Finally, by leveraging Assumption \ref{as::bias_error}, we can bound the bias error as
\begin{align}
    \nonumber
    &2(2 A_{\text{max}})^{d_a} L_\nu  \left( \E_{s, a \sim \nu} \left[ \delta_{\hat{\theta}_t}(s, a) - \delta_{\theta_t}(s, a) \right] + \E_{s, a \sim \nu} \left[ \delta_{\theta_t}(s, a) \right] \right) \\
    \nonumber
    &\leq \; 2(2 A_{\text{max}})^{d_a} L_\nu \delta(\theta_t, \hat{\theta}_t, \nu) + \epsilon_{\text{bias}}.
\end{align}
Therefore, it holds that
\begin{align}
    \nonumber
    &\E_{d^\star_\rho} \left[ \delta_{\hat{\theta}_t}(s, \pi_{t+1}(s)) - \delta_{\hat{\theta}_t}(s, \pi^\star_{\tau, t}(s)) \right] 
    \;\leq\;
    2(2 A_{\text{max}})^{d_a} L_\nu \delta(\theta_t, \hat{\theta}_t, \nu) + \epsilon_{\text{bias}}.
\end{align}

The rest of the proof follows the steps of the proof of Theorem \ref{thm::convergence_inexact}, but accounting for the randomness of $\hat{\theta}_t$. We refer the reader to  Appendix \ref{app::convergence_inexact} for the exact details. For dealing with the randomness of $\hat{\theta}_t$ we take expectations of
\begin{align}
    \nonumber
    &\E [ L_\tau(\pi^\star_{\tau, t}, \lambda_t) ] - \E [ L_\tau(\pi_t, \lambda_t) ] \\
    \nonumber
    &\leq\; \frac{ \E [ D_t(\pi^\star_{\tau, t}) ] - ( 1 + \eta (\tau - \tau_0)) \E [ D_{t+1} (\pi^\star_{\tau, t}) ] }{2 \eta (1 - \gamma)}  + \frac{\eta}{2 (1- \gamma)} C_P^2  + \E [ \delta(\theta_t, \hat{\theta}_t, \nu) ] + \epsilon_{\text{bias}} \\
    \nonumber
    &\leq\; \frac{ \E [ D_t(\pi^\star_{\tau, t}) ] - ( 1 + \eta (\tau - \tau_0)) \E [ D_{t+1} (\pi^\star_{\tau, t}) ] }{2 \eta (1 - \gamma)} + \frac{\eta}{2 (1- \gamma)} C_P^2  + \frac{C_1^2}{\eta^2  (N + 1)} + \epsilon_{\text{bias}},
\end{align}
where the second inequality uses the bound in \eqref{eq::bound_est_error} and considers the step-size to be $\eta \leq 1$ , with the constant $C_1 \DefinedAs \sqrt{32 (2 A_{\text{max}})^{d_a}  L_\nu} ( \theta_{\text{max}} + \frac{2}{(1 - \gamma)^2 \xi} + d_a A_{\text{max}}^2 ) \kappa_0^{-1} $. We also have
\begin{align}
    \nonumber
    &\E [ L_\tau(\pi_t, \lambda_t)] - \E [ L_\tau(\pi_t, \lambda^\star_{\tau, t}) ]  \;\leq\; \frac{(1 - \eta \tau) \E [ \|\lambda^\star_{\tau, t} - \lambda_t \|^2] - \E [\|\lambda^\star_{\tau, t} - \lambda_{t+1} \|^2] }{2 \eta}  + \frac{1}{2} \eta C_D^2.
\end{align}

We define the constants $C_0 \DefinedAs C_P + C_D$, and introduce the potential function $\E [ \Phi_t ] \DefinedAs \frac{1}{2} \E [ D_t(\pi_{\tau, t}^\star) ] + \frac{1}{2(1 + \eta (\tau - \tau_0))} \E [ \| \ccalP_{\Lambda^\star_\tau}(\lambda_{t}) - \lambda_t \|^2]$. Combining both inequalities we can show
\begin{align}
    \label{eq::potential_contraction_sampled}
    &(1 + \eta (\tau - \tau_0)) \E [ \Phi_{t+1} ] 
    \;\leq\; \E [ \Phi_t ] + \eta^2 C_0^2 + \frac{C_1^2}{\eta^2  (N + 1)} + \eta \epsilon_{\text{bias}}.
\end{align}

Expanding \eqref{eq::potential_contraction_sampled} recursively we can show
\begin{align}
    \nonumber
    &\Phi_{t+1} 
    \;\leq\;
    {\rm e}^{-\frac{\eta (\tau - \tau_0)}{1 + \eta (\tau - \tau_0) }t} \Phi_1 + \frac{\eta (1 + \eta (\tau - \tau_0))}{\tau - \tau_0} C_0^2   + \frac{1 + \eta (\tau - \tau_0)}{\tau - \tau_0} \left(  \frac{C_1^2}{\eta^2  (N + 1)} + \epsilon_{\text{bias}} \right)
\end{align}
which completes the proof.
\end{proof}

\section{Control Regulation Problem}

\subsection{Lipschitz Continuity of Action-value Function}

\label{app::lipschitz_action_value}

We aim to show that the action-value function $Q_r^\pi$ for reward function $r$ is Lipschitz-continuous. For any state $s \in S$, and two different actions $a_1, a_2 \in A$, we can expand
\begin{align}
    \nonumber
    \| Q_r^\pi(s, a_1) - Q_r^\pi(s, a_1) \| 
    & \;=\; 
    \left\| r(s, a_1) + \gamma \E_{s' \sim p(\cdot | s, a_0)} \left[ V_r^\pi(s')\right]  - r(s, a_2) - \gamma \E_{s' \sim p(\cdot | s, a_2)} \left[ V_r^\pi(s')\right] \right\| \\
    \nonumber
    & \;=\; \left\| r(s, a_1) - r(s, a_2) + \gamma \int_S \left( p(s' | s, a_1) - p(s' | s, a_2) \right) V_r^\pi(s') ds'\right\|.
\end{align}

We know that $V_r^\pi(s')$ is bounded in the interval $[0, 1/(1 - \gamma)]$. Therefore, Lipschitz continuity of $Q_r^\pi$ is guaranteed if the terms $\| r(s, a_1) - r(s, a_2) \|$ and $\| p(s' | s, a_1) - p(s' | s, a_2) \|$ are Lipschitz continuous.

A function is Lipschitz continuous if it is continuously differentiable over a compact domain. The Lipschitz constant is equal to the maximum magnitude of the derivative. Consider now the example in Section \ref{ss::clq}. The reward function $r$ is a Lipschitz-continuous quadratic function with Lipschitz constant equal to the maximum eigenvalue of the matrix $R$. Now, recalling the linear Gaussian structure of the transition dynamics, we have
\begin{align}
    \label{eq::kernel}
    &p(s' | s, a_1) - p(s' | s, a_2) 
    \;\approx\;
    \exp \{ - \| s' - \mu(s, a_1) \|^2 \} - \exp \{ - \| s' - \mu(s, a_2) \|^2 \},
\end{align}
where the mean of the distribution is given by $\mu(s, a) \DefinedAs B_0 s + B_1 a$. It is evident that the difference of the Gaussian kernels in \eqref{eq::kernel} is continuously differentiable over $A$, and therefore, Lipschitz continuous. This implies that the action-value function $Q_r^\pi$ associated with the example defined in Section \ref{ss::clq} is Lipschitz continuous. A similar reasoning applies to the action-value functions $Q_g^\pi$ and $H^\pi$.

\section{Algorithms}

\label{app::algorithms}

\begin{algorithm}[H]
\caption{Unbiased estimate V}
\label{alg::estimate_V}
    \begin{algorithmic}[1]
    \Require Simulator $E$, policy $\pi$, initial distribution $\rho$, discount factor $\gamma$.
    
    \State $s_0 \sim \rho$. 
    \State $a_0 \gets \pi(s_0)$.
    \State $\tilde{V} \gets 0$.
    \State $T \sim \text{geom}(1 - \gamma)$.
    \For{$t = 1$ to $T$}
        \State $s_{t+1}, r_t \gets E(s_t, a_t)$.
        \State $\tilde{V} \gets \tilde{V} + r_t$.
        \State $a_t \gets \pi(s_{t+1})$. 
        \State $s_t \gets s_{t+1}$. 
    \EndFor
    \State \Return $(1 - \gamma) \tilde{V}$.
    \end{algorithmic}
\end{algorithm}

\begin{algorithm}[H]
\caption{Unbiased estimate Q}
\label{alg::estimate_Q}
    \begin{algorithmic}[1]
    \Require Simulator $E$, policy $\pi$, state $s$, action $a$, discount factor $\gamma$.
    \State $s_0 \gets s$.
    \State $a_0 \gets a$.
    \State $\tilde{Q} \gets 0$.
    \State $T \sim \text{geom}(1 - \gamma)$.
    \For{$t = 1$ to $T$} 
        \State $s_{t+1}, r_t \gets E(s_t, a_t)$. 
        \State $\tilde{Q} \gets \tilde{Q} + r_t$.
        \State $a_t \gets \pi(s_{t+1})$. 
        \State $s_t \gets s_{t+1}$.
    \EndFor
    \State \Return $(1 - \gamma) \tilde{Q}$.
    \end{algorithmic}
\end{algorithm}

\begin{algorithm}[H]
\caption{Sample-based AD-PGPD}
\label{alg::sample_adpgpd}
    \begin{algorithmic}[1]
    \Require Number of iterations $T$, number of SGD iterations $K$, step-size hyper-parameter $\eta$, sampling distribution $\nu$, initial state distribution $\rho$.
    
    \For{$t = 1$ to $T$}
        \State Initialize $\theta_t^{(0)}=0$.
        \For{$n = 1$ to $N$}
            \State Sample $s_n, a_n \sim \nu$.
            \State Estimate $\hat{Q}_{\lambda, \tau}^{\pi_t}(s_n,a_n)$ using Algorithm \ref{alg::estimate_Q}.
            \State Compute $\theta_t^{(n)}$ using \eqref{eq::sgd_update} with $\alpha_n = \frac{1}{2 \kappa_0(n + 2)}$.
        \EndFor
        \State Set $\hat{\theta}_t = \frac{2}{N(N + 1)} \sum_{n=0}^{N -1}(n + 1)\theta_t^{(n)}$.
        \State Estimate $\hat{V}_g({\pi_t})$ using Algorithm \ref{alg::estimate_V}.
        \State Perform the sample-based AD-PGPD update
        \begin{align}
            \nonumber
            \pi_{t+1}(s) & \;=\; \argmax_{a \in A}\;  \Tilde{J}_{\hat \theta_t(s,a)} - \left( \frac{\tau}{2} + \frac{1}{2\eta} \right) \| a \|^2 \\
            \nonumber
            \lambda_{t+1} & \;=\; \argmin_{\lambda \in \Lambda}\;  \lambda(\hat{V}_g(\pi_t) + \tau \lambda_t) + \frac{1}{2 \eta} \| \lambda - \lambda_t \|^2.
        \end{align}
    \EndFor
\end{algorithmic}
\end{algorithm}

\section{Additional experiments}

This section outlines the experiments conducted to evaluate the performance of the D-PGPD method. The experiments were executed on a computing cluster powered by an \texttt{AMD Ryzen Threadripper 3970X} processor, featuring a 64-core architecture with 128 threads, and supported by 220 GiB of RAM. The hyperparameters used in the experiments are detailed within this section. For precise implementation specifics, including seeding strategies and initialization procedures, please refer to the accompanying code repository.

\label{app::experiments}

\subsection{Constrained Quadratic Regulation}

\label{app::experiments_navigation}

We have tested our algorithms in a navigation control problem. An agent moves on a horizontal plane, where the linearized dynamics follow the double integrator model with zero-mean Gaussian noise. The general goal is to drive the agent to the origin. The state $s$ has four dimensions, the $2$-dimensional position $p_x$ and $p_y$ and the $2$-dimensional velocity $v_x$ and $v_y$. The control action is the $2$-dimensional acceleration $u_x$ and $u_y$. The linearized dynamics of the double integrator model used in the experiment are characterized by

\begin{align}
\nonumber
    &B_0 \;=\; \begin{bmatrix}
        1& \;\;  0& \; T_s& \; 0& \\
        0& \;\;  1& \; 0& \; T_s& \\
        0& \;\;  0& \;\;  1& \; 0& \\
        0& \;\;  0& \;\;  0& \; 1&
    \end{bmatrix}  
    \;\; \text{ and } \;\;
    B_1 \;=\; \begin{bmatrix}
        \frac{T_s^2}{2}& \;\;  0& \\
        0& \;\;  \frac{T_s^2}{2}& \\
        T_s& \;\;  0& \\
        0&\;\;   T_s&
    \end{bmatrix},
\end{align}

\noindent where $T_s = 0.05$. The noise is sampled from a multi-variate zero-mean Gaussian distribution with covariance
\begin{equation}
    \nonumber
    \Sigma \;=\; \begin{bmatrix}
        1.0& \;  0& \; 0& \; 0& \\
        0& \;  1.0& \; 0& \; 0& \\
        0& \;  0& \;  0.1& \; 0& \\
        0& \;  0& \;  0& \; 0.1&
    \end{bmatrix}.
\end{equation}

\noindent \textbf{Quadratic penalty.} We initially consider the dynamics to be known and a reward function which linearly weights a quadratic penalty on the position of the agent and a quadratic penalty on control action. The purpose of this setup, which is reminiscent of the constrained regulation problem \cite{scokaert1998constrained, brunke2022safe}, is to assess the performance of D-PGPD in terms of convergence behaviour, and sensibility to the regularization term $\tau$ and the step-size $\eta$. We introduce a quadratic constraint in the velocity of the robot. More specifically, the agent has to achieve the reference state $s_r = [0, 0, 0, 0]$ while minimizing the control action. The constraint imposes a limit in the expected velocity of the agent. The reward and the utility functions are detailed as follows
\begin{align}
    \nonumber
    &r(s, a) 
    \; = \; s^\Tr G_1 s + a^\Tr R_1 a 
    \;\; \text{ and }\;\;u(s, a) 
    \; = \; 
    s^\Tr G_2 s + a^\Tr R_2 a,
\end{align}
where
\begin{align}
\nonumber
    &G_1 \; = \; \begin{bmatrix}
        -1& \;  0& \; 0& \; 0& \\
        0& \;  -1& \; 0& \; 0& \\
        0& \;  0& \;  -0.1& \; 0& \\
        0& \;  0& \;  0& \; -0.1&
    \end{bmatrix} 
    \;\; \text{ and } \;\;
    G_2 \; = \; \begin{bmatrix}
        -0.1& \;  0& \; 0& \; 0& \\
        0& \;  -0.1& \; 0& \; 0& \\
        0& \;  0& \;  -1& \; 0& \\
        0& \;  0& \;  0& \; -1&
    \end{bmatrix},
\end{align}
\noindent and
\begin{align}
\nonumber
     R_1 & \;=\; \begin{bmatrix}
        -0.1& \;\;  0& \\
        0& \;\;  -0.1& \\
    \end{bmatrix} 
    \;\;  
    \text{ and }
    \;\; 
    R_2  \;=\; \begin{bmatrix}
        -0.1& \;\;  0& \\
        0& \;\;  -0.1& \\
    \end{bmatrix}.
\end{align}

For this experiment, the default values of the hyper-parameters are $\tau=0.01$, $\eta=0.01$ and b=$-90$. Fig. \ref{fig::fig_4} shows the value functions of the policy iterates generated by D-PGPD and AD-PGPD over $2,000$ iterations. We have used linear function approximation for AD-PGPD with the basis function being the quadratic $\phi(s,a) \DefinedAs [s^\Tr, a^\Tr]^\Tr \otimes  [s^\Tr, a^\Tr]^\Tr$, where $\otimes$ denotes the Kronecker product. The exact versions of D-PGPD and AD-PGPD achieve very similar results. Both oscillate at the beginning, but oscillations are damped over time. In fact, in the last iterate both algorithms satisfy the constraints. Furthermore, we have tested D-PGPD and AD-PGPD for different values of $\eta$ in Fig. \ref{fig::fig_5}. We can observe empirically that the hyper-parameter $\eta$ trades-off convergence speed and amplitude of oscillations. Larger values of $\eta$ lead to faster convergence. However, if $\eta$ is too large, both D-PGPD and AD-PGPD keep oscillating and do not converge, thus violating the constraint in several episodes. This aligns with the expectations set forth by Theorems \ref{thm::convergence_exact} and \ref{thm::convergence_inexact}, which establish that D-PGPD and A-PGPD converge linearly to a neighborhood whose size is determined by the value of $\eta$. Specifically, larger values of $\eta$ result in a larger convergence neighborhood, causing D-PGPD to oscillate around this region. In Fig. \ref{fig::fig_6}, the value functions associated with policies generated by D-PGPD are shown for different values of $\tau$. Increasing $\tau$ does not affect the convergence rate of D-PGPD, as stated in Theorems \ref{thm::convergence_exact} and \ref{thm::convergence_inexact}, given $\eta$ is sufficiently small (in this case, $\eta=0.01$). However, it impacts the sub-optimality of the solution, as predicted by Corollaries \ref{cor::near_optimality} and \ref{cor::near_optimality_approx}. Larger values of $\tau$ result in worse objective value functions. Nonetheless, the resultant regularized problem is more restrictive with respect to the utility value function of the original problem, resulting in policies generated by D-PGPD that are sub-optimal but satisfy the constraint. Finally, Fig. \ref{fig::fig_7} empirically assesses the convergence rates of D-PGPD and AD-PGPD as proposed in Theorems \ref{thm::convergence_exact} and \ref{thm::convergence_inexact}. We measure the primal optimality gap by the difference between the optimal objective value function $V_r^{\pi^\star}$ and the policy iterates of D-PGPD and AD-PGPD. Since the true $V_r^{\pi^\star}$ is unknown, we estimate it by running D-PGPD with hyperparameters $\eta=0.0001$ and $\tau=0.0001$ for $T=100,000$ episodes. The algorithms exhibit two regimes: an initial linear convergence to a neighborhood of the optimal solution, followed by a regime where the convergence rate changes within this neighborhood. This observation is consistent with theoretical predictions, which only guarantee linear convergence to a neighborhood of the optimal solution. Additionally, the convergence neighborhood sizes for D-PGPD and AD-PGPD are similar, which is a consequence of having a small approximation error $\epsilon_{\text{approx}}$.

\begin{figure}[t]
    \centering
    \includegraphics[width=10cm]{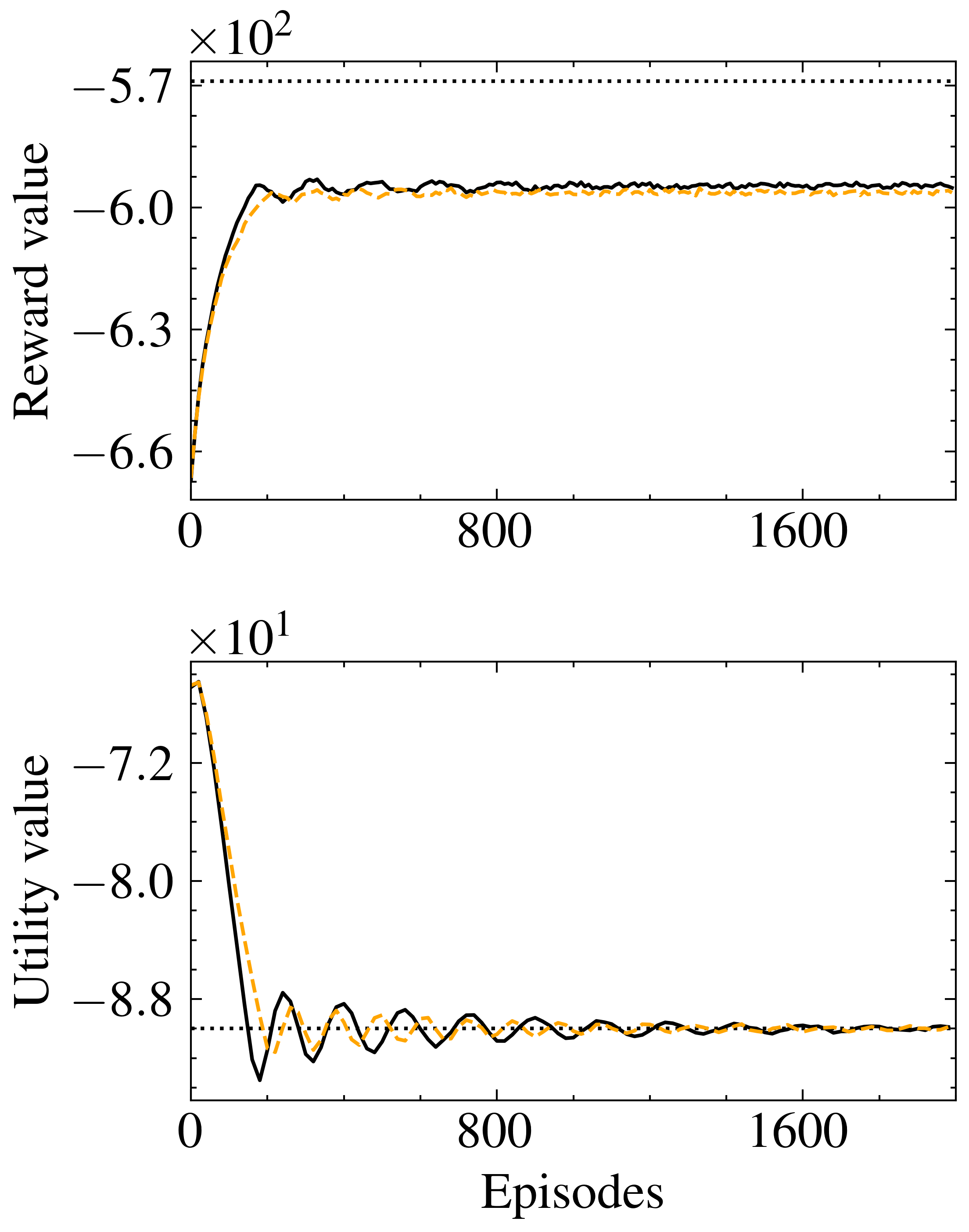}
    \caption{Reward and utility value functions of policy iterates generated by D-PGPD (\blackline) and AD-PGPD (\yellowdashedline )  in the navigation control problem with quadratic rewards.}
    \label{fig::fig_4}
\end{figure}

\begin{figure}[t]
    \centering
    \includegraphics[width=1.0\textwidth]{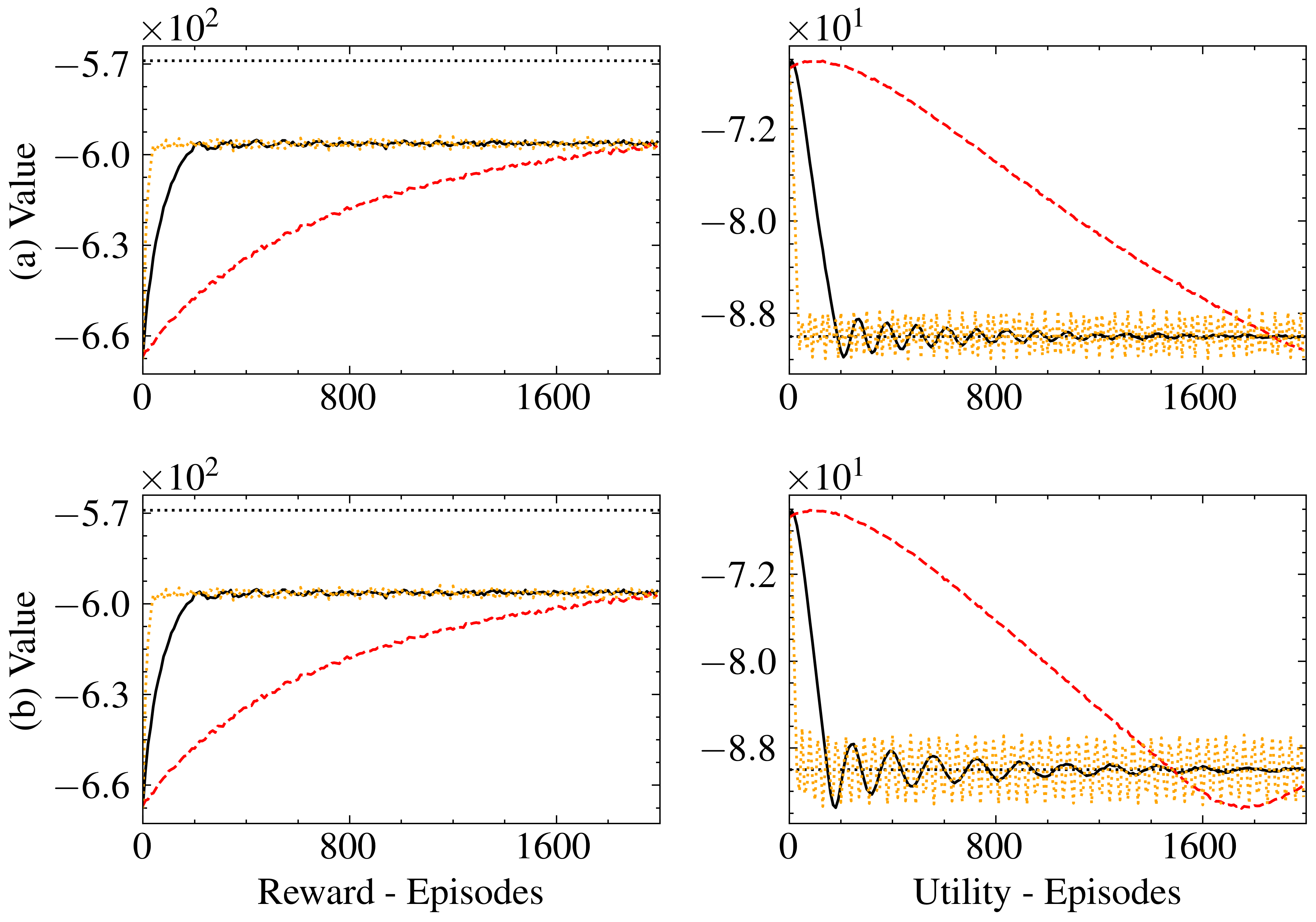}
    \caption{Reward value and utility value for (a) D-PGPD and (b) AD-PGPD in the navigation control problem for different values of the step-size: $\eta=0.1$ (\yellowdashedline), $\eta=0.01$ (\blackline) and $\eta=0.001$ (\reddashedline).}
    \label{fig::fig_5}
\end{figure}

\begin{figure}[t]
    \centering
    \includegraphics[width=1.0\textwidth]{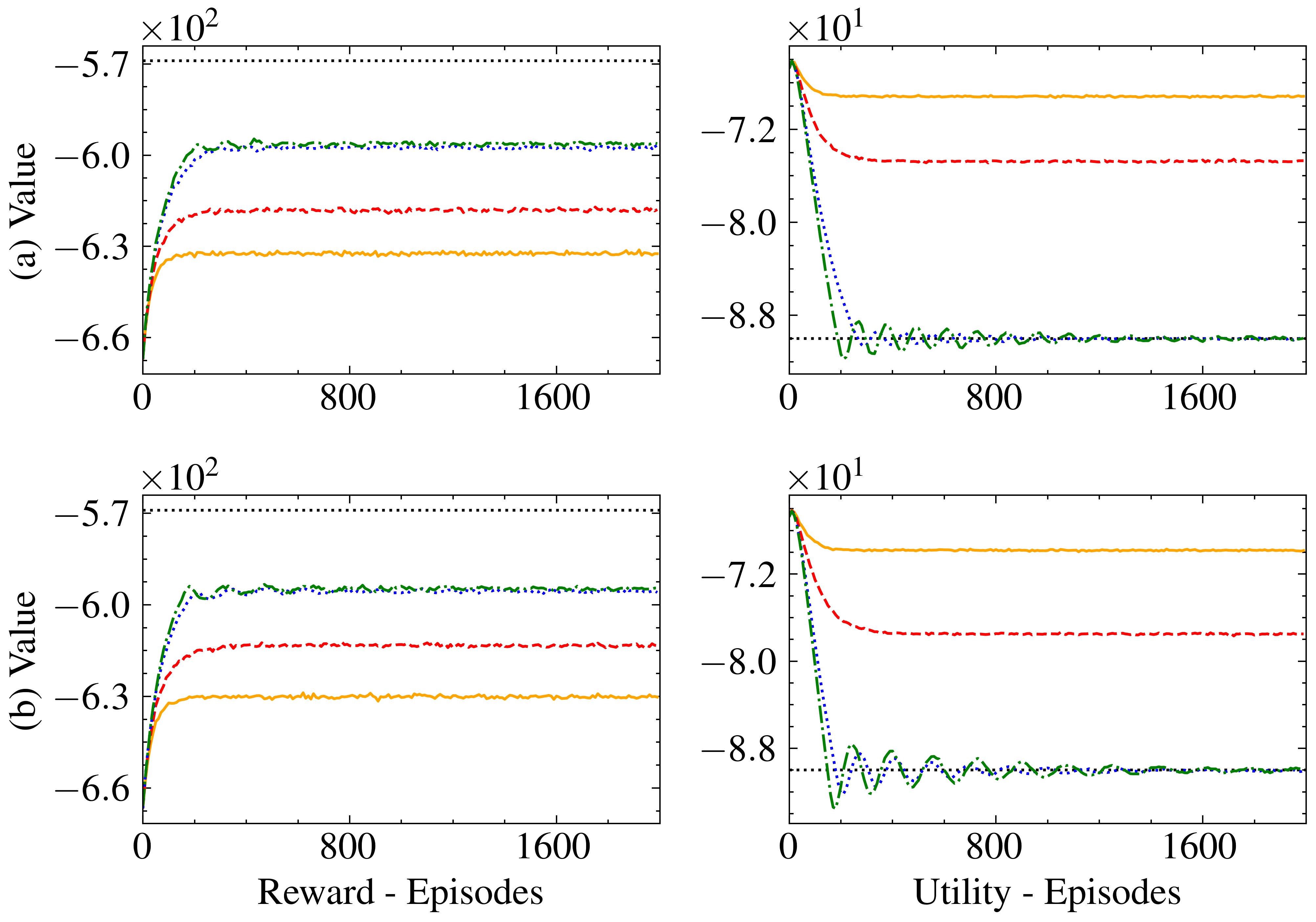}
    \caption{Reward value and utility value for (a) D-PGPD and (b) AD-PGPD in the navigation control problem for different values of the regularization parameter $\tau=1.0$ (\yellowline), $\tau=0.5$ (\reddashedline), $\tau=0.1$ (\bluedotted),  and $\eta=0.01$ (\greenmixed).}
    \label{fig::fig_6}
\end{figure}

\begin{figure}[t]
    \centering
    \includegraphics[width=10cm]{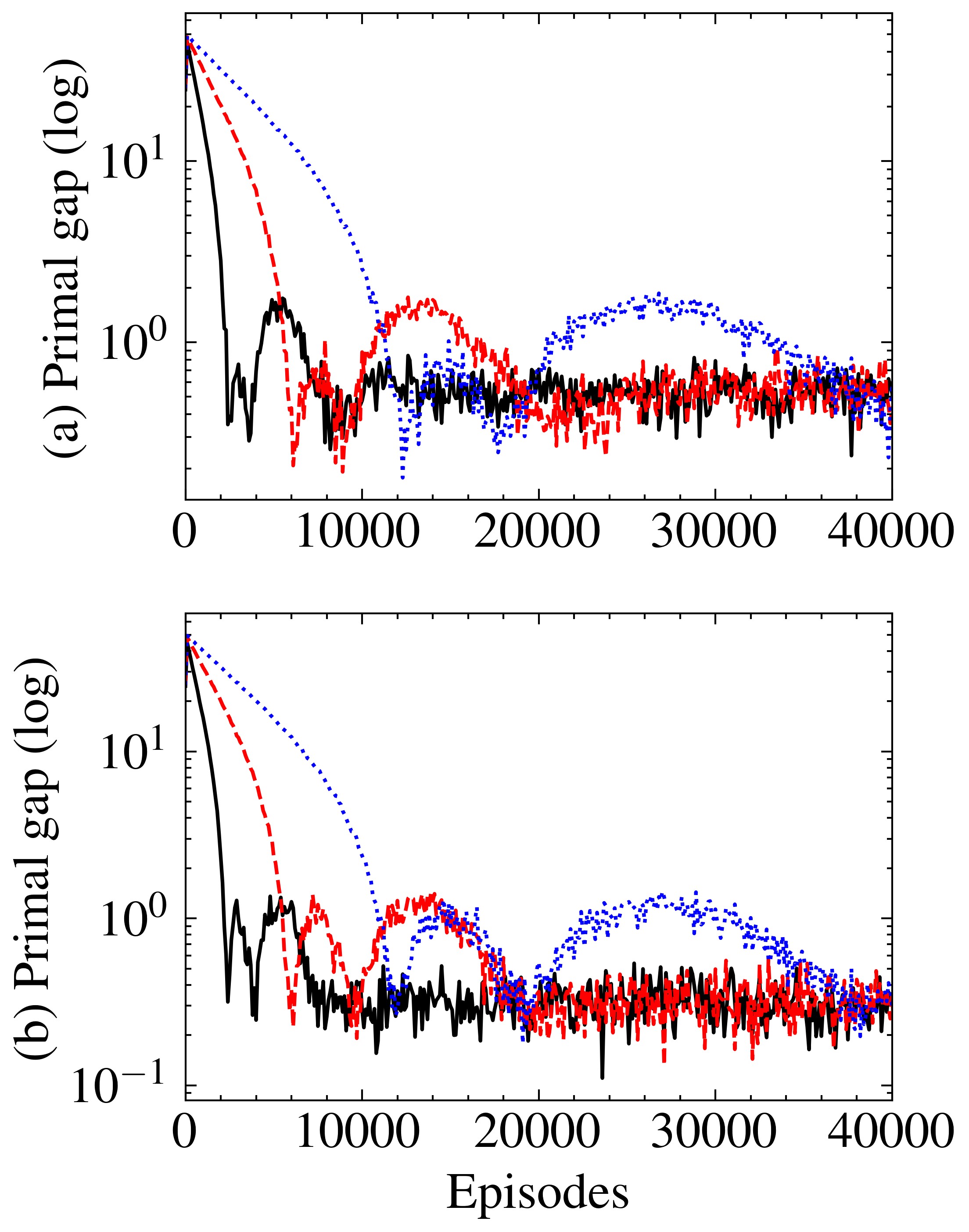}
    \caption{Reward value gap between optimal policy and the iterates of (a) D-PGPD and (b) AD-PGPD $\|V_r(\pi_\tau^\star) - V_r(\pi_t)\|$ in the navigation control problem for different values of the step-size: $\eta=0.0005$ (\blackline), $\eta=0.0002$ (\reddashedline) and $\eta=0.0001$ (\bluedotted).}
    \label{fig::fig_7}
\end{figure}

\noindent \textbf{Absolute-valued penalty.} D-PGPD allows tackling problems that are outside the scope of classical constrained quadratic problems with known dynamics. First of all, we consider the dynamics to be unknown, hence we need to approximate value functions via rollout averages. Therefore, we need to leverage the sample-based formulation of AD-DPPG to address the problem. Second, we consider the reward functions to in absolute value. In fact, absolute-valued penalties are preferred in sample-based scenarios since quadratic penalties can lead to unstable behaviors \cite{engel2014line}. We consider the same setup, but now the reward functions are defined as
\begin{align}
    \nonumber
    &r(s, a) \;=\; \| g_1 * s \|_1 + \| m_1  * a \|_1 
    \;\; \text{ and } \;\;
    u(s, a) \;=\; \| g_2 * s \|_1 + \| m_2  * a \|_1,
\end{align}
where $*$ denotes the Hadamard product, and
\begin{equation}
\nonumber
    g_1 \;=\; \begin{bmatrix}
        -1& \\
        -1& \\
        -0.001& \\
        -0.001&
    \end{bmatrix} 
    \;\; \text{ and } \;\;
    g_2 \;=\; \begin{bmatrix}
        -0.001& \\
        -0.001& \\
        -1& \\
        -1&
    \end{bmatrix},
\end{equation}
\noindent and
\begin{equation}
\nonumber
    m_1 \;=\; \begin{bmatrix}
        -0.01& \\
        -0.01&
    \end{bmatrix} 
    \;\; \text{ and } \;\;
    m_2 \;=\; \begin{bmatrix}
        -0.01& \\
        -0.01&
    \end{bmatrix}.
\end{equation}

The default values of the hyper-parameters of this experiment are $\tau=0.2$, $\eta=0.0001$ and b=$-1000$. Again, we have used linear function approximation with the basis function  $\phi(s,a) \DefinedAs [s^\Tr, a^\Tr]^\Tr \otimes  [s^\Tr, a^\Tr]^\Tr$. We have compared the sample-based AD-PGPD against PGDual, that implements a dual method with a linear model as a function-approximator \citep{zhao2021primal, brunke2022safe}. Fig. \ref{fig::fig_8} shows the average value functions of policy iterates generated by the sample-based version of AD-PGPD and PGDual over $40,000$ iterations, averaged across $50$ experiments. As observed in the previous experiment, the oscillations of AD-PGPD are damped over time, and it converges to a feasible solution. The low variance of the reward and utility value functions indicates that the policies generated by AD-PGPD exhibit near-deterministic behavior and do not violate the constraints. On the other hand, PGDual fails to dampen the oscillations, as evidenced by the large variance in the reward and utility value functions. This implies that PGDual outputs policies that violate the constraints in several episodes. PGDual requires more episodes to reach a solution, but its final performance in terms of primal return is similar to that of AD-PGPD.

\begin{figure}
    \centering
    \includegraphics[width=10cm]{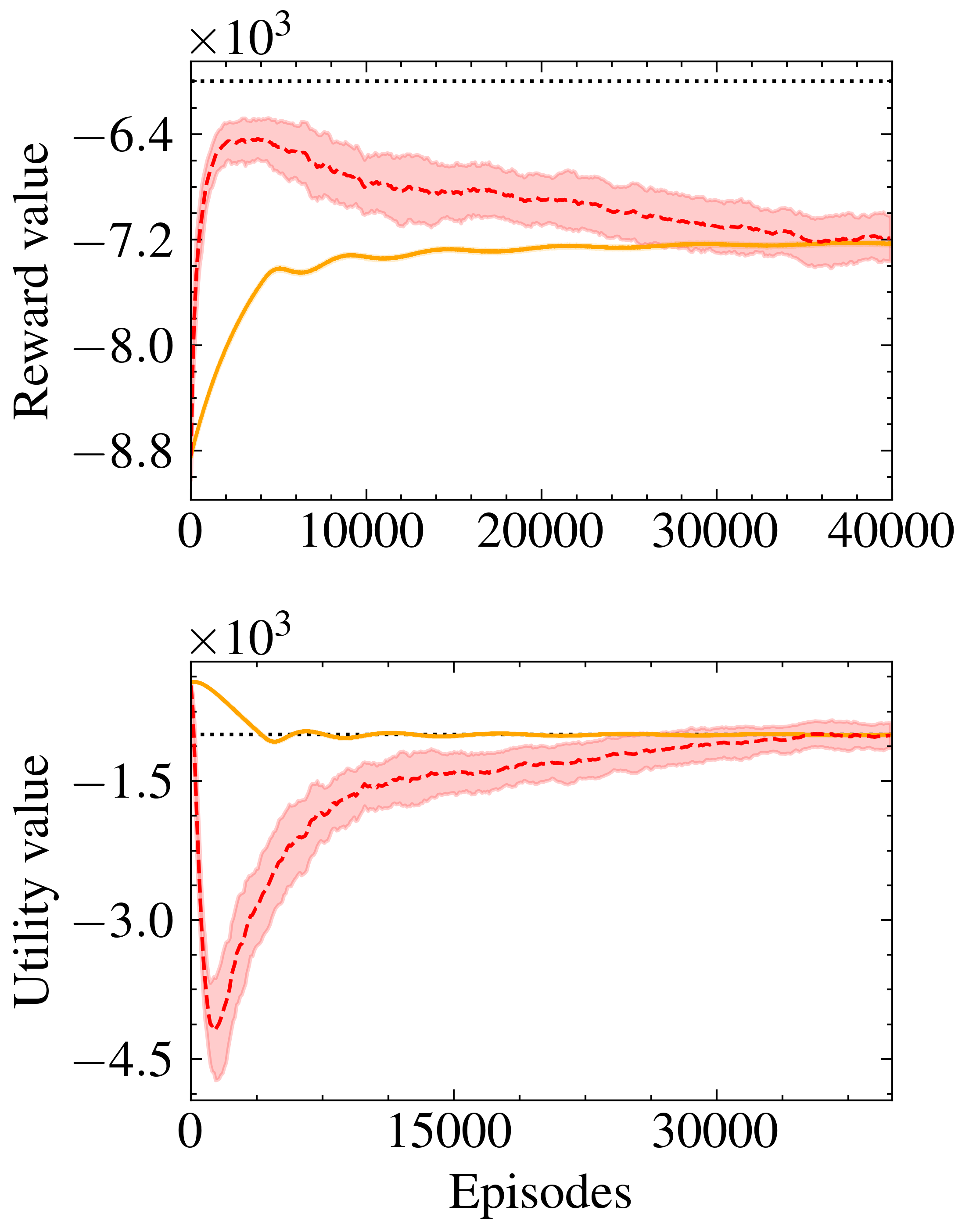}
    \caption{Average reward and utility value functions of policy iterates generated by AD-PGPD (\yellowline) and PGDual (\reddashedline ) in the navigation control problem with absolute-value rewards.}
    \label{fig::fig_8}
\end{figure}

\noindent \textbf{Zone control.} The sample-based AD-PGPD also allows us to deal with non-conventional reward functions. In this third setup, we solve a zone control problem. The agent has to achieve the reference state $s_r = [0, 0, 0, 0]$ while minimizing the control action. However, now the constraint imposes that the agent has to remain in the positive orthant. The reward and the utility functions are detailed as follows
\begin{align}
    \nonumber
    &r(s, a) \;=\; s^\Tr G s + a^\Tr R a \\
    \nonumber
    &u(s, a) \; = \; \begin{cases}
        0 \;\; &\text{if} \;\; p_x \geq 0 \;\; \text{and} \;\; p_y \geq 0 \\
        -100 \;\; &\text{otherwise},
    \end{cases}
\end{align}
\noindent where
\begin{align}
\nonumber
    &G \; = \; \begin{bmatrix}
        -1& \;  0& \; 0& \; 0& \\
        0& \;  -1& \; 0& \; 0& \\
        0& \;  0& \;  -0.1& \; 0& \\
        0& \;  0& \;  0& \; -0.1&
    \end{bmatrix} 
    \;\; \text{ and } \;\; 
    R \; = \; \begin{bmatrix}
        -0.1& \;\;  0& \\
        0& \;\;  -0.1& \\
    \end{bmatrix}.
\end{align}

For this experiment the default values are $\tau=0.01$, $\eta=0.00005$ and $b=-200$. We have employed the same basis function  $\phi(s,a) \DefinedAs [s^\Tr, a^\Tr]^\Tr \otimes  [s^\Tr, a^\Tr]^\Tr$. Again, we have compared the sample-based AD-PGPD against PGDual. Fig. \ref{fig::fig_9} illustrates the value functions of policy iterates generated by the sample-based versions of AD-PGPD and PGDual over $T=50,000$ iterations, averaged across $50$ experiments. The AD-PGPD algorithm effectively dampens oscillations, converging to a feasible solution with low variance in the returns, indicating near-deterministic behavior. Conversely, PGDual not only exhibits a slower convergence rate but also continues to oscillate, frequently violating constraints as evidenced by the high variance associated with its solutions. In terms of reward value, AD-PGPD outperforms PGDual, although the underperformance of the latter is partly attributed to insufficient convergence time.

\begin{figure}
    \centering
    \includegraphics[width=10cm]{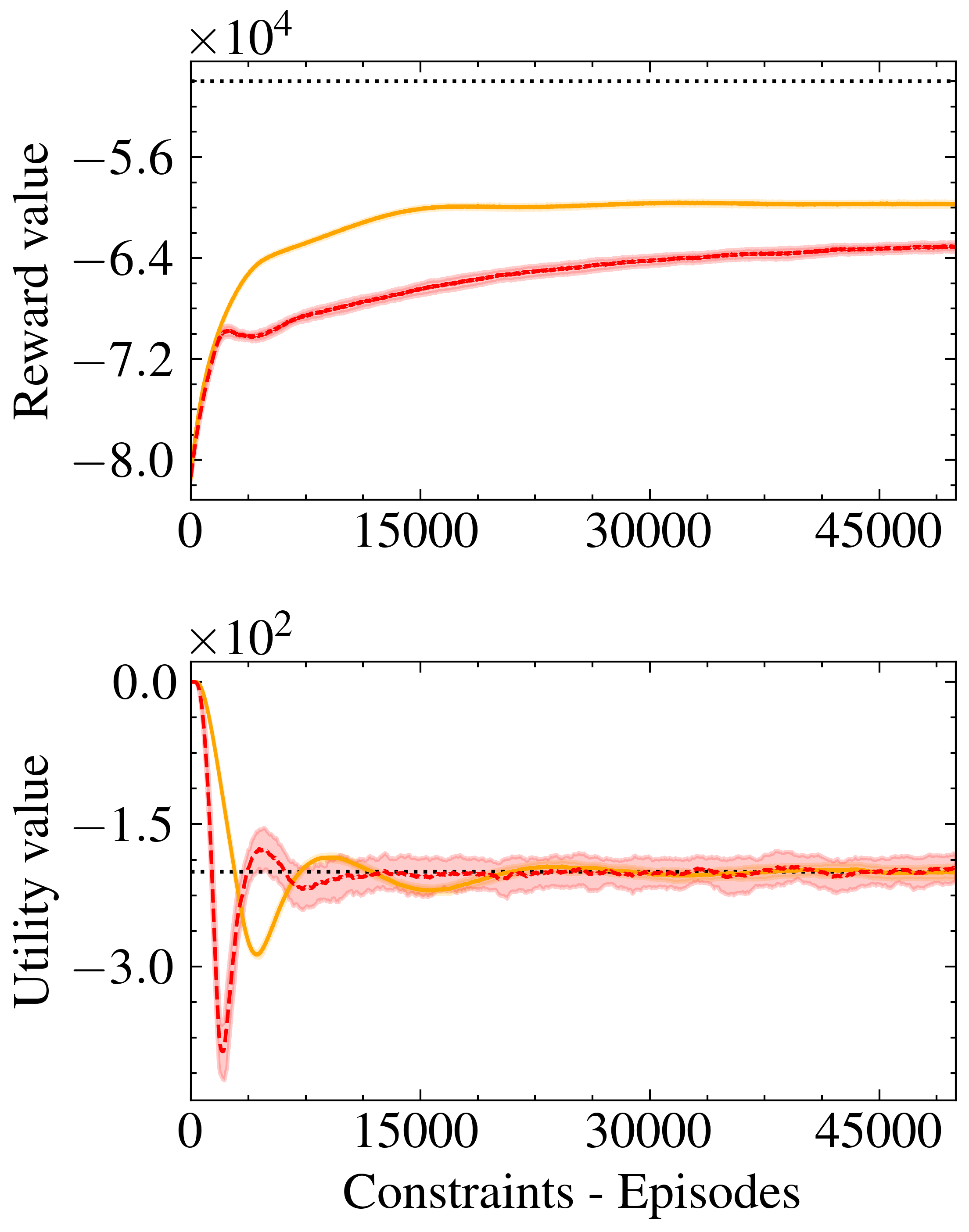}
    \caption{Average reward and utility value functions of policy iterates generated by AD-PGPD (\yellowline) and PGDual (\reddashedline )  in the navigation control problem with restricted zones.}
    \label{fig::fig_9}
\end{figure}

\begin{figure}[t]
    \centering
    \includegraphics[width=10cm]{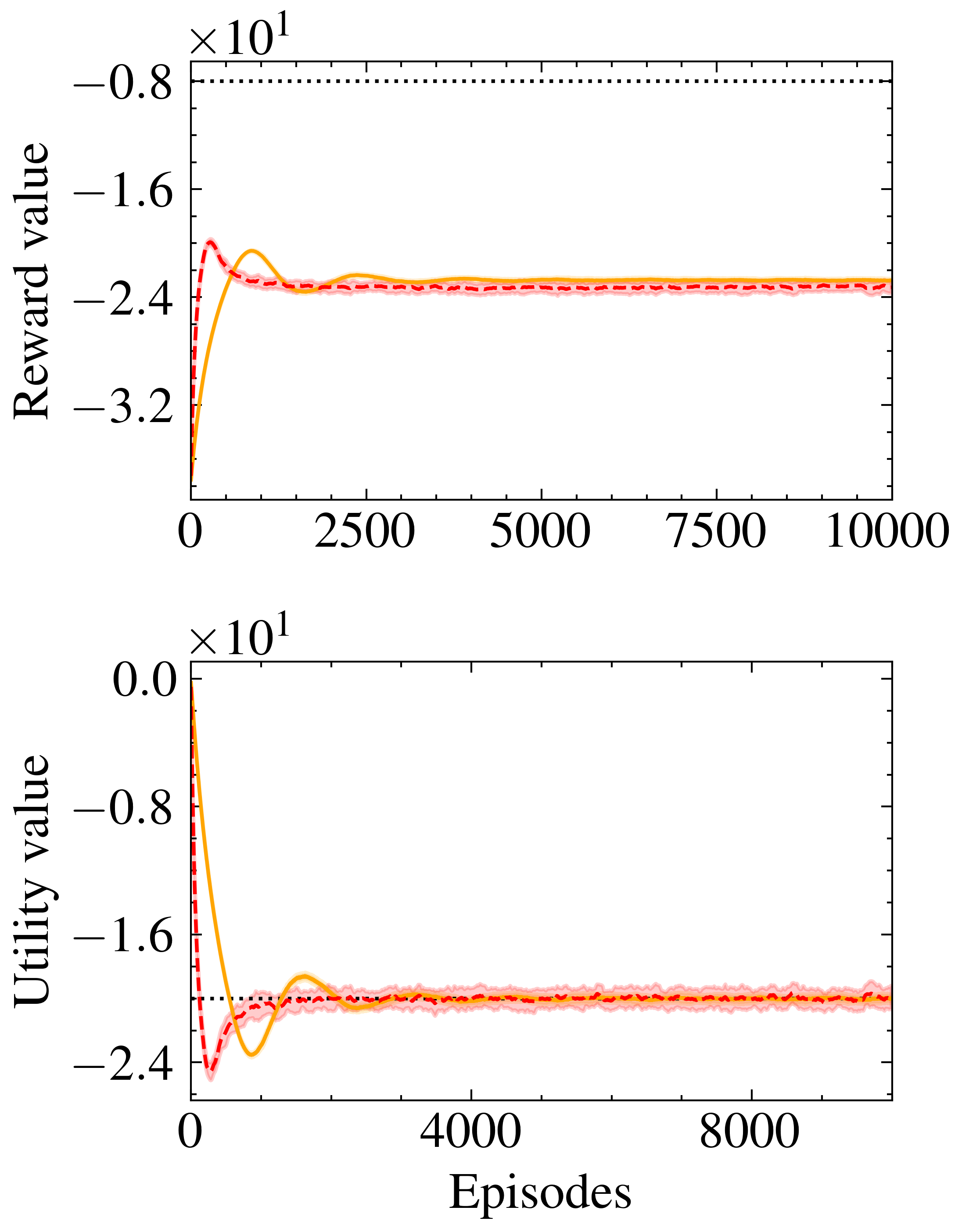}
    \caption{Average reward and utility value functions of policy iterates generated by AD-PGPD (\yellowline) and PGDual (\reddashedline )  in a fluid velocity control.}
    \label{fig::fig_10}
\end{figure}

\subsection{Non-linear Constrained Regulation}

\label{app::experiments_fluid}

We have assessed the performance of D-PGPD in a non-linear control problem, specifically the control of the velocity of an incompressible Newtonian fluid described by the one-dimensional Burgers' equation \cite{baker2000nonlinear}. The velocity profile \( s \) of the fluid varies in a one-dimensional space \( x \) bounded in the interval $[0, 1]$ and time \( t \), described by the equation
\begin{align}
    \nonumber
    \frac{\partial s(x, t)}{\partial t}  & \;=\; \varepsilon \frac{\partial^2 s(x, t)}{\partial x^2}  - \frac{1}{2} \frac{\partial (s(x, t)^2)}{\partial x}   + a(t) + \theta_{\text{max}}(t),
\end{align}

\noindent where $\varepsilon$ is a viscosity coefficient, \( \theta_{\text{max}}(t) \) is some Brownian noise, and  \( a \) is the bounded control action, such as the injection of polymers or mass transport through porous walls. The initial condition is given by \( s(\cdot, 0) \sim \rho \), where  $\rho$ is the initial state distribution.

We discretize the dynamics of the system using an Euler scheme \cite{borggaard2020quadratic}. This approach converts the continuous partial differential equation into a discrete form suitable for numerical computation. The spatial domain \([0, 1]\) is divided into \( d \) intervals, resulting in a grid with points \( x_i = \frac{i}{d} \) for \( i = 0, 1, \ldots, d \). The velocity profile \( s(x, t) \) is approximated at these grid points, resulting in a vector \( s_t \) at each time-step \( t \). Time is discretized into steps of size \( \Delta t \), with \( t = k \Delta t \) for \( k = 0, 1, 2, \ldots \) The time derivative is approximated using a forward Euler method,
\[
    \frac{\partial s(x, t)}{\partial t}  
    \; \approx \;
    \frac{s(x, t + \Delta t) - s(x, t)}{\Delta t}.
\]

The spatial derivatives \( \partial^2 s / \partial x^2 \) and \( \partial (s^2) \partial x \) are approximated using finite differences. The discretized form of Burgers' equation at each grid point \( x_i \) and time-step \( t \) is given by
\begin{align}
    \nonumber
    \frac{s_i^{t+1} - s_i^t}{\Delta t} & \; = \; \varepsilon \frac{s_{i+1}^t - 2s_i^t + s_{i-1}^t}{(\Delta x)^2}  - \frac{1}{2} \left( \frac{(s_{i+1}^t)^2 - (s_{i-1}^t)^2}{2\Delta x} \right) + a^t_i + \omega_i^t,
\end{align}
where $\omega_i^t$ is Gaussian noise, and $a_i^t$ is the control action at point $x_i$. Rearranging gives
\begin{align}
    \nonumber
    s_i^{t+1} & \; = \; s_i^t + \Delta t \left( \varepsilon \frac{s_{i+1}^t - 2s_i^t + s_{i-1}^t}{(\Delta x)^2}  - \frac{1}{2} \left( \frac{(s_{i+1}^t)^2 - (s_{i-1}^t)^2}{2\Delta x} \right) + a_i^t + \omega_t \right).
\end{align}

The discretized dynamics of the system can then be expressed in matrix form as the non-linear system
\[
s_{t+1} \; = \; B_0 s_t + B_1 a_t + B_2 s_t^2 + \omega_t,
\]
where $s_t^2$ is the element-wise squared state, the dimensionality of the state-action space is $d=d_s=d_a$, and  $B_0$, $B_1$, $B_2 \in \reals^{d \times d}$ are matrices representing the discretized spatial operators and non-linear terms, and \( a_t \in \reals^{n} \) is the control input vector. In the proposed scenario, we have selected a discrete grid of $d=10$, a time-step $\Delta t = 0.01$, and a viscosity coefficient $\varepsilon = 0.1$.

The goal is to drive the velocity of the fluid towards $0$, while minimizing the control action. Furthermore, we introduce a constraint to limit the expected control action that the agent can employ. To that end, consider the reward and utility functions
\begin{align}
    \nonumber
    r(s, a) & \;=\; - \|s\|^2 
    \;\; \text{ and } \;\;
    u(s, a)  \;=\; - \|a\|_1.
    \nonumber
\end{align}

The parameters of this experiment are $\tau=0.001$, $\eta=0.001$, and $b=-20$. We have compared the sample-based AD-PGPD against PGDual. The basis function  of the function approximation is $\phi(s,a) = [s^\Tr, a^\Tr]^\Tr \otimes  [s^\Tr, a^\Tr]^\Tr$. The value functions of policy iterates generated by the sample-based version of AD-PGPD and PGDual over $10,000$ iterations averaged across $50$ experiments can be seen in Fig. \ref{fig::fig_10}. The results are consistent with what we have observed in the navigation problem. The AD-PGPD algorithm successfully mitigates oscillations and converges to a feasible solution with low return variance. In contrast, although PGDual achieves similar objective value, it does not dampens oscillations, as indicated by the high variance in its solutions. 

\end{document}